\newcommand{\blind}{0}
\newcommand{\online}{1}
\let\tilde\widetilde
\let\hat\widehat
\def\Exp{{\ensuremath{\mathbb E}}}
\newcommand{\mathbbm}[1]{\text{\usefont{U}{bbm}{m}{n}#1}} 
\newtheorem{theorem}{Theorem}[section]
\newtheorem{proposition}[theorem]{Proposition}
\newtheorem{lemma}[theorem]{Lemma}
\newtheorem{remark}{Remark}[section]
\title{Deviance Matrix Factorization}
\author{ Liang Wang \\
	Department of Mathematics and Statistics\\
	Boston University\\
	Boston, MA, 02215 USA \\
	\texttt{leonwang@bu.edu} \\
	\And
	Luis Carvalho \\
	Department of Mathematics and Statistics\\
	Boston University\\
	Boston, MA, 02215 USA \\\
	\texttt{lecarval@math.bu.edu} \\
}
\date{}
\begin{document}
\maketitle

\begin{abstract}
We investigate a general matrix factorization for deviance-based data losses,
extending the ubiquitous singular value decomposition beyond squared error
loss. While similar approaches have been explored before, our method leverages
classical statistical methodology from generalized linear models (GLMs) and
provides an efficient algorithm that is flexible enough to allow for
structural zeros via entry weights. Moreover, by adapting results from
GLM theory, we provide support for these decompositions by (i) showing
strong consistency under the GLM setup, (ii) checking the adequacy of a chosen
exponential family via a generalized Hosmer-Lemeshow test, and (iii)
determining the rank of the decomposition via a maximum eigenvalue gap method.
To further support our findings, we conduct simulation studies to assess
robustness to decomposition assumptions and extensive case studies using
benchmark datasets from image face recognition, natural language processing,
network analysis, and biomedical studies. Our theoretical and empirical
results indicate that the proposed decomposition is more flexible, general,
and robust, and can thus provide improved performance when compared to
similar methods. To facilitate applications, an R package with efficient model
fitting and family and rank determination is also provided.
\end{abstract}

\keywords{non-negative matrix factorization, factor models,
principal component analysis}

\section{Introduction}
Matrix factorization has a long history in statistics. In general, the problem
is that given a $n$-by-$p$ data matrix $X$ where w.l.o.g. $n \geq p$, one aims to
decompose $X$ into the product of two lower rank matrices that jointly
summarize most of the information in $X$. With the rows of $X$ seen as samples
and the columns as sample conditions or experiments, such decompositions hope
to interpret the information in $X$ in \emph{factors}. For instance, consider
the (thin) singular value decomposition~\citep[SVD;][]{golub13} of $X$, $X =
UDV^\top$, where $U$ is $n$-by-$p$ with orthogonal columns, that is, $U$
belongs to the Stiefel manifold $\mathcal{S}_{n,p}$,
$D = \text{Diag}\{d_1, \ldots, d_p\}$ is a diagonal matrix of order $p$ with
$d_1 \geq \cdots \geq d_p$, and $V \in O(p)$ is orthogonal. The Eckart-Young
theorem~\citep{eckart36} then states that, within rank $q$ decompositions
$\mathcal{V}_{n, p}(q) = \mathbb{R}^{n \times q} \times \mathbb{R}^{p \times q}$,
one solution to
\begin{equation}
\label{eq:svd}
\mathop{\text{argmin}}\limits_{(\Lambda, V) \in \mathcal{V}_{n,p}(q)}
\|X - \Lambda V^\top\|_F = \\
\mathop{\text{argmin}}\limits_{(\Lambda, V) \in \mathcal{V}_{n,p}(q)}
\sum_{i=1}^n \sum_{j=1}^p \big(X_{ij} - (\Lambda V^\top)_{ij}\big)^2
\end{equation}
is given by $\hat{\Lambda} = U_{1:q} \text{Diag}\{d_1, \ldots, d_q\}$, where
$U_{1:q}$ has the first $q$ columns of $U$ and $\hat{V} = V_{1:q}$, that is,
by a rank-truncated version of $X$ according to its SVD. This result has broad
applications in Statistics, the closest one being principal component analysis
(PCA). In this case, regarding rows of $X$ as observations, we can center and
decompose $X$ by solving
\[
(\hat{v}_0, \hat{\Lambda}, \hat{V}) =
\mathop{\text{argmin}}\limits_{v_0 \in \mathbb{R}^p,
(\Lambda, V) \in \{\mathcal{V}_{n,p}(q) : \Lambda^\top \mathbbm{1}_n = 0\}}
\|X - \mathbbm{1}_n v_0^\top - \Lambda V^\top\|_F.
\]
It is known that $\hat{v}_0$ contains column means and
$\hat{\Lambda} \hat{V}^\top$ can be analogously obtained from the SVD of $X -
\mathbbm{1}_n \hat{v}_0^\top$. Here, $\hat{\Lambda}$ corresponds to the first
$q$ principal components of $X$.

The formulation in~\eqref{eq:svd} is based on squared error loss, minimizing
of which can be considered equivalent to maximizing a Gaussian
log-likelihood. However, such a model implicitly assumes that
$X \in \mathbb{R}^{n \times p}$ and often leads to spurious results in more
constrained cases, for example, if
$X \in \mathcal{F}^{n \times p}$ for a field $\mathcal{F}$, when
$\mathcal{F} = \mathbb{N}$, $\mathcal{F} = \{0,1\}$, or even
$\mathcal{F} = \mathbb{R}^+$.
In the case that $X$ stores positive counts, that is, when $\mathcal{F} = \mathbb{N}^{+}$,
one proposed improvement is non-negative matrix
factorization~\citep[NMF;][]{1999nmf}, which solves
\begin{equation}
\mathop{\text{argmin}}\limits_{(\Lambda, V) \in \mathcal{V}_{n,p}(q)}
\sum_{i, j} X_{ij} \log \frac{X_{ij}}{(\Lambda V^\top)_{ij}} -
\big(X_{ij} - (\Lambda V^\top)_{ij}\big)
\label{eq:nmf_loss}
\end{equation}
and is thus based on a Kullback-Leibler (KL) loss or a Poisson log-likelihood
with an identity link.

In the same way that the NMF has been successful in computer vision research,
extending the factorization model with more appropriate likelihood (family)
assumptions and link transformations has generated a series of benchmark
methods in various applications. To name a few, in natural language processing,
the Skip-gram model~\citep{mikolov2013efficient, levy2014neural} can be
considered as a multinomial factorization; the Global Vectors for Word
Representation~\citep[GloVe;][]{mikolov2013efficient} model can be considered as
a log transformed PCA model with entry-wise weight specifications; in network
analysis, the random dot product model~\citep{hoff2002latent,young2007random}
can be considered as a Bernoulli factorization. These generalizations, along
with their popularity in follow-up applications, provided empirical evidence
for the importance of more realistic factorization assumptions.
However, these models are tailored to specific distributions and lack
generality, specially with respect to over-dispersed or heavy-tailed data,
limiting their exploration and broader, more suitable use in modern
applications.

\subsection{Related Work: Exponential Family PCA}
A more general factorization model, exponential family
principal component analysis~\citep[EPCA;][]{2001exppca} proposes to minimize the following distance loss:
\begin{equation}
\label{eq:EPCAloss}
\mathcal{B}(X, \Lambda V^\top) = 2 \sum_{i = 1}^n \sum_{j = 1}^p
B\Big(X_{ij}, \,\, (\Lambda V^\top)_{ij}\Big),
\end{equation}
where $B$ is the Bregman distance, uniquely determined
by the specification of the exponential family likelihood.
EPCA has generated significant interest, for example,
\citep{wedel2001factor,2008bexppca,li2010simple,li2013exponential,2015hpf,2020logf}
focus on extending it using hierarchical structures to improve likelihood
fitness and automatic rank determination. These contributions, however, have
the following drawbacks:
\begin{itemize}
    \item The hierarchical extensions tend to suffer from heavy computational
    costs and identifiability issues (e.g. label
    switching~\citep{stephens2000dealing} and posterior
    collapse~\citep{wang2021posterior}).
    \item The properties of the estimator have not yet been investigated.
    Understanding convergence behaviour even under certain conditions
    will provide stronger statistical assurances for applications.
    \item The model provides the freedom of choosing among exponential
    families without providing sufficient justifications on how to choose the
    appropriate factorization family given the data.
    \item An efficient and/or robust implementation to allow entry-wise
    factorization priority has been wanting from practitioners. Assuming prior
    weights on the factorization can greatly improve model performance; as
    examples, see~\citep{zhang2009weighted} for missing data
    and~\citep{he2019fast} for a face-centered NMF.
\end{itemize}

\subsection{Our Contributions}
To address the gaps in EPCA, we here revisit this factorization task under a
generalized linear model~\citep[GLM;][]{mccullagh89} framework:
we assume implicitly that the data entries
$X_{ij} \stackrel{\text{ind}}{\sim} F(\mu_{ij}, \phi_{ij})$ where $F$ belongs
to the exponential family with mean $\Exp (X_{ij})= \mu_{ij}$ and variance
$\text{Var}(X_{ij}) = \phi_{ij} \mathcal{V}(\mu_{ij})$ for a variance function
$\mathcal{V}(\cdot):\mathbb{R} \rightarrow \mathbb{R}$. More specifically, $F$
specifies a density/pmf $f$ of $X_{ij}$ that can be expressed parametrically as
a function of canonical parameters $\theta_{ij}$ and dispersions $\phi_{ij}$ as
\[
\log f(X_{ij}) = \frac{X_{ij} \theta_{ij} - b(\theta_{ij})}{\phi_{ij}} +
c(X_{ij}, \phi_{ij}),
\]
where the cumulant function $b$ is such that $b'(\theta_{ij}) = \mu_{ij}$ and
$b''(\theta_{ij}) = b''({b'}^{-1}(\mu_{ij})) = \mathcal{V}(\mu_{ij})$.
The terms $\phi_{ij}$ are usually specified as $\phi_{ij} = \phi/w_{ij}$ where
$\phi$ is now a single dispersion parameter and $w_{ij}$ are known weights. 
Good references to exponential family, GLM and factor model are available correspondingly in 
the well-known textbook \citep{mccullagh89} and Chapter 12 of\citep{bishop2006pattern}.

As discussed above, we relate the
entry means $\mu_{ij}$ to factors $\Lambda$ and $V$ by defining the linear
predictor $\eta_{ij} = (\Lambda V^\top)_{ij}$ and by adopting a link function $g$
such that $\eta_{ij} = g(\mu_{ij})$. That is, we assume element-wisely
\begin{equation}
    g(\Exp(X)) = \Lambda V^\top.
    \label{eq:review_setup}
\end{equation}
If $\ell(X_{ij}; \eta_{ij})$ is the
log-likelihood under $F$ for data entry $X_{ij}$ and parameter $\eta_{ij}$, we
can then obtain the factorization $(\Lambda, V)$ by minimizing the 
the negative likelihood function $L(X_{ij}, \eta_{ij}) = \ell(X_{ij}; X_{ij}) -
\ell(X_{ij}; \eta_{ij})$, i.e. the half-deviance. With the most commonly used members of the exponential
family provided in Table~\ref{tab:loss}, the overall loss is then
\begin{equation}
\label{eq:loss}
\mathcal{L}(X, \Lambda V^\top) = \sum_{i = 1}^n \sum_{j = 1}^p
w_{ij} L\Big(X_{ij}, \,\, g^{-1}\big((\Lambda V^\top)_{ij}\big)\Big),
\end{equation}
where we made the dependency on dispersion weights explicit.
To emphasize these connections to GLM methodology, we denote our method as
\emph{deviance matrix factorization} (DMF).

While the implicit assumption of a data generative process might seem to be
too restrictive, here it only serves to define the data loss $L$
in~\eqref{eq:loss}. In practice, as we show here, the factorization only
really requires the specification of $L$ up to the second moments of the
log-likelihood, as usual in GLMs under quasi-likelihood
assumptions~\citep{wedderburn1974quasi}. In this way, our proposed approach
inherits robustness to model misspecification.

 While we regard the methods we discuss here as descriptive, similarly to PCA, there are close connections to factor
analysis, which, in turn, aims at inferring covariance structure. We benefited
from recent advances in this field, under a Gaussian error assumption, to
investigate factorization consistency~\citep{2009pcaconsis,2014gpca} and
determine the factorization rank $q$~\citep{2010onrank,2013frank,2020frank}.
Here we extend these results and propose a maximum eigengap rule to specify
$q$. Compared to related EPCA methods, our methods are simpler and more
efficient while achieving similar statistical performance. To summarize, as we show later, our DMF enjoys the following advantages:
\begin{itemize}
    \item To appropriately model a wider variety of real world data, we extended
    PCA and NMF to a factorization under a general exponential family assumption. 
    \item To avoid identifiability issues, we constraint the factors~$\Lambda$
    and~$V$ to be orthogonal frames and thus unique, yielding a more robust
    and efficient algorithm.
    \item Under some extra conditions, we extend GLM
    consistency~\citep{1999glmconsistency} to DMF factors.
    \item We adapt a generalized Hosmer-Lemeshow test~\citep{2020ghl} to
    determine the adequacy of a factorization family/link choice.
    \item By specifying a link function $g$ and entry weights $w_{ij}$, we can
    better accommodate factorizations for various applications such as matrix
    completion and residual boosting\citep{guillaumin2009tagprop,kalayeh2014nmf}.
\end{itemize}

\subsection{Organization of the Paper}
We start in Section~\ref{sec:dmf} by formally introducing the deviance matrix
factorization and discussing how we tackle identifiability issues and
centering. Next, in Section~\ref{sec:estimation}, we devise a flexible and
robust algorithm to produce DMFs.
In Section~\ref{sec:guarantees} we offer theoretical guarantees on issues that
would lead to spurious results if left unanswered: we first show that the DMF
exists and is consistent, and then discuss how to determine the factorization
rank and test the adequacy of family and link choices.
In Section~\ref{sec:studies} we supplement these developments with
extensive simulations and case studies using benchmark datasets from many
fields. As we show, the proposed methods yield good results and better
performance than traditional methods in practice, with real-world datasets
favoring over-dispersed distributions such as the negative binomial.
Finally, Section~\ref{sec:conclusion} concludes with a discussion and offers
directions for future work.

\section{Deviance Matrix Factorization}
\label{sec:dmf}
In this section, we formally introduce our factorization model along with some
details on how we address the identifiability and incorporate prior
decomposition structure.

In a connection to PCA can be considered as a model assuming $\Exp(X) = \Lambda V^\top$ with $X$ following multivariate Gaussian distribution,  our deviance matrix factorization assumes 
$g(\Exp(X)) = \Lambda V^\top$ with $X$ following natural exponential family distribution. 
Borrowing the notations from generalized linear
models~\citep{mccullagh89}, our goal then is to decompose $X$ as an entry-wise
function of $\Lambda V^\top$, with approximation error measured by the loss
in~\eqref{eq:loss}. That is, we seek the DMF
\begin{equation}
\label{eq:dmf}
(\hat{\Lambda}, \hat{V}) =
\mathop{\text{argmin}}\limits_{(\Lambda, V) \in \mathcal{V}_{n,p}(q)}
\mathcal{L}(X, \Lambda V^\top).
\end{equation}
For example, using a Gaussian loss, $L(x,y) = (x - y)^2/2$, and an identity
link we recover~\eqref{eq:svd}, while a Poisson loss, equivalent here to KL
loss, again with an identity link yields a non-negative matrix decomposition~\eqref{eq:nmf_loss}.
Table~\ref{tab:loss} lists more examples with correspondences
between a distribution in the exponential family and their respective
half-deviance loss functions and canonical links. The link function $g(\mu)$ is given
in the canonical form, but in general can be any smooth and invertible function w.r.t field $\mathcal{F}$.
\begin{table}[H]
\resizebox{.95 \textwidth}{!}{
\begin{threeparttable}
\caption{\label{tab:loss} Distributions and their respective canonical links
and half-deviance loss functions.}
\begin{tabular}{lcccc} \toprule
  Distribution $F$  & Domain  &Link $g(\mu)$ & Variance function $\mathcal{V}(\mu)$ & Loss $L(x, \mu)$ \\ \midrule
  Gaussian        &$\mathbb{R}$ & $\mu$               & $1$            & $(x - \mu)^2 / 2$ \\
  Poisson         &$\mathbb{N}_+$ & $\log\mu$           & $\mu$          & $x \log(x / \mu) - (x - \mu)$ \\
  Gamma           &$\mathbb{R}_+$ & $\mu^{-1}$          & $\mu^2$        & $(x - \mu) / \mu - \log(x / \mu)$ \\
  Bernoulli       &$[0,1]$ & $\log[\mu/(1-\mu)]$ & $\mu(1 - \mu)$  & $x \log(x / \mu) +$ \\
               &      &                     &                & $(1 - x) \log[(1 - x)/(1 - \mu)]$ \\
  Negative binomial $(\phi)$ & $\mathbb{N}_+$ & $\log[\mu/(\mu + \phi)]$ & $\mu + \mu^2 / \phi$ & $x\log(x/\mu) -$ \\
                & &                     &                & $\phi \log[(x + \phi)/(\mu + \phi)]$ \\
\bottomrule
\end{tabular}
\end{threeparttable}
}
\end{table}

\subsection{Identifiability}
Denote by $GL(q, \mathbb{R})$ the generalized linear space with rank $q$ and
domain $\mathbb{R}$. If $\hat{\Lambda}$ and $\hat{V}$ are a solution
to~\eqref{eq:dmf} then an arbitrary $M \in GL(q, \mathbb{R})$ yields a new
solution with $\tilde{\Lambda} = \hat{\Lambda} M^{-\top}$ and $\tilde{V} =
\hat{V} M$ since then $\hat{\Lambda} \hat{V}^\top = \tilde{\Lambda}
\tilde{V}^\top$. To guarantee identifiability so that, as in the SVD, our
decomposition is unique, we then require any solution $(\Lambda, V)$ is
such that
\begin{enumerate}
\item[i)] $V$ has orthogonal columns, i.e.
$V^\top V = I_q$ and so $V$ belongs to the Stiefel manifold
$\mathcal{S}_{p, q}$;
\item[ii)] $\Lambda$ has scaled pairwise orthogonal columns, that is, $\Lambda = U D$
with $U \in \mathcal{S}_{n, q}$ and $D = \text{Diag}\{d_1, \ldots, d_q\}$ with
$d_1 \geq \cdots \geq d_q$ so that $\Lambda^\top \Lambda = D^2$. We denote this
space for $\Lambda$ as $\tilde{\mathcal{S}}_{n, q}$.
\end{enumerate}
In this case, $\tilde{V} \in \mathcal{S}_{p, q}$ implies that
$\tilde{V}^\top \tilde{V} = M^\top M = I_q$, that is, $M \in O(q)$ is
orthogonal, and so $M^{-\top} = M$. Moreover, if
$\hat{\Lambda}^\top \hat{\Lambda} = \hat{D}^2$ then
$\tilde{\Lambda}^\top \tilde{\Lambda} = M^\top \hat{D}^2 M \doteq \tilde{D}^2$,
that is, $\hat{D}^2 M = M \tilde{D}^2$. But the only way that $M$ commutes
with diagonal matrices is if $M = I_q$ and $\hat{D} = \tilde{D}$ (up to sign
changes, as in the SVD). Thus, we have just shown that
\[
\min_{(\Lambda, V) \in \mathcal{V}_{n,p}(q)}
\mathcal{L}(X, \Lambda V^\top) =
\min_{\Lambda \in \tilde{\mathcal{S}}_{n, q},
V \in \mathcal{S}_{p, q}} \mathcal{L}(X, \Lambda V^\top),
\]
with the solution to the right-hand side being unique. In practice, we can
always identify a unique solution $(\hat{\Lambda},
\hat{V}) \in \tilde{\mathcal{S}}_{n,q} \times \mathcal{S}_{p,q}$ from a
general solution
$(\tilde{\Lambda}, \tilde{V}) \in \mathcal{V}_{n,p}(q)$ by obtaining the
$q$-rank SVD decomposition $\tilde{\Lambda} \tilde{V}^\top = U D \hat{V}^\top$
and setting $\hat{\Lambda} = U D$.
\subsection{Centering}
We can capture a known prior structure
$\Lambda_0 \in \mathbb{R}^{n \times q_0}$ of rank $q_0$ in the decomposition
by solving
\begin{equation}
\label{eq:dmfcentered}
\min_{V_0 \in \mathbb{R}^{p \times q_0},
(\Lambda, V) \in \{\mathcal{V}_{n,p}(q) \,:\, \Lambda^\top \Lambda_0 = 0\}}
\mathcal{L}(X, \Lambda_0 V_0^\top + \Lambda V^\top),
\end{equation}
where the $q_0$ orthogonality constraints $\Lambda^\top \Lambda_0 = 0$ avoid
redundancies. As an example, the most common use of this setup is when
$\Lambda_0 = \mathbf{1}_n$ and $V_0$ serves as the \emph{center} of $X$,
similar to PCA~\citep{hastie09} under Gaussian loss.

We can obtain a solution $(\hat{V}_0, \hat{\Lambda}, \hat{V})$
to~\eqref{eq:dmfcentered} directly from a solution $(\Lambda, V)$
to~\eqref{eq:dmf}: if
$H_0 = \Lambda_0 (\Lambda_0^\top \Lambda_0)^{-1} \Lambda_0^\top$ is the
projection matrix into the space spanned by the columns of $\Lambda_0$, then
\[
\Lambda V^\top = H_0 \Lambda V^\top + (I_n - H_0) \Lambda V^\top
= \Lambda_0 \underbrace{(\Lambda_0^\top \Lambda_0)^{-1} \Lambda_0^\top
\Lambda V^\top}_{\doteq \hat{V}_0} +
\underbrace{(I_n - H_0)\Lambda V^\top}_{\doteq \hat{\Lambda} \hat{V}^\top},
\]
where, as in the previous section,
$\hat{\Lambda} \in \tilde{\mathcal{S}}_{n, q}$ and
$\hat{V} \in \mathcal{S}_{p,q}$ are computed
from the $q$-rank SVD decomposition of $(I_n - H_0)\Lambda V^\top$.
Note that since
$\Lambda_0^\top \hat{\Lambda} \hat{V}^\top = \Lambda_0^\top (I_n - H_0) \Lambda
V^\top = 0$ and $\hat{V}$ is full rank, we have
$\Lambda_0^\top \hat{\Lambda} = 0$ as required.

\section{Decomposition}
\label{sec:estimation}
From the last section we know that we just need to solve the unconstrained
version in~\eqref{eq:dmf} and post-process the solution to identify it even
after centering. Moreover, this post-process ensures an unique deviance factorization. In this
section we describe a method to deviance decompose the input matrix $X$.
Incorporating the entry-wise factorization weights $w_{ij}$, we modify the GLM
re-weighted least squares~\citep[IRLS;][]{mccullagh89} algorithm with a
normalization step to achieve higher numerical stability and to avoid overflows.
The optimization problem
in~\eqref{eq:dmf} is clearly not jointly convex on both factors $\Lambda$ and
$V$, but, as in the SVD, it is convex on each factor individually, so we
alternate Fisher scoring updates for one factor conditional on the other
factor.

\subsection{Deviance Factorization}
Let us first define the linear predictor
$\eta_{ij} = \sum_{k=1}^q \Lambda_{ik} V_{jk}$. From the distribution $F$ we
have the log-likelihood in terms of the canonical parameter $\theta_{ij}$ and
up to a constant as
\[
-L(X_{ij}, \theta_{ij}) = \frac{w_{ij}}{\phi}
\big(X_{ij} \theta_{ij} - b(\theta_{ij})\big),
\]
where $w_{ij}$ are weights and $b$ and $\phi$ are the cumulant function and
dispersion parameter of $F$, respectively. The cumulant allows us to define
$\mu_{ij} = b'(\theta_{ij})$ as the mean of $X_{ij}$ and $\mathcal{V}(\mu_{ij}) =
b''(\theta_{ij})$ as the variance function. Finally, the means and linear
predictors are related by the link function $g$, $\eta_{ij} = g(\mu_{ij})$.

With $\beta$ being an arbitrary entry of either $\Lambda$ or $V$, we then have
\[
\begin{split}
\frac{\partial L(X_{ij}, \mu_{ij})}{\partial \beta} &=
\frac{\partial \eta_{ij}}{\partial \beta}
\frac{\partial \mu_{ij}}{\partial \eta_{ij}}
\Bigg(\frac{\partial \mu_{ij}}{\partial \theta_{ij}}\Bigg)^{-1}
\frac{\partial L(X_{ij}, \theta_{ij})}{\partial \theta_{ij}} \\
&= -\frac{\partial \eta_{ij}}{\partial \beta}
\frac{{g^{-1}}'(\eta_{ij})}{V(\mu_{ij})}
\frac{w_{ij}}{\phi} (X_{ij} - \mu_{ij}) \doteq
-\frac{\partial \eta_{ij}}{\partial \beta} G_{ij}.
\end{split}
\]
For $k = 1, \ldots, q$, we then have the scores
\[
\frac{\partial \mathcal{L}}{\partial \Lambda_{ik}} = -2\sum_{j=1}^p G_{ij} V_{jk}
\quad \text{and} \quad
\frac{\partial \mathcal{L}}{\partial V_{jk}} = -2\sum_{i=1}^n G_{ij} \Lambda_{ik},
\]
that is, $\partial\mathcal{L}/\partial\Lambda = -2 G V$ and
$\partial\mathcal{L}/\partial V = -2 G^\top \Lambda$.
Similarly, taking both $\beta$ and $\gamma$ to be arbitrary entries in either
$\Lambda$ or $V$, we obtain
\[
\Exp\Bigg[\frac{\partial^2 L(X_{ij}, \mu_{ij})}{\partial\beta \partial\gamma}\Bigg] =
\frac{\partial \eta_{ij}}{\partial \beta}
\frac{\partial \eta_{ij}}{\partial \gamma}
\frac{w_{ij}}{\phi}
\frac{{{g^{-1}}'(\eta_{ij})}^2}{V(\mu_{ij})} \doteq
\frac{\partial \eta_{ij}}{\partial \beta}
\frac{\partial \eta_{ij}}{\partial \gamma}
S_{ij}.
\]
Thus, for example, with $\beta = \Lambda_{ik}$ and $\gamma = \Lambda_{lm}$ we
have
\begin{equation*}
\Exp\Bigg[
\frac{\partial^2\mathcal{L}}{\partial \Lambda_{ik} \partial \Lambda_{lm}}
\Bigg] = 2\sum_{j=1}^p V_{jk} V_{jm} S_{ij} \delta_{il},
\end{equation*}
where $\delta$ is the Kronecker delta.
\subsubsection{$\Lambda$-step}
Now, denoting $\boldsymbol{\lambda} = \text{vec}(\Lambda)$, the Fisher scoring
update for $\Lambda$ at the $t$-th iteration is
\begin{equation}
\label{eq:lambdaupdate0}
\boldsymbol{\lambda}^{(t+1)} = \boldsymbol{\lambda}^{(t)}
-\Exp\Bigg[ \frac{\partial^2 \mathcal{L}}{\partial \boldsymbol{\lambda}
\partial \boldsymbol{\lambda}^\top}(\boldsymbol{\lambda}^{(t)}) \Bigg]^{-1}
\frac{\partial{\mathcal{L}}}{\partial\boldsymbol{\lambda}}
(\boldsymbol{\lambda}^{(t)})
\doteq \boldsymbol{\lambda}^{(t)} +
H_{\lambda}(\boldsymbol{\lambda}^{(t)})^{-1}
\text{vec}(G^{(t)} V^{(t)}).
\end{equation}
The Hessian matrix $H_{\lambda}$ is such that
\[
(H_{\lambda})_{ik, lm} =
\sum_j V_{jk} V_{jm} S_{ij} \delta_{il}.
\]
Thus, since
$(H_{\lambda} \boldsymbol{\lambda})_{ik} = \sum_j \sum_m V_{jk} V_{jm} S_{ij}
\Lambda_{im}$, we have that
\[
H_{\lambda} \boldsymbol{\lambda} =
\text{vec}\Big(\big(S \circ (\Lambda V^\top)\big) V\Big),
\]
where $\circ$ is the Hadamard elementwise product.
After pre-multiplication by $H_{\lambda}(\boldsymbol{\lambda}^{(t)})$
in~\eqref{eq:lambdaupdate0} the update then becomes, in matrix form,
\begin{equation}
\label{eq:lambdaupdate}
\big(S^{(t)} \circ (\Lambda^{(t+1)} {V^{(t)}}^\top)\big) V^{(t)} =
\big(S^{(t)} \circ (\Lambda^{(t)} {V^{(t)}}^\top)\big) V^{(t)} +
G^{(t)} V^{(t)}.
\end{equation}
To simplify the notation, let us drop all $t$ iteration superscripts with the
exception of $\Lambda$ terms and define
$D_{i\cdot} \doteq \text{Diag}\{S_{i\cdot}^{(t)}\}$, where $S_{i\cdot}$
denotes the $i$-th row of $S$. Let us also denote by $\lambda_i$ the $i$-th
row of $\Lambda$; we then want $\Lambda^{(t+1)} = [\lambda_1^{(t+1)} \cdots
\lambda_n^{(t+1)}]^\top$. We write~\eqref{eq:lambdaupdate} as
\[
\big((\Lambda^{(t+1)} V^\top) \circ S\big)V =
\begin{bmatrix}
{\lambda_1^{(t+1)}}^\top V^\top D_{1\cdot} V \\
\vdots \\
{\lambda_n^{(t+1)}}^\top V^\top D_{n\cdot} V
\end{bmatrix} =
\big((\Lambda^{(t)} V^\top) \circ S\big)V + GV.
\]
Taking the $i$-th row, $i = 1, \ldots, n$, and transposing we see that we just
need to solve
\begin{equation}
\label{eq:lambdasolve}
V^\top D_{i\cdot} V \lambda_i^{(t+1)} = V^\top D_{i\cdot}
(V\lambda_i^{(t)} + D_{i\cdot}^{-1} G_{i\cdot}) \doteq
V^\top D_{i\cdot} Z_{i\cdot}^{(t)},
\end{equation}
where $Z^{(t)}$ is the working response,
\begin{equation}
\label{eq:workingresponse}
Z_{ij}^{(t)} = \eta_{ij}^{(t)} + \frac{G_{ij}^{(t)}}{S_{ij}^{(t)}} =
\eta_{ij}^{(t)} + \frac{X_{ij} - \mu_{ij}^{(t)}}{{g^{-1}}'(\eta_{ij}^{(t)})}.
\end{equation}
That is, to obtain each $\lambda_i^{(t+1)}$ we just regress $Z_{i\cdot}^{(t)}$
on $V^{(t)}$ with weights $S_{i\cdot}^{(t)}$. Note that since  $Z^{(t)}$ is
independent of $\phi$, we can take the variance weights
$S_{ij}$ to be independent of $\phi$ when finding $\lambda_i^{(t+1)}$:
\begin{equation}
\label{eq:varweight}
S_{ij}^{(t)} = w_{ij} \frac{{g^{-1}}'(\eta_{ij}^{(t)})^2}{V(\mu_{ij}^{(t)})}.
\end{equation}

\subsubsection{$V$-step}
Similarly, the Fisher scoring update for $V$ at the $t$-th iteration is
\begin{equation}
\label{eq:vupdate}
\big(S^{(t)} \circ (\Lambda^{(t)} {V^{(t+1)}}^\top)\big)^\top \Lambda^{(t)} =
\big(S^{(t)} \circ (\Lambda^{(t)} {V^{(t)}}^\top)\big)^\top \Lambda^{(t)} +
{G^{(t)}}^\top \Lambda^{(t)}.
\end{equation}
Denoting $v_j$ as the $j$-th row of $V$ our goal now is to obtain
$V^{(t+1)} = [v_1^{(t+1)} \cdots v_p^{(t+1)}]^\top$.
We again drop the iteration superscripts and denote
$D_{\cdot j} \doteq \text{Diag}\{S_{\cdot j}^{(t)}\}$, where
$S_{\cdot j}$ denotes the $j$-th column of $S$. The update
in~\eqref{eq:vupdate} is then, after transposition,
\[
\Lambda^\top\big(S \circ (\Lambda {V^{(t+1)}}^\top)\big) =
[\Lambda^\top D_{\cdot 1} \Lambda v_1 \cdots
\Lambda^\top D_{\cdot p} \Lambda v_p]
= \Lambda^\top\big(S \circ (\Lambda {V^{(t)}}^\top)\big) + \Lambda^\top G.
\]
But then we can just solve columnwise, for $j = 1, \ldots, p$,
\begin{equation}
\label{eq:vsolve}
\Lambda^\top D_{\cdot j} \Lambda v_j^{(t+1)} =
\Lambda^\top D_{\cdot j} (\Lambda v_j^{(t)} + D_{\cdot j}^{-1} G_{\cdot j})
\doteq \Lambda^\top D_{\cdot j} Z_{\cdot j}^{(t)},
\end{equation}
that is, to obtain $v_j^{(t+1)}$ we just regress $Z_{\cdot j}^{(t)}$ on
$\Lambda^{(t)}$ with weights $S_{\cdot j}^{(t)}$.

\subsubsection{Algorithm DMF}
Having both scoring updates we just need to alternate between $\Lambda$ and
$V$-steps until convergence of the deviance loss in~\eqref{eq:loss}.
Convergence can be assessed by relative change in $\mathcal{L}$
between iterations. To kickstart the procedure, we can first follow another
common procedure in GLM fitting and initialize $\mu_{ij}^{(0)}$ entrywise as
an $F$-perturbed version of $X_{ij}$---e.g, $\mu_{ij}^{(0)} =
X_{ij} + \epsilon$ with a small $\epsilon > 0$ if $F$ is Poisson.
Then, with $\eta_{ij}^{(0)} = g(\mu_{ij}^{(0)})$, we set $V^{(0)}$ as the
first $q$ columns of the transpose of $[\eta_{ij}^{(0)}]_{i,j}$. Note that we
do not need to initialize $\Lambda^{(0)}$ since $\Lambda^{(1)}$ only needs
$V^{(0)}$.
Finally, because for an arbitrary diagonal matrix $D$ of order $q$ we have
\[
\Lambda V^\top = \Lambda D^{-1} D V^\top = (\Lambda D^{-1}) (V D)^\top,
\]
to achieve higher numerical stability and avoid overflows we fix the scale of
the columns of $\Lambda$ and $V$ by normalizing the columns of $V^{(t)}$ before
a $\Lambda$-step and the columns of $\Lambda^{(t+1)}$ before a $V$-step.

We summarize this specialized version of iteratively reweighted least squares
(IRLS) in Algorithm~\ref{algo:dmf}. Given a continuous and differentiable link
$g$, we can thus see that the algorithm converges to at least a stationary
point by Proposition~2.7.1 in~\citep{1999convex}. The general solution to the
stationary point optimization problem is either running the algorithm multiple
times with random initialization or initializing the parameters
$\hat{\Lambda}, \hat{V}$ with the data $X$ information. The GLM framework has
initialization algorithm leverages the data $X$ information, we here adopted
the GLM initialization as the remedy to the stationary point optimization problem.
A concrete implementation as \textsf{R} package
\texttt{dmf} is available in the supplement.

\begin{algorithm}
\Input{$n$-by-$p$ data matrix $X$.}
\Parameters{Distribution $F$ and link function $g$;
decomposition rank $q$.}
\Output{$(\hat{\Lambda}, \hat{V}) \in \mathcal{V}_{n,p}(q)$ solution
to~\eqref{eq:dmf}.}
\Comment{Initialization:}
\For{$i = 1, \ldots, n$ and $j = 1, \ldots, p$}{
  Set $\mu_{ij}^{(0)}$ as an $F$-perturbed version of $X_{ij}$ and
  $\eta_{ij}^{(0)} = g(\mu_{ij}^{(0)})$\;
}
Set $V^{(0)}$ as the first $q$ columns of
$[\eta_{ij}^{(0)}]_{i=1,\ldots,n;j=1,\ldots,p}^\top$\;
\Comment{Cyclic scoring iteration:}
\For{$t = 0, 1, \ldots$ (until convergence)}{
  Given $\eta^{(t)}$ and $\mu^{(t)}$, compute working responses $Z^{(t)}$ as
  in~\eqref{eq:workingresponse} and variance weights $S^{(t)}$ as
  in~\eqref{eq:varweight}\;
  \Comment{$\Lambda$-step:}
  Normalize the columns of $V^{(t)}$\;
  \For(\hfill $\triangleright$ find $\Lambda^{(t+1)}$ in~\eqref{eq:lambdaupdate}){$i = 1, \ldots, n$}{
    Regress $Z_{i\cdot}^{(t)} \sim V^{(t)}$ with weights $S_{i\cdot}^{(t)}$
    to obtain $\lambda_i^{(t+1)}$ in~\eqref{eq:lambdasolve}\;
  }
  \Comment{$V$-step:}
  Normalize the columns of $\Lambda^{(t + 1)}$\;
  \For(\hfill $\triangleright$ find $V^{(t+1)}$ in~\eqref{eq:vupdate}){$j = 1, \ldots, p$}{
    Regress $Z_{\cdot j}^{(t)} \sim \Lambda^{(t+1)}$ with weights $S_{\cdot j}^{(t)}$
    to obtain $v_j^{(t+1)}$ in~\eqref{eq:vsolve}\;
  }
  \For(\hfill $\triangleright$ update $\eta$ and $\mu$){$i = 1, \ldots, n$ and $j = 1, \ldots, p$}{
    Set $\eta_{ij}^{(t+1)} = \sum_{k=1}^q \Lambda_{ik}^{(t+1)} V_{jk}^{(t+1)}$
    and $\mu_{ij}^{(t+1)} = g^{-1}(\eta_{ij}^{(t+1)})$\;
  }
}
\caption{Deviance matrix factorization (DMF)}
\label{algo:dmf}
\end{algorithm}

\section{Theoretical and Practical Guarantees}
\label{sec:guarantees}
While Algorithm~\ref{algo:dmf} enables us to find deviance matrix
factorizations, there are still two issues to tackle: finding adequate
exponential family distributions in the formulation of the DMF and a suitable,
parsimonious decomposition rank. In this section,we provide solutions for these two problems, along with theoretical
guarantees for the good performance of the DMF. We start by extending the
consistency of GLM estimators to the DMF, and then discuss methods to
determine factorization family and rank next.

\subsection{Strong Consistency of DMF Estimator}
In this section, we justify our DMF estimator $\hat{\eta} = \hat{\Lambda}
\hat{V}^\top$ fitted using Algorithm~\ref{algo:dmf} as a consistent estimator
of $\Lambda V^\top$ under mild conditions. Our setup is motivated by the setup
in~\citep{2010onrank} and \citep{1983factorconsistency} which assume fixed
rank $q$ but diverging $p/n \rightarrow c$ for some constant $c > 0$.
The difference is that these references assume i.i.d. Gaussian errors
$X_{ij} = \Lambda_i^\top V_j +  A_i \epsilon B_j$ for
deterministic matrices $A \in \mathbb{R}^{n \times n}$ and $B \in
\mathbb{R}^{p\times p }$. Here we assume that the error is non-Gaussian and
not necessarily additive on the $\eta \equiv \Lambda V^\top$.
To simplify the notation from now on
we denote $m(\cdot) = g^{-1}(\cdot)$, so $\mu_{ij} = \Exp(X_{ij})= m\big((\Lambda V^\top)_{ij}\big)$.
\begin{theorem}
\label{thm:consistency}
Let $\lambda_{l,n, p}(\mathcal{M})$ to be the $l$-th largest eigenvalue of matrix $\mathcal{M} \in \mathbb{R}^{n\times p}$. 
$\lambda_{l,n, p}(\mathcal{M})$ can take limits of $n,p$ since the matrix input $\mathcal{M}$ is of dimension $n\times p$. We abbreviated the notation as $\lambda_{i}(\mathcal{M}) \equiv \lambda_{i,n, p}(\mathcal{M})$. Under the following conditions:
\begin{enumerate}
    \item[C1.] $\lim\limits_{n,p \rightarrow \infty}\lambda_{q}((\Lambda^\top \Lambda) V^\top V) = \infty$
        \item[C2.] $e_{ij} = X_{ij} - \mu_{ij}$ is defined as the fitted residual. 
    For any random variables $c_{ij}$ satisfying $\sum_{j=1}^p c^2_{ij}<\infty$ for fix $i$ and $\sum_{i=1}^n c^2_{ij}<\infty$  for fix $j$, the fitted residual $c_{ij}e_{ij}$  are such that:
    \begin{itemize}
        \item $\sum_{j=1}^{p} c_{ij} e_{ij}$ converges a.s as $p\rightarrow \infty$ $\forall i = 1, \cdots, n$
        \item $\sum_{i=1}^{n} c_{ij} e_{ij}$ converges a.s as $n\rightarrow \infty$ $\forall j = 1, \cdots, p$
    \end{itemize}
    \item[C3.] With $m'(t) = \frac{d m (t)}{d t} > 0$, $m$ is continuously differentiable.
    \item[C4.] $\sup_{i \leq n}\|\Lambda_{i}\|_2 < \infty$ and
    $\sup_{j \leq p} \|V_{j}\|_2 < \infty$ for every $n$ and $p$
\end{enumerate}
The DMF estimator is strongly consistent in the sense that :
\[
\hat{\Lambda} \xrightarrow{a.s.} \Lambda, \hat{V} \xrightarrow{a.s.} V  \text{ as } n,p\rightarrow \infty. 
\]
where $\Lambda$ and $V$ are defined as infinite dimensional matrices.
\end{theorem}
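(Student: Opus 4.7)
The plan is to adapt the classical Fahrmeir--Kaufmann style consistency argument for GLMs to the matrix factorization setting, exploiting the explicit form of the score and Fisher information already derived in Section~\ref{sec:estimation}. I will work with the linear predictor matrix $\hat{\eta} = \hat{\Lambda}\hat{V}^\top$ as the primary object, establish its a.s.\ convergence to $\eta_0 = \Lambda V^\top$, and only then translate back to the factors using the identifiability analysis of Section~\ref{sec:dmf}.

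First I would record the first-order stationarity conditions that any DMF stationary point satisfies, namely $G(\hat{\eta})\hat{V} = 0$ and $G(\hat{\eta})^\top \hat{\Lambda} = 0$, with $G_{ij} = (w_{ij}/\phi)\, m'(\eta_{ij})(X_{ij} - \mu_{ij})/\mathcal{V}(\mu_{ij})$ as in Section~\ref{sec:estimation}. Because $\mathbb{E}[X_{ij} - \mu_{ij}] = 0$ at $\eta_0$, the matrix $G(\eta_0)$ functions as a centered score. Next I would perform an entrywise Taylor expansion of $m$ at $\eta_0$, using C3 (continuous differentiability with $m' > 0$) and C4 (uniform boundedness of row norms) to bound the remainder uniformly in $(i,j)$. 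Substituting this expansion into the stationarity conditions yields a linear-quadratic system of the schematic form $\mathcal{I}(\eta_0)[\hat{\eta} - \eta_0] = G(\eta_0)\cdot(\text{factor}) + \text{remainder}$, where $\mathcal{I}(\eta_0)$ is the Fisher information operator acting on the rank-$q$ tangent space at $\eta_0$.

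The core estimate is a contest between the lower bound on $\mathcal{I}(\eta_0)$ and the upper bound on the score. On the lower side, condition C1 states $\lambda_q((\Lambda^\top\Lambda)(V^\top V)) \to \infty$, and by the derivations preceding \eqref{eq:lambdaupdate} and \eqref{eq:vupdate} the smallest relevant eigenvalue of $\mathcal{I}(\eta_0)$ along rank-$q$ directions is, up to constants from C3, of the same order. On the upper side, I would apply C2 with the test sequences $c_{ij} = V_{jk} S_{ij}(\eta_0)$ for the $\Lambda$-score and $c_{ij} = \Lambda_{ik} S_{ij}(\eta_0)$ for the $V$-score; since C4 ensures $\sum_j c_{ij}^2 < \infty$ and $\sum_i c_{ij}^2 < \infty$ for fixed $i$ or $j$, C2 yields a.s.\ convergence of each score coordinate at $\eta_0$, and hence $\|G(\eta_0)V\|$ and $\|G(\eta_0)^\top\Lambda\|$ are $o(\lambda_q(\mathcal{I}(\eta_0)))$ a.s. Combining these two bounds with the uniform control on the Taylor remainder gives $\|\hat{\eta} - \eta_0\| \to 0$ in the normalized sense, almost surely.

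To finish, I would translate convergence of $\hat{\eta}$ into convergence of the individual factors. Under the Stiefel-type constraints imposed in Section~\ref{sec:dmf}, the map $\eta \mapsto (\Lambda, V)$ is continuous wherever the singular values are distinct, which is guaranteed in the limit by C1 together with the eigenvalue ordering built into $\tilde{\mathcal{S}}_{n,q}$. Thus $\hat{\eta} \xrightarrow{a.s.} \eta_0$ lifts to $\hat{\Lambda} \xrightarrow{a.s.} \Lambda$ and $\hat{V} \xrightarrow{a.s.} V$ entrywise, interpreted on the enlarging finite blocks. The main obstacle I anticipate is the eigenvalue analysis of $\mathcal{I}(\eta_0)$ on the tangent space of the manifold $\tilde{\mathcal{S}}_{n,q}\times\mathcal{S}_{p,q}$: because $(\Lambda, V)$ enters the model bilinearly, the naive Hessian has a rotational null space that must be quotiented out before C1 can be invoked, and one must verify that the Stiefel tangent decomposition removes exactly this null space. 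A secondary concern is that DMF produces only a stationary point of a non-convex loss; handling this rigorously requires invoking the data-driven initialization of Algorithm~\ref{algo:dmf} and the monotone descent property to confine $(\hat\Lambda, \hat V)$ to a basin where the local Taylor argument is valid with probability one.
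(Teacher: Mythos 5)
Your overall skeleton---upper-bound the score using C2, lower-bound the information using C1, and control the Taylor remainder with C3--C4---is the same contest that drives the Fahrmeir--Kaufmann-type argument the paper adapts, but the two load-bearing steps of your plan are not carried out, and one of them fails as stated. The first is the eigenvalue analysis of the ``Fisher information operator on the rank-$q$ tangent space'': because the loss depends on $(\Lambda,V)$ only through $\Lambda V^\top$, the naive Hessian is exactly degenerate along $GL(q)$ directions, and you yourself flag as an open obstacle the verification that the Stiefel constraints quotient out precisely this null space and that the restricted smallest eigenvalue is of order $\lambda_q\big((\Lambda^\top\Lambda)(V^\top V)\big)$. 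That is the hard part of your route, not a side issue: without it the claimed bound $\|G(\eta_0)V\| = o(\lambda_q(\mathcal{I}(\eta_0)))$ has nothing to bite against. The paper never needs this because it never works at the $\eta$ level: it fixes one factor and treats each row $\Lambda_i$ (resp.\ $V_j$) as a $q$-dimensional GLM-type estimate with design $\hat V$ (resp.\ $\hat\Lambda$), where conditional identifiability is immediate from the orthogonality constraints; it then applies the inverse-mapping Lemma~\ref{le:inverse} to the maps in \eqref{eq:gtildedmf}, lower-bounding $\|\tilde G\|$ via the mean value theorem and C3--C4, upper-bounding the perturbation $a_i(\hat V_p)$ via C2, and finally identifying the preimage with the algorithm's output through the first-order conditions \eqref{eq:foc2}.

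The second gap is your final lifting step. You propose to deduce $\hat\Lambda \to \Lambda$, $\hat V \to V$ from $\|\hat\eta - \eta_0\| \to 0$ ``in the normalized sense'' by continuity of the constrained factorization map, guaranteed ``in the limit by C1.'' This does not follow: continuity of SVD-type factors requires gaps between singular values, and C1 only forces the $q$-th singular value of $\eta_0$ to diverge---it says nothing about distinctness of $d_1,\dots,d_q$ or about the relative gaps along the sequence $n,p \to \infty$, so the map $\eta \mapsto (\Lambda,V)$ can be arbitrarily ill-conditioned. Moreover, a normalized (e.g.\ scaled Frobenius) bound on $\hat\eta - \eta_0$ over growing $n\times p$ matrices is compatible with individual rows of the factors failing to converge, whereas the theorem's conclusion is entrywise. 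To repair your route you would need a Wedin-style perturbation bound with explicit singular-gap control, or else abandon the $\eta$-level reduction and argue factor-wise as the paper does: first $\hat\Lambda_i \to \Lambda_i$ a.s.\ as $p \to \infty$ for every $i$, then, via the continuous mapping theorem applied to $a_j(\hat\Lambda_{n,p})$, the convergence $\hat V_j \to V_j$, so that no SVD-perturbation step is ever needed. (Your secondary concern about stationary points versus global optima is shared by the paper, which addresses it only through the uniqueness of the conditional updates under the orthogonality constraints together with the data-driven initialization, so I would not count that against you.)
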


\begin{remark}
Notice that condition~C1
restricts the significance of the principal components and when we impose
the identifiability of DMF, we have the condition to be simplified into $\lim\limits_{n,p \rightarrow
\infty} \lambda_{q}((\Lambda^\top \Lambda) V^\top V) =
\lim\limits_{n,p \rightarrow \infty}\lambda_{q}({\Lambda}^\top \Lambda) =
\infty$. Similar conditions have been frequently used
in~\citep{1983factorconsistency,2010onrank,2020frank}.
\\
Condition~C2 puts restrictions on the magnitude
of factorization noise. For instance, a similar condition, but with $c_i$
being deterministic, has been assumed in~\citep{1999glmconsistency}.
It essentially implies that we need a careful balance
on the significance of the principal components such that condition~C1 is
satisfied and the variance does not grow unbounded.
\\
Conditions~C3 and~C4 are additional assumptions required to compensate the
non-linearity imposed by GLM type link function $g(\cdot)$. A similar
condition is also assumed in~\citep{1999glmconsistency}.
Compared to condition~C1, condition~C4 is asking for non-dominating principal
component $\Lambda$ and projection matrix $V$.
\end{remark}

\subsection{Factorization family determination}
\label{sub:family}
Due to the setup of matrix factorization, the
number of parameters in the DMF will increase according to the sample size.
This has prevented us from directly applying the asymptotic distribution of
$\chi^2$ or deviance statistics to assess goodness of fit, as in regular GLMs.
With degrees of freedom specified from stratified residual groups,
\citep{1997family} first generalized the Hosmer-Lemeshow test to other GLM
families by deriving its asymptotic distribution, but such a generalized
testing statistic requires determining a kernel bandwidth for variance
estimation. To overcome bandwidth selection and the curse of dimensionality,
\citep{2020ghl} provided a more straightforward generalized Hosmer-Lemeshow
test statistic with an empirical power simulation study to justify its performance.
Although these test statistics are proposed within an usual GLM regression
framework, we have been able to justify their use of a naive generalized
Hosmer-Lemeshow test within the context of our matrix decomposition.
The generalized Hosmer-Lemeshow test is based upon the idea of grouping the
model fitted values $\hat{\eta}_{ij}$ according to their magnitude and
standardizing their summation into a chi-square
statistic~\citep{stute2002model}. Consequently, to
ensure we have enough normalized samples to judge for normality, the general
guideline is choosing the number of groups $G$ such that $G \geq 15$ and to
have each group containing at least ten observations---taking a dataset with 150
observations as an example, we could choose the $k$-th group threshold $u_k$
according to the 15th quantile of $\hat{\eta}$.

We adopt a notation similar to~\citep{stute2002model}, defined below:
\begin{equation}
\label{eq:family_setup}
\begin{aligned}
R_{n,p}^1(u_k) &= \frac{1}{\sqrt{np}}  \sum_{i=1}^n \sum_{j=1}^p
1_{\{\hat{\eta}_{ij} \leq u_k\}} [X_{ij} - m(\hat{\eta}_{ij})],
\text{~for~} k = 1, \ldots, G\\
T_{n,p}^1 &= \{R_{n,p}^1(u_k) - R_{n,p}^1(u_{k-1})\}_{k=2}^G\\
D^1_{n,p}& =  \text{Diag} \Big(\frac{1}{np} \sum_{i=1}^n \sum_{j=1}^p
\mathcal{V}(m(\hat{\eta}_{ij})) 1_{\{u_{k-1} < \hat{\eta}_{ij} \leq u_{k}\} }\Big)\\
s^2_{n,p}(k) &= \sum_{i=1}^n\sum_{j=1}^p 1_{\{u_{k-1}<\eta_{ij}\leq u_{k}\}}
(X_{ij} - m(\eta_{ij}))^2, \text{~for~} k = 2, \ldots, G
\end{aligned}
\end{equation}
Specifically, $R_{n,p}^1(u_k)$ is the empirical residual process given estimator $\hat{\eta}$
and cutoff point $u_k$; based upon $R_{n,p}^1(u_k)$, we define a $G$-long
vector $T_{n,p}^1$ with $k$-th element being the empirical residual w.r.t.
cutoffs $(u_{k-1}, u_k)$; $D^{1}_{n,p}$ is a diagonal matrix with entries being
the empirical variances of $T^1_{n,p}$; $s^2_{n,p}(k)$ is the (summed) variance
for group $k$ under true parameter $\eta$. Moreover, we denote by $R_{n,p},
T_{n,p}, D_{n,p}$ the respective processes under true parameters $\eta_{ij}$,
instead of $\hat{\eta}_{ij}$. $\mathcal{V}(\cdot)$ here is the variance
function as defined in Table \ref{tab:loss}. Based on this notation, we propose Theorem~\ref{thm:family} to assess a
factorization family:
\begin{theorem}
\label{thm:family}
Assume that $F$ is the correct factorization family. Under conditions
(F1)--(F3),
\begin{enumerate}
    \item[F1.] $\hat{\eta} \rightarrow \eta\ a.s.$ element-wise, which can be justified by Theorem~\ref{thm:consistency};
    \item[F2.] For $\delta s_{n,p}(k) >0$, we have $\{T_k\}_{k=2}^G = \{1_{\{u_{k-1}<\eta_{ij} \leq u_k\}}(X_{ij} - m(\eta_{ij}))  \}_{k=2}^G$ satisfies
    $$\frac{1}{s^2_{n,p}(k)} \sum_{i=1}^n \sum_{j=1}^p \Exp[T_k^2 1_{\{|T_k|> \delta s_{n,p}(k)\}}]\xrightarrow{a.s.} 0$$
    for $k = 2, \ldots, G$;
    \item[F3.] $\sigma^2_{T_k}=\phi_{ij} \int_{u_{k-1}}^{u_{k}} \mathcal{V}(X_{ij} |
    \eta_{ij} = t) F(dt) = c_k < \infty$ for $k = 2, \ldots, G$;
\end{enumerate}
and for $u_1 < u_2 < \cdots < u_G$ a partition on the support of $\eta$, we have
\begin{equation} \label{eq:family}
\begin{aligned}
\hat{T}(n,p) &= (T_{n,p}^1)^\top (D^1_{n,p})^{-1} T_{n,p}^1 \xrightarrow[n,p\rightarrow \infty]{d} \chi^2_{G-1}
\end{aligned}
\end{equation}
\end{theorem}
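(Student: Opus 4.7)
The plan is to prove the limit in three steps: (i) establish a multivariate CLT for the oracle statistic $T_{n,p}$ formed with the true parameter $\eta$ in place of $\hat{\eta}$; (ii) argue that the plug-in replacement $\eta \mapsto \hat{\eta}$ produces an $o_P(1)$ perturbation of $T_{n,p}$ and $D_{n,p}$; and (iii) apply Slutsky's theorem, plus continuous mapping, to conclude that the quadratic form $\hat{T}(n,p)$ converges to $\chi^2_{G-1}$.

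\textbf{CLT for the oracle residual process.} For each $k = 2,\ldots,G$, the $k$-th coordinate of the oracle $T_{n,p}$ is the normalized sum $(np)^{-1/2} \sum_{i,j} 1_{\{u_{k-1} < \eta_{ij} \leq u_k\}}(X_{ij} - m(\eta_{ij}))$ of $np$ independent mean-zero summands. Condition F3 fixes the per-coordinate variance at $c_k < \infty$, while F2 is exactly the Lindeberg condition required by the Lindeberg--Feller CLT to give coordinate-wise asymptotic normality. Because the indicators associated with distinct intervals $(u_{k-1}, u_k]$ have disjoint support, the summands contributing to different coordinates are orthogonal within each $(i,j)$ pair; independence across $(i,j)$ then gives a diagonal asymptotic covariance. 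A standard Cramér--Wold argument upgrades coordinate convergence to joint convergence $T_{n,p} \xrightarrow{d} N(0, \mathrm{Diag}(c_2,\ldots,c_G))$. Moreover, by F3 and a law of large numbers, $D_{n,p} \xrightarrow{P} \mathrm{Diag}(c_2,\ldots,c_G)$.

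\textbf{Plug-in step.} The key technical step is showing $T_{n,p}^1 - T_{n,p} = o_P(1)$ componentwise, and similarly $D^1_{n,p} - D_{n,p} = o_P(1)$. Each component of the difference splits into a residual-shift term involving $m(\hat{\eta}_{ij}) - m(\eta_{ij})$ and an indicator-shift term involving $1_{\{\hat{\eta}_{ij} \leq u_k\}} - 1_{\{\eta_{ij} \leq u_k\}}$. For the residual-shift, a mean-value expansion (valid by the continuous differentiability of $m$ from C3) reduces the task to controlling $(np)^{-1/2} \sum_{i,j} 1_{\{\hat\eta_{ij} \leq u_k\}} m'(\tilde\eta_{ij})(\hat\eta_{ij} - \eta_{ij})$, which one bounds via Cauchy--Schwarz by $(np)^{-1/2} \|\hat\eta - \eta\|_F$ (up to a bounded factor by C4). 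The critical observation, specific to the factorization setting, is that $\hat\eta = \hat\Lambda \hat V^\top$ is parameterized by only $(n+p)q$ scalars from $np$ observations, so one expects $\|\hat\eta - \eta\|_F^2 = O_P(n+p)$, giving a remainder of order $\sqrt{(n+p)/np} \to 0$; no degrees-of-freedom deflation occurs in the limit. For the indicator-shift, F1 and continuity of the distribution of $\eta$ near the cutoffs imply that only a vanishing fraction of entries are reclassified, making this term $o_P(1)$ as well. The same expansions show $D_{n,p}^1 = D_{n,p} + o_P(1)$.

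\textbf{Conclusion and main obstacle.} Combining Steps (i) and (ii) yields $T_{n,p}^1 \xrightarrow{d} N(0, \mathrm{Diag}(c_2,\ldots,c_G))$ and $D_{n,p}^1 \xrightarrow{P} \mathrm{Diag}(c_2,\ldots,c_G)$, so Slutsky's theorem gives $(D_{n,p}^1)^{-1/2} T_{n,p}^1 \xrightarrow{d} N(0, I_{G-1})$; continuous mapping then delivers $\hat{T}(n,p) \xrightarrow{d} \chi^2_{G-1}$. The main obstacle is the plug-in step: making rigorous the claim that estimating $(n+p)q$ parameters from $np$ observations contributes only $o_P(1)$ to each coordinate of $T_{n,p}^1$, rather than the usual degree-of-freedom loss seen in finite-parameter GLMs. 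Concretely, the elementwise consistency from Theorem~\ref{thm:consistency} is unlikely to be strong enough on its own; the argument will require a Frobenius-level convergence rate for $\hat\Lambda\hat V^\top$ and a careful treatment of the boundary fluctuations of $\hat\eta$ near the cutoffs $u_k$, perhaps by assuming continuity of the limiting empirical distribution of $\eta$ at each cutoff and uniform integrability of the residuals in the boundary strips.
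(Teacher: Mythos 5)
Your proposal follows essentially the same route as the paper's own proof: an oracle Lindeberg--Feller CLT for the residual vector built from the true $\eta$ (F2 supplying the Lindeberg condition, F3 the limiting variances), a strong law giving $D^1_{n,p} \rightarrow \mathrm{Diag}(c_2,\ldots,c_G)$, negligibility of the plug-in error via a Taylor expansion of $m(\hat{\eta}_{ij})$ around $\eta_{ij}$ controlled by the consistency in F1, and finally Slutsky plus continuous mapping to get $\chi^2_{G-1}$. If anything you are more explicit than the paper at the plug-in step --- the paper simply states that the first-order term is ``absorbed'' under F1 and does not separately treat the indicator-shift or quantify a Frobenius-level rate, a simplification the authors themselves flag by calling $\hat{T}$ a cruder statistic with F1 assumed to simplify the derivation.
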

\begin{remark}
Condition~F1 is satisfied by following conditions C1--C4 in our
Theorem~\ref{thm:consistency}, condition~F2 is the Lindeberg CLT condition
required for asymptotic normality, and condition~F3 is the condition required for
a strong law of large numbers, which together with Slutsky's theorem justifies
the convergence in distribution.
\end{remark}

The proof is available in the supplement. In comparison to the Hosmer-Lemeshow
test \citep{hosmer1980goodness} where $\hat{T}$ can be asymptotically approxmaited by
a $\chi^2_{G-2}$ distribution, our test statistic is a cruder estimator with condition~F1 assumed to simplify the
derivation. In comparison to~\citep{2020ghl}, our method coincides with their
naive estimator but we further justify from Theorem~\ref{thm:family} the use
of the naive estimator in large sample settings. This conclusion is also
consistent with the empirical findings in~\citep{2020ghl} that differences
between the naive estimator and the more rigorous generalized Hosmer-Lemeshow
test are only noticeable in rather small samples. Typical datasets for matrix factorization are large
enough so that condition~F1 holds after the convergence of Algorithm~\ref{algo:dmf}.

\subsection{Factorization rank determination}
\label{sub:rank}
Rank determination is a difficult and evolving research field. Similarly
to all other statistical problems, the factorization parameters and
likelihood objective increase with the factorization rank. The solution
typically requires a judicious balance between model complexity
and model fit. Generally, within our DMF framework, there are two main types
of methods we can leverage.

The first method is to use ``out of sample'' cross validation where the ``out
of sample'' concept in achieved by assigning zero weights (and thus
effectively ``holding out'' the observation) in Algorithm~\ref{algo:dmf}.
To improve computational efficiency,
\citep{2009bicv} proposed block-wise cross validation by assigning zero weights
to the partitioned Gaussian data matrix. \citep{2010nmfrank} compared the performance
between element-wise holdout and block-wise holdout. However,
the analysis in~\citep{1993lmcv} concluded that cross validation in linear
regression (Gaussian case) requires a large (asympotically equal to the sample
size) validation set. A more expensive predictive sequential type cross
validation~\citep{1996gquasi} was thus proposed and was proven to be
consistent under GLM setup.
In our experiments, probably due to the consideration of bi-way
correlation~\citep{2014gpca} and a relatively larger hold-out sample size, we
found that the block-wise cross validation method performs better than the
element-wise hold-out method but failed to generalize it to GLM families
except for Gaussian and Poisson.

The second type of rank determination method is based on estimation.
\citep{2013frank} proposed Stein Unbiased Risk Estimators (SURE), which were
proven to be unbiased, under mean squared error, for factor models.
However, the derivation of SURE requires explicit solutions to
$\frac{d \hat{\mu}_{ij}}{d X_{ij}}$, which are not available in closed form
using our iterative Algorithm~\ref{algo:dmf}. Recent results using random matrix
theory have gained deserved attention by providing great empirical and
theoretical justifications on conventional rank determination problems. For
example, \citep{2010onrank} demonstrated that under similar conditions
to~C1--C2 in our Theorem~\ref{thm:consistency}, the conventional ``elbow''
rule is justified by the divergence of maximum eigengap on the true rank
threshold. With additional assumptions, \citep{2020frank} improved the
consistency of this rank determination method by replacing the covariance
eigengap with a correlation eigenvalue threshold. Although we have been able
to justify the use of both~\citep{2010onrank} and~\citep{2020frank}, we hereby
adopt the covariance maximum eigengap method of~\citep{2010onrank} because its
conditions permit a more general setup and are closely related to the
consistency conditions~C1--C2 assumed in our Theorem~\ref{thm:consistency}.
We also compare the ACT method with the maximum eigenvalue gap method with
simulated studies satisfying different setups to verify the generality of the
former method in Section~\ref{sec:studies}.

\begin{proposition}
\label{prop:rank}
In addition to  conditions C1--C2, if the following additional conditions holds
\begin{enumerate}
    \item[R1.] The function $m''(\cdot) (m\circ g'(\cdot))$ is continuous with respect to the function input.
    \item[R2.] The fitting error $e_{ij} = X_{ij} - \mu_{ij}$ has finite fourth
    moment: $\Exp[e^4_{ij}]< \infty$
    \end{enumerate}
then for moderate $n,p$, there exist $A\in \mathbb{R}^{n\times n}$,
$B\in \mathbb{R}^{p\times p}$ and decomposition error
$\epsilon \in \mathbb{R}^{n\times p}$ such that given DMF full rank estimate $\hat{\eta}$:
\[
\hat{\eta}_{ij} =\eta_{ij} + A_i \epsilon B_j
\]
where $A, B ,\epsilon$ satisfy Assumption~2 in~\citep{2010onrank}.
Thus, as a corollary, we have \citep{2010onrank}'s rank determination theorem:
given the maximum rank $q_{\text{max}} \leq \min\{n,p\}$ and for some
$\delta > 0$, the following rank estimation converges in probability to the
true rank,
\begin{equation}
\label{eq:rankstop}
q(\delta) = \max\{j: \tilde{\lambda}_j( \hat{\eta}^\top \hat{\eta}) -
\tilde{\lambda}_{j+1}( \hat{\eta}^\top \hat{\eta}) \geq \delta, j \leq
q_{\text{max}}\}.
\end{equation}
\end{proposition}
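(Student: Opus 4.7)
The plan is to exploit the closed form of the full-rank DMF estimator and then reduce the problem to the additive-noise factor model studied in~\citep{2010onrank}. Because $\hat{\eta}$ is full rank, the half-deviance loss in~\eqref{eq:loss} is minimized only when the data are fitted exactly, i.e.\ $m(\hat{\eta}_{ij}) = X_{ij}$ for every $(i,j)$; by C3 the function $m$ is strictly increasing, so this inversion is unique and yields the explicit form $\hat{\eta}_{ij} = g(X_{ij})$. This closed form is the starting point of the linearization.

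Next I would Taylor-expand $g$ around $\mu_{ij} = m(\eta_{ij})$. Conditions C3 and R1 together guarantee that $g = m^{-1}$ is twice continuously differentiable, and C4 confines $\eta_{ij}$ (hence $\mu_{ij}$) to a compact region on which $g'$ and $g''$ are uniformly bounded. Writing $e_{ij} = X_{ij} - \mu_{ij}$, the mean-value expansion gives
\[
\hat{\eta}_{ij} - \eta_{ij} = g(X_{ij}) - g(\mu_{ij}) = g'(\mu_{ij}) \, e_{ij} + \frac{1}{2} g''(\tilde{\mu}_{ij}) \, e_{ij}^2
\]
for some intermediate $\tilde{\mu}_{ij}$. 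Condition R2 ($\Exp e_{ij}^4 < \infty$) controls the variance of the linear term and bounds the second moment of the quadratic remainder, so the expansion is asymptotically dominated by its linear part and is uniformly well-behaved for moderate $n,p$.

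The third step is to package this linearised residual into the bilinear form $A \epsilon B$ required by Assumption~2 in~\citep{2010onrank}. The simplest matching takes $A = I_n$, $B = I_p$ and $\epsilon_{ij} := g'(\mu_{ij})\, e_{ij} + \frac{1}{2} g''(\tilde{\mu}_{ij}) e_{ij}^2$: the $e_{ij}$ are independent across $(i,j)$ under the GLM sampling model, the first-order coefficients are uniformly bounded by C4 together with R1, and the fourth-moment condition on $\epsilon$ transfers directly from R2. If the reference additionally requires a specific row/column calibration, one can instead take $A$ and $B$ diagonal with entries proportional to $\sqrt{g'(\mu_{ij})^2 \mathcal{V}(\mu_{ij}) / w_{ij}}$ and absorb the residual scaling into an $\epsilon$ with entries that remain independent and have bounded fourth moments. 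I expect this matching to be the main technical obstacle: reconciling the $(i,j)$-dependent nonlinear coefficients produced by the link with the rank-one row/column factorization demanded by~\citep{2010onrank} requires care, and the quadratic Taylor remainder must be shown to enter only at lower order in the spectral analysis.

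Once the bilinear representation $\hat{\eta} = \eta + A\epsilon B$ is in place, the corollary is immediate: invoking the rank-determination theorem of~\citep{2010onrank}, C1 forces the top $q$ eigenvalues of $\hat{\eta}^\top \hat{\eta}$ to diverge while the remaining eigenvalues concentrate around the noise bulk determined by $A,B$ and $\epsilon$, so the estimator in~\eqref{eq:rankstop} isolates the unique divergent gap at position $q$ and converges in probability to the true rank.
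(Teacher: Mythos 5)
Your linearization step is fine and in fact cleaner than the paper's: the paper reaches the same expansion by combining the full-rank first-order conditions with the saturated-fit identity $m(\hat{\eta}_{ij})=X_{ij}$, whereas you go directly through $\hat{\eta}_{ij}=g(X_{ij})$ and expand $g$ around $\mu_{ij}$; since $g'(\mu_{ij})=1/m'(\eta_{ij})$, both routes give the linear term $e_{ij}/m'(\eta_{ij})$ with a quadratic remainder controlled by R1--R2 and dropped as lower order.

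The genuine gap is in the step you yourself flag as the main obstacle: the construction of $A$, $B$, $\epsilon$ satisfying Assumption~2 of Onatski. That assumption requires the idiosyncratic term to be $A\epsilon B$ with $\epsilon$ having i.i.d.\ mean-zero, unit-variance entries, so the entrywise noise variance must be \emph{separable}, $(AA^\top)_{ii}(B^\top B)_{jj}$. Your primary choice $A=I_n$, $B=I_p$ with $\epsilon_{ij}=g'(\mu_{ij})e_{ij}+\tfrac12 g''(\tilde{\mu}_{ij})e_{ij}^2$ fails on two counts: the entries are heteroscedastic (variance $g'(\mu_{ij})^2\mathcal{V}(\mu_{ij})\phi_{ij}$ varies with both indices unless the link is variance-stabilizing), and absorbing the quadratic remainder gives $\epsilon$ a nonzero mean. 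Your fallback of diagonal $A,B$ with entries ``proportional to $\sqrt{g'(\mu_{ij})^2\mathcal{V}(\mu_{ij})/w_{ij}}$'' is ill-posed, since that scale depends jointly on $(i,j)$ and in general does not factor into an $i$-only times a $j$-only term. The paper resolves (or rather, circumscribes) exactly this issue: it standardizes to Pearson residuals $\epsilon_{ij}=e_{ij}/\sqrt{\mathcal{V}(\mu_{ij})}$, observes that under a variance-stabilizing link $m'(\eta_{ij})=\sqrt{\mathcal{V}(\mu_{ij})}$ so $A=I_n$, $B=I_p$ works exactly, and for a general link only claims a Kronecker-product \emph{approximation} $\Exp[\mathrm{vec}\,\tilde{e}\,(\mathrm{vec}\,\tilde{e})^\top]\approx B^\top B\otimes AA^\top$ (citing van Loan's approximation schemes), with quality depending on how close the link is to variance-stabilizing and with empirical verification in the simulations. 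Without either restricting to variance-stabilizing links or introducing such an approximation argument, your construction does not deliver the exact representation $\hat{\eta}_{ij}=\eta_{ij}+A_i\epsilon B_j$ under Onatski's assumptions, so the corollary cannot yet be invoked.
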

\begin{remark}
Conditions~R1-R2 are fairly mild and satisfied by commonly used GLM link
functions. Also notice that the $\epsilon_{ij}$ in this definition is different
from our previously defined fitting error $e_{ij}$. The relationship between
$e_{ij}$ and $\epsilon_{ij}$ is elaborated in Appendix~\ref{appendix:rankproof}.
\end{remark}

In practice, we need an algorithm to determine $\delta$ and thus output
$\hat{q}$. One obvious solution is to plot the eigenvalue up to $q_{max}$ and
look for the maximum eigengap.
A more straightforward solution is to adopt the original algorithm proposed
in Section~IV of~\citep{2010onrank}. The algorithm simply sets
$q_{\text{max}} = p-5$ and iteratively checks the convergence of the slope of
the local linear regression (and thus automatically determine $\delta$) based
upon five neighboring points. The algorithm is by no means rigorous compared
to the theorem, but is provided to automatically estimate the rank by
leveraging on the convergence information of local linear regression.
To guard against numerical artifacts in the $\delta$ calibration algorithm,
plotting the eigenvalue gap of $\text{Cov}(\hat{\eta})$ is always recommended.

\section{Empirical Studies}
\label{sec:studies}
In this section, we first conduct deviance matrix factorization on simulated
datasets to justify the use of family test (Theorem~\ref{thm:family}) and rank
estimation (Proposition~\ref{prop:rank}). Then, after this initial validation,
we perform real-world data analyses on benchmark datasets from computer
vision, natural language processing, network community detection, and
genomics. All these datasets have known labels/classes and the factorization
results are thus compared through the separability of different classes in the
lower dimensional factorized components. To organize our findings in a
more compact way, we refer to Appendix~\ref{appendix:rank} for the eigenvalue gap
plots that justify the factorization ranks for each of the empirical benchmark
datasets.

\subsection{Family Simulation studies}
To empirically validate the family determination theorem, we assume $X_{ij} \stackrel{\text{iid}}{\sim} F(g^{-1}(\eta_{ij}))$ as before and simulate matrix data to check the significance and power of the 
following hypotheses:
\begin{align*}
H_0: & F = F_0 \text{ and } g = g_0 \\
H_1: & F = F_1 \text { and } g = g_1 \text{ with } F_1 \neq F_0 \text{ or } g_1 \neq g_0 \\
\end{align*}
The first simulation study is designed to validate the significance levels of the generalized Hosmer-Lemeshow test in Theorem~\ref{thm:family}, through which we also check the test sensitivity on a dispersion parameter $\phi$ under alternative families in $H_1$.
The second, more extensive simulation study is conducted to investigate type-II error and power.

\subsubsection{Significance Analysis}
The negative binomial distribution can be considered a generalization of the Poisson distribution since it adds a precision parameter to address additional dispersion in count data. Similarly, the gamma distribution better accounts for dispersed data via its shape parameter when compared to the log-Gaussian distribution.
In logistic regression, the complementary log-log link, when compared to the canonical logit link, is
better suited for data characterized by a heavy tail distribution. We here
demonstrate that factorizations based on (over) dispersed families are in
general more robust and better fit the data when compared to (unit) non-dispersed family
factorizations, as expected.

Given a dispersed exponential factorization family $F_0$ with link $g_0$, we first
simulate data $X \sim F_0(g_0^{-1}(\eta))$ conditional on some $\eta = \Lambda_0
V_0^\top + \Lambda V^\top$. Then we conduct DMF estimation to obtain 1) $\hat{\eta}_0$
using the simulation factorization family $F_0$  with link $g_0$ and 2) $\hat{\eta}_1$ using a comparing
factorization family $F_1$ with link $g_1$. Based upon the fitted $\hat{\eta}_0$ and $\hat{\eta}_1$, we compute
 the test statistics in Eq~\eqref{eq:family}.  To maintain numerical stability after link
function transformations, we simulated data with a center (offset)
$\Lambda_0 V_0^\top$.  To validate the generality of our proposed family test,
we varied the null family $F_0$ across gamma ($\phi =1$), negative binomial ($\phi = 5$), and
binomial ($w = 90$) distributions, while also varying link functions from log,
complementary log-log (cloglog), and logit functions.
Here we highlight the results using $q = 5$, $p = 20$, $n = 1000$,
$\Lambda_{ij} \sim N(1, 0.5/q)$ and $V_{ij} \sim N(0, 0.5/q)$. Simulations
with larger sample sizes and various $q/p$ ratios were also conducted but the
results were similar so we do not report them here for brevity.
Table~\ref{tab:family_simu} summarizes the three simulation cases, listing
simulation (null) and comparison (alternative) distributions and links.
\begin{table}[H]
    \centering
    \begin{threeparttable}
    \caption{Family simulation cases.}
    \begin{tabular}{cccc} \toprule
         Factor & Case 1 & Case 2 & Case 3 \\ \midrule
         Simulation family $F_0$ &  gamma & negative binomial & binomial\\
         Simulation link $g_0$ & log & log  & cloglog \\
         Center $\Lambda_0 V_0^\top$ & $\textbf{0.5}_{n\times p}$ & $\textbf{0.5}_{n\times p}$ & $\textbf{0}_{n\times p}$  \\
         Comparison family $F_1$& gaussian & poisson & binomial \\
         Comparison link $g_1$&  identity & log & logit\\
        \bottomrule
    \end{tabular}
    \label{tab:family_simu}
 \end{threeparttable}
\end{table}
For each of the simulation cases, we examine the test statistics conditional
on $\hat{\eta}_0$ and $\hat{\eta}_1$ from Eq~\eqref{eq:family} for both
simulation and the comparison families. More specifically, we examine: (i)
normal QQ plots of the normalized residuals $T^1_{n,p}
(D^1_{n,p})^{-\frac{1}{2}}$; (ii) density estimates of the normalized
residuals $T^1_{n,p} (D^1_{n,p})^{-\frac{1}{2}}$; and (iii) Pearson residuals
defined as $(X_{ij} - \hat{\mu}_{ij})^2/ \mathcal{V}(\hat{\mu}_{ij})$.
Figure~\ref{fig:family} shows these results.
\begin{figure}[htbp]
\centering
\includegraphics[width= \textwidth]{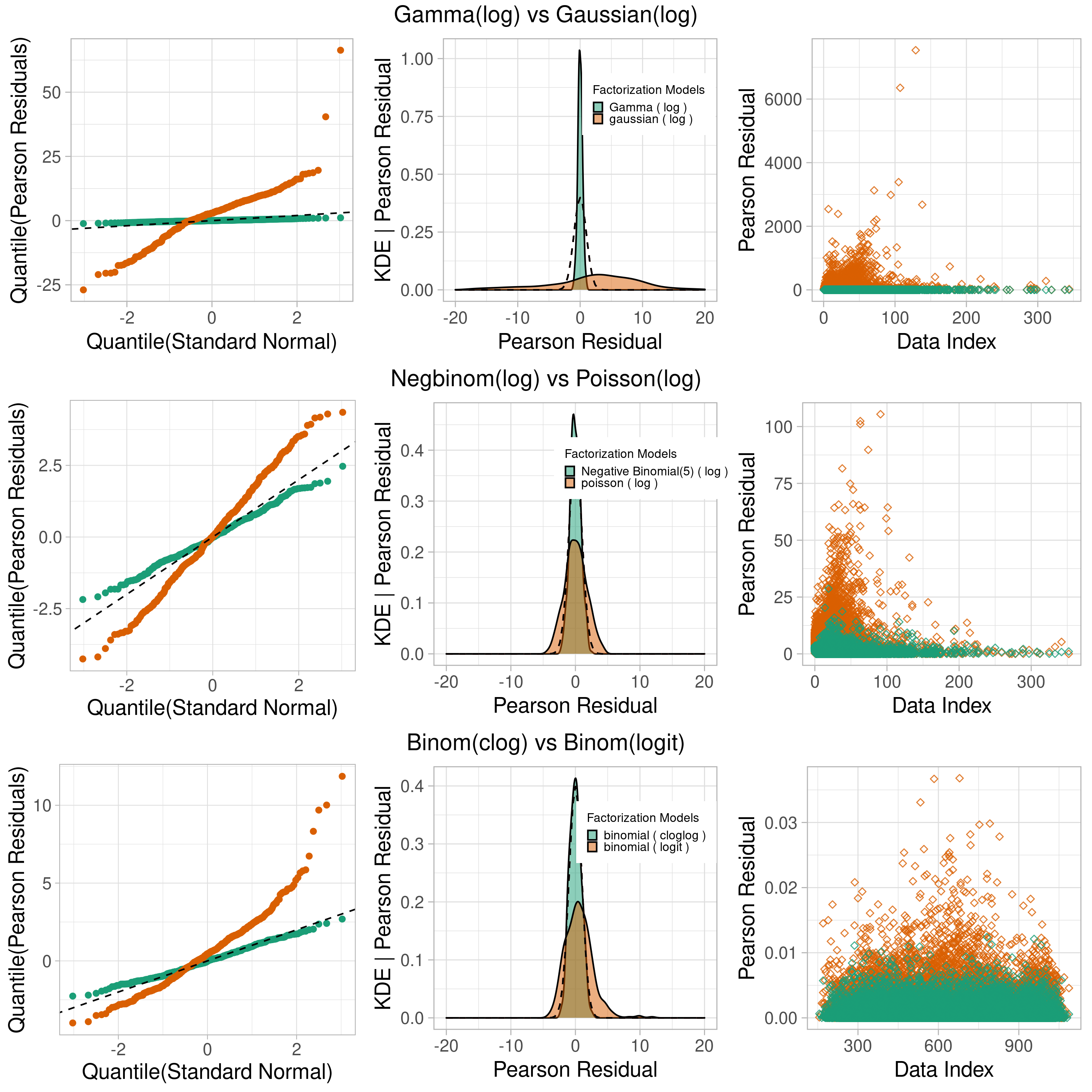}
\caption{\textbf{Goodness of fit tests:} green indicates matched family/link
fits. Columns (from left to right) represent respectively the QQ, density, and
Pearson residual plots.}
\label{fig:family}
\end{figure}

By comparing Pearson residuals (third column in Figure~\ref{fig:family}), we
can conclude that a DMF under correct family and link parameterizations
indeed yields better results. Since we generate the data from the given family (in
green), it is not surprising to observe that all $\chi^2_{G-1}$ test
statistics from correct family/link specifications have p-values equal to
\textbf{1} and all $\chi^2_{G-1}$ statistics from wrong family/link
assumptions have p-values equal to \textbf{0}.

Notice that for the family test in Figure~\ref{fig:family} we employed the
gamma and negative binomial families with true dispersion parameter $\phi$,
which is unknown in real world datasets. We here suggest that a naive
moment estimator of the dispersion estimator (e.g.,
$\hat{\phi} = \text{max}\{0.1, \mu_{X^2} / (\sigma_X^2- \mu_X)\}$ for negative
binomial) can still effectively distinguish the true simulation factorization
family from the comparing non-dispersed factorization family.
With the same $q, p, n$ and $\Lambda_0, V_0^\top$ setup, we simulate gamma and
negative binomial data and compare $\chi^2$ family test p-values
among three different factorization families: 1) dispersed family with true
dispersion $\phi$, 2) dispersed family with estimated dispersion $\hat{\phi}$, and 
3) non-dispersed family. We generate data matrices and evaluate family test
p-values for 10,000 simulations. The results from a Poisson-negative binomial example are shown in Figure~\ref{fig:gof:sensitivity}.
\begin{figure}[htbp]
\centering
\includegraphics[width = 0.9\textwidth]{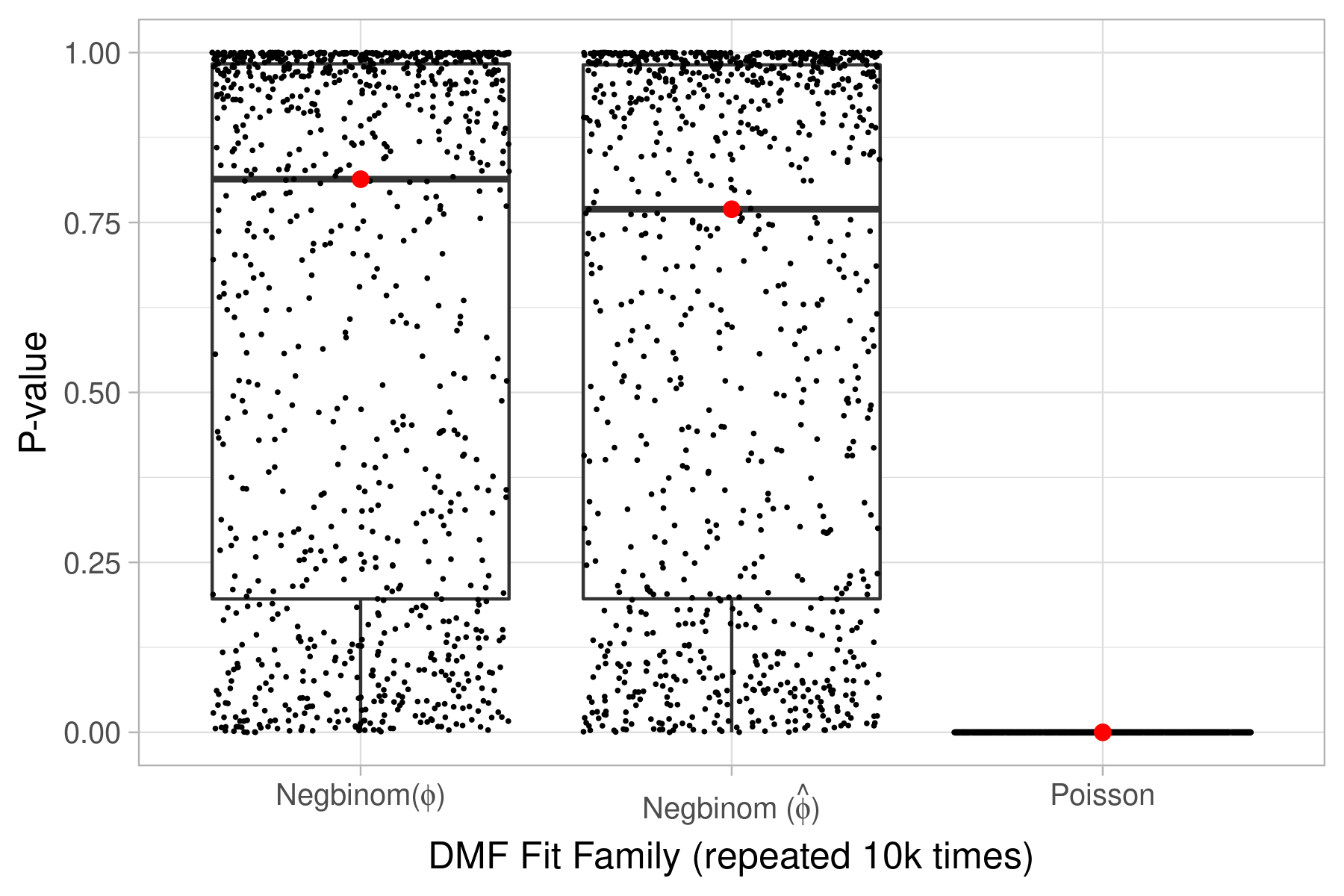}
\caption{\textbf{Sensitivity analysis on $\phi$ estimation:} with red dot
indicating the median, all simulations are under log link.}
\label{fig:gof:sensitivity}
\end{figure}

From Figure~\ref{fig:gof:sensitivity}, we can conclude that the p-values are
not significantly different when we compare between estimated and true
dispersion parameters in factorizations for dispersed families.
Both dispersed factorizations are clearly different from the Poisson
factorization family. The high p-values for the dispersed families
indicate that there is no sufficient evidence to reject the null hypothesis
that the negative binomial (dispersed family) is the true factorization
family. In fact, because the negative binomial distribution is more flexible
and better represents the variance in data entries, it should consistently
show higher p-values when compared to the Poisson distribution even when the data already
show good agreement with the non-dispersed (Poisson) family. We have
empirically confirmed this observation in a separate simulation study.

\subsubsection{Power Analysis}
We investigate the power of our test according to two sets of parameters: $n$ and $p$, specifying the sample size; and $\|\eta\|_F$, controlling the effect size. Power is, in general, sensitive to these parameters, increasing with them. However, in our setup we expect a more complex, combined effect: small data matrices with overall small entry magnitudes can be well explained by multiple families, while larger matrices provide more opportunities to spot discrepant entries under the null, specially with larger entries as they impact the variance function $\mathcal{V}$ more pronouncedly.

Since deriving the power of the test analytically is challenging under the DMF setup, we estimate it empirically under four commonly used factorization families in simulation studies.
The four selected factorization families and links are poisson (log), binomial (logit) with weight $w=90$, gamma (log) with $\phi=1$, and negative binomial (log) with $\phi=5$. For each combination of factorization family and link, we simulate data under $n= 200$, $600$ and vary the dimension according to $p = [0.2n,0.3n,\ldots, 0.7n]$. The magnitude $\|\eta\|_F$ is controlled by a $\sigma$ parameter with $\eta_{ij} = \Lambda_i^\top V_j, \Lambda_{i} \sim N(0, \sigma^2I_q), V_j \sim N(0, \sigma^2 I_q)$. The $\sigma$ parameter is taken to be $\{0.1, 0.2, 0.3, 0.4\}$ for the four families.

We simulate data $X$ according to each of the four families indexed by $j = 1,\ldots, 4$ under each $n,p,\sigma$ setup. The simulation(sampling) is repeated $S = 30$ for each of the family index. Conditional on each of the simulated data $X$, we then choose $G= np / 50$ to ensure each group has $50$ residual observations.
Next, we compute the p-value for the generalized Hosmer-Lemeshow test using each of these four families indexed by $i=1,\ldots, 4$ as the null family, with $i=j$ indicating a DMF fit under the true generating family/link assumption. 
For each combination of simulated hypothesis family $j$ and fitted hypothesis family $i$, we denote as $P_s(i,j)$ the $p$-value for sample $s$. This simulation provides power estimates $1 - \hat{\beta}(i,j) = \frac{1}{S}\sum_{s=1}^S 1(P_s(i,j) \leq \alpha)$ at significance level $\alpha$ whenever $i \neq j$. In the study we set $\alpha = 0.05$. As an initial sanity check, the power of two extreme cases of $n,p,\sigma$ are shown in Figure~\ref{fig:extreme_power}.
\begin{figure}[H]
    \centering
    \includegraphics[width =\textwidth]{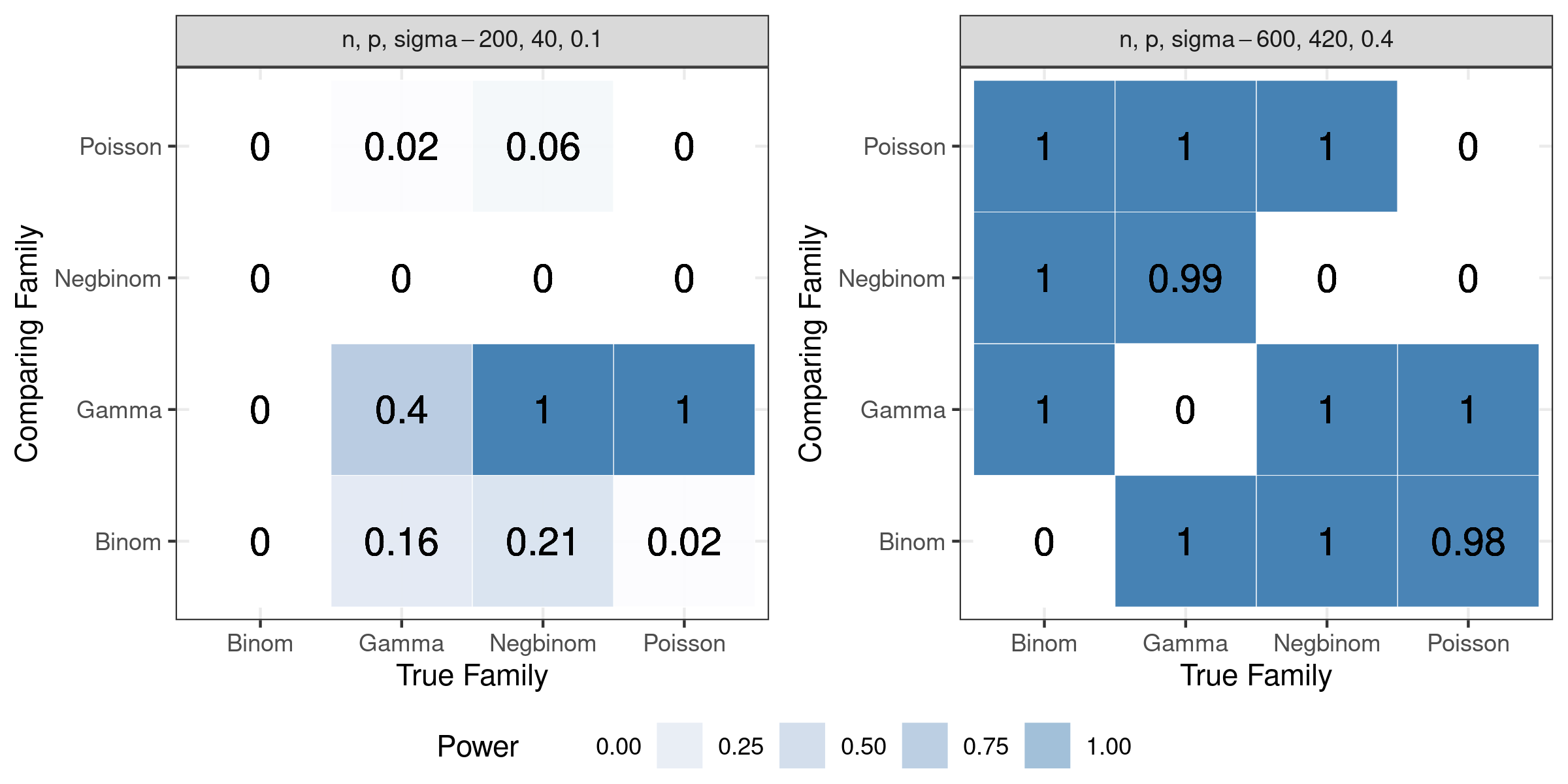}
    \caption{Extreme power comparison.}
    \label{fig:extreme_power}
\end{figure}
As we can see from the left matrix in Figure~\ref{fig:extreme_power}, in general, with small sample sizes ($n=200$, $p=40$) and magnitude ($\sigma=0.1$) we have low power as expected, shown here by having $1 - \hat{\beta}(i,j) \approx 0$ on the off-diagonal elements. True dispersed families such as gamma and negative binomial, however, show better discriminatory power even at these challenging conditions. As we focus on the right figure to increase sample sizes $n,p$ and the effect size $\sigma$, we observe higher power, again as expected. The only exception to $1 - \hat{\beta}(i,j) \approx 1$ in the off-diagonal entries in the right matrix is the true Poisson versus compared negative binomial case, which is also expected since the former is a particular case of the later. This observation additionally confirms the robustness of the negative binomial distribution in terms of explaining variation within Poisson data. 

Despite the fact that the designed statistics is able to distinguish the correct families and links, we observe that the diagonal of the two matrices in Figure~\ref{fig:extreme_power} is not converging to the advertised $\alpha=0.05$ significance level. From our empirical explorations, the desired $\alpha = 0.05$ convergence can be achieved by requiring 1) asymptotically larger $n$, $p$, 2) a carefully chosen number of groups $G$ and 3) a good initialization point for optimization convergence. The consideration of these factors will further depend on the family and link being simulated, which will make the design of the experiment unnecessarily lengthy. We decided to keep this interest open to follow-up research since this experiment has sufficiently provided the first assurance (to the best of our knowledge) for practitioners to access the appropriateness of the DMF family/link. That is, the experiments designed here clearly demonstrate that the test has high sensitivity and specificity with large sample sizes.

Next, to observe the effect of size on power, we estimate it from 10 replicates under different simulation scenarios of increasing $n \times p$ overall size with fixed $\sigma=0.4$. The results are summarized in Figure~\ref{fig:size_power}.
As we can see, the power of our family test generally increases as the sample size $np$ increases while keeping the nominal significance level. As in the previous simulation study, negative binomial fitted models fail to reject Poisson data, as expected. Interestingly, negative binomial generated data is harder to fit under other families, even dispersed ones, while Poisson generated data is more flexible and be well fit even with a binomial family under a low sample size.
\begin{figure}[H]
    \centering
    \includegraphics[width = \textwidth]{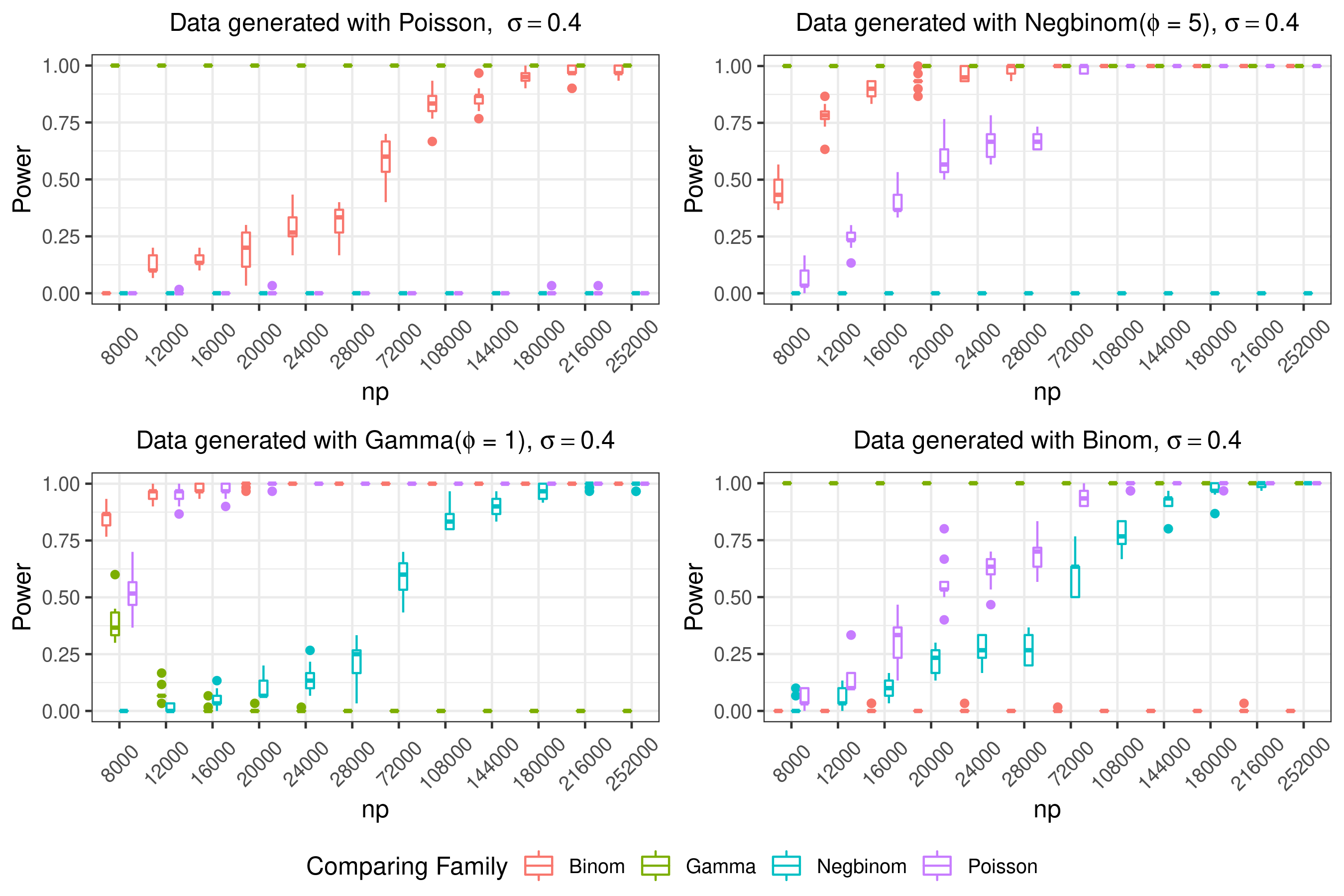}
    \caption{Sample size effect on the power of family test.}
    \label{fig:size_power}
\end{figure}

Lastly, we are also interested in the impact of $\|\eta\|_F$ through the parameter $\sigma$. In Figure~\ref{fig:sigma_power} we plot how power varies as a function of $\sigma$ for fixed $n = 200$ and $p = 80$ over 10 replicates.
As expected, power increases monotonically with the magnitude $\sigma$, indicating a clear effect of $\|\eta\|_F$ in distinguishing the variance structure and thus providing effective family selection.
\begin{figure}[htbp]
    \centering
    \includegraphics[width = \textwidth]{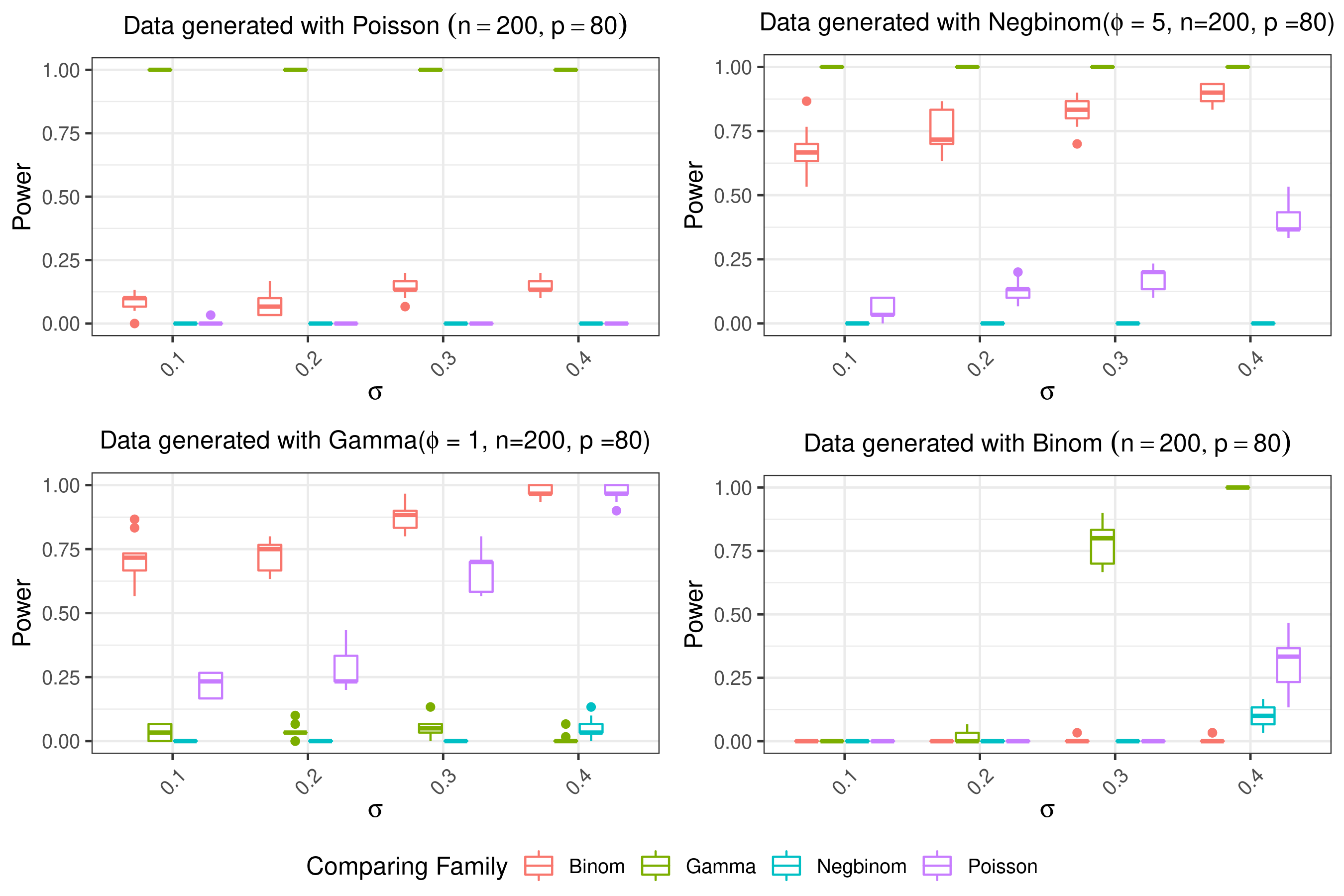}
    \caption{Sigma effect on the power of family test.}
    \label{fig:sigma_power}
\end{figure}

\subsection{Rank Simulation Study}
To determine the effectiveness of factorization rank estimation, we continued to simulate 
data given a specific family and link function $X \sim F(g^{-1}(\eta))$,
where $\eta = \Lambda_0 V_0^\top + \Lambda V^\top$ is of a known true rank $q^*$. To test and
compare the generality and the accuracy of two possible rank estimation
methods~\citep[Adjust Correlation Threshold, ACT;][]{2020frank} and
\citep{2010onrank}'s maximum eigenvalue gap, we generate both random and
deterministic $\Lambda$ and $V$ according to the conditions in our
Theorem~\ref{thm:consistency} and Proposition~\ref{prop:rank}.

For the implementation of the ACT method, we simply take $\hat{\eta} = g(X)$
and use the Stieltjes transformation on the eigenvalues of its correlation
matrix. For the maximum eigenvalue gap method, we simply adopt the $\delta$
calibration algorithm in~\citep{2010onrank} with $q_{\text{max}} = p-5$. For
each simulation scenario, we set $p = 50$, $n = 500$, and vary the true
rank $q^*$. We set $\Lambda_0 V_0^\top = \mathbf{5}_{n \times p}$ to maintain the
linear predictor positive, as required for the gamma, Poisson, and negative
binomial distributions, but set $\Lambda_0 V_0^\top = \mathbf{0}_{n \times p}$ to avoid
perfect separation in the binomial case. Here we highlight a simulation
scenario with $n = 500$ and $p =50$; experiments under different aspect ratios
$p/q \in [0.1, 0.2, \cdots 1]$ are repeated and resulted in similar
conclusions, so they are not shown for brevity. We simulate the rank
determination process 1,000 times under the different scenarios listed in
Table~\ref{tab:simu_cases}.
\begin{table}[h]
\centering
\begin{threeparttable}
\caption{\label{tab:simu_cases} Rank simulation cases.}
\begin{tabular}{cccc}\toprule
       & $q^*$ & $\Lambda$ & $V$ \\\midrule
Case 1 & 6      & $\Lambda_{ij} \sim N(0, 1)$                           & $V_{ij}\sim N(0, 1)$   \\
Case 2 & 6      & $\Lambda_{ij} \sim \text{Unif}(-n/50, n/50)$          & $V = [\text{Diag}(1)_{q \times q}|0_{q \times (p-q)}]$ \\
Case 3 & 15     & $\Lambda_{ij} \sim N(0, 1)$                           & $V_{ij}\sim N(0, 1)$   \\
Case 4 & 15     & $\Lambda_{ij} \sim \text{Unif}(-n/50, n/50)$          & $V = [\text{Diag}(1)_{q \times q}|0_{q \times (p-q)}]$ \\
Case 5 & 6      & $\Lambda_{ij} \sim N(0, \sigma = 0.1)$                & $V_{ij}\sim N(0, \sigma =0.1)$   \\
Case 6 & 6      & $\Lambda_{ij} \sim \text{Unif}(-0.1, 0.1)$          & $V = [\text{Diag}(1)_{q \times q}|0_{q \times (p-q)}]$ \\
\bottomrule
\end{tabular}
\end{threeparttable}
\end{table}
Case~1 was designed with both the conditions of ACT and maximum eigenvalue
gap method in our Proposition~\ref{prop:rank} satisfied. Case~2 was conducted
with the true $V$ matrix being deterministic, which is permitted in
Proposition~\ref{prop:rank} but violates the assumption in the ACT
method. Cases~3 and~4 were designed with $q^*$ relatively large compared to
data dimension $p$, which violates the assumption of $q^* = O(p^{\delta})$ in
ACT methods for $\delta$ large. Cases~5 and~6 are designed to demonstrate the
failure of both methods in the case of weak signal (condition C1 in
Theorem~\ref{thm:consistency} is violated). The simulation results are shown in Figure~\ref{fig:rank_simu}. 

\begin{figure}[htbp]
\centering
\includegraphics[width=\textwidth]{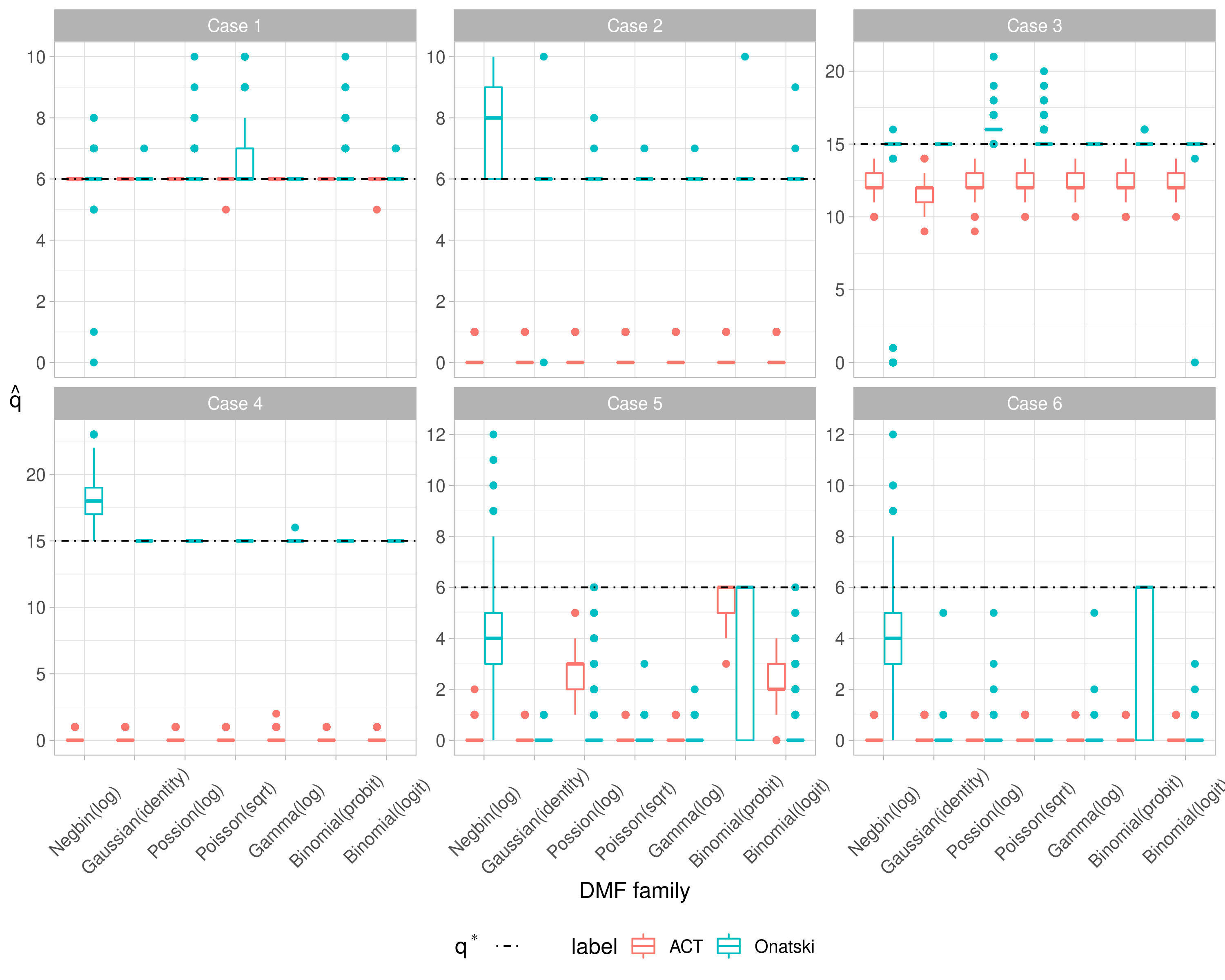}
\caption{\textbf{Rank simulation results:} ACT (red), maximum eigenvalue gap (blue).}
\label{fig:rank_simu}
\end{figure}

We can conclude that although the ACT method works almost perfectly when the
conditions in~\citep{2020frank} are satisfied, it also consistently fails to
estimate the true rank when the conditions are violated (e.g., in case~2 with
$V$ being deterministic, case~3 when $q^*$ is relatively large compared to
$p$). On the other hand, the maximum eigenvalue gap method estimates the true
rank $q^*$ with few mistakes for the first four simulation scenarios.

If we take a closer look at the maximum eigenvalue gap plot we notice that
most, if not all, of the over estimations resulting from the maximum
eigenvalue gap method are caused by the crude $\delta$ calibration algorithm
in~\citep{2010onrank}. That is, the algorithm attempts to capture the maximum
eigenvalue gap through estimating the slope locally up to convergence, which
might suffer from high variance under a limited local data sample. A typical
overestimating effect is shown in Figure~\ref{fig:rank_simufail}, which
clearly indicates a rank 6 and 15 but the $\delta$ calibration algorithm
outputted a rank of 9 and 17 for cases~2 and~4.
\begin{figure}[H]
\centering
\includegraphics[width=.49\textwidth]{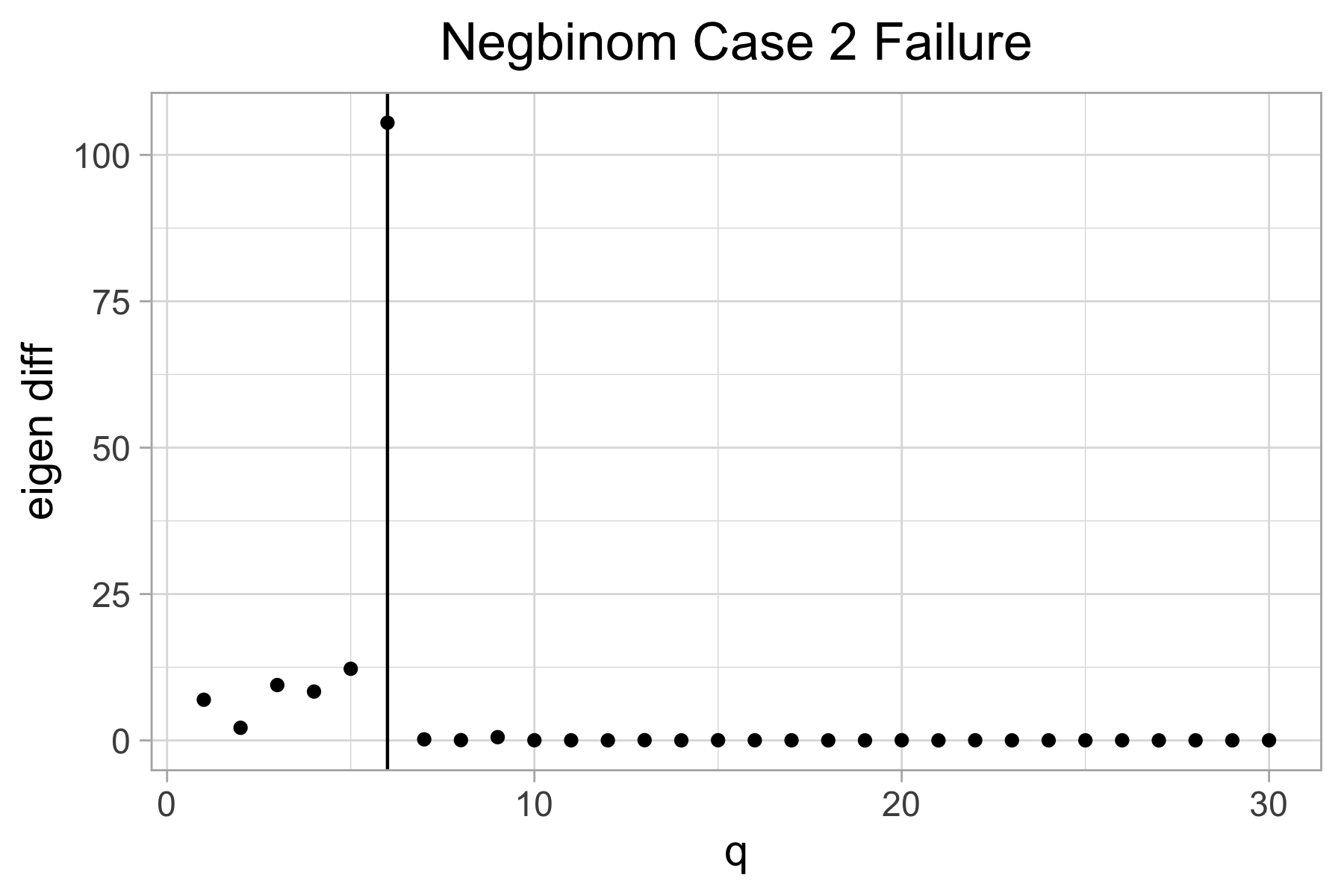}\hfill
\includegraphics[width=.49\textwidth]{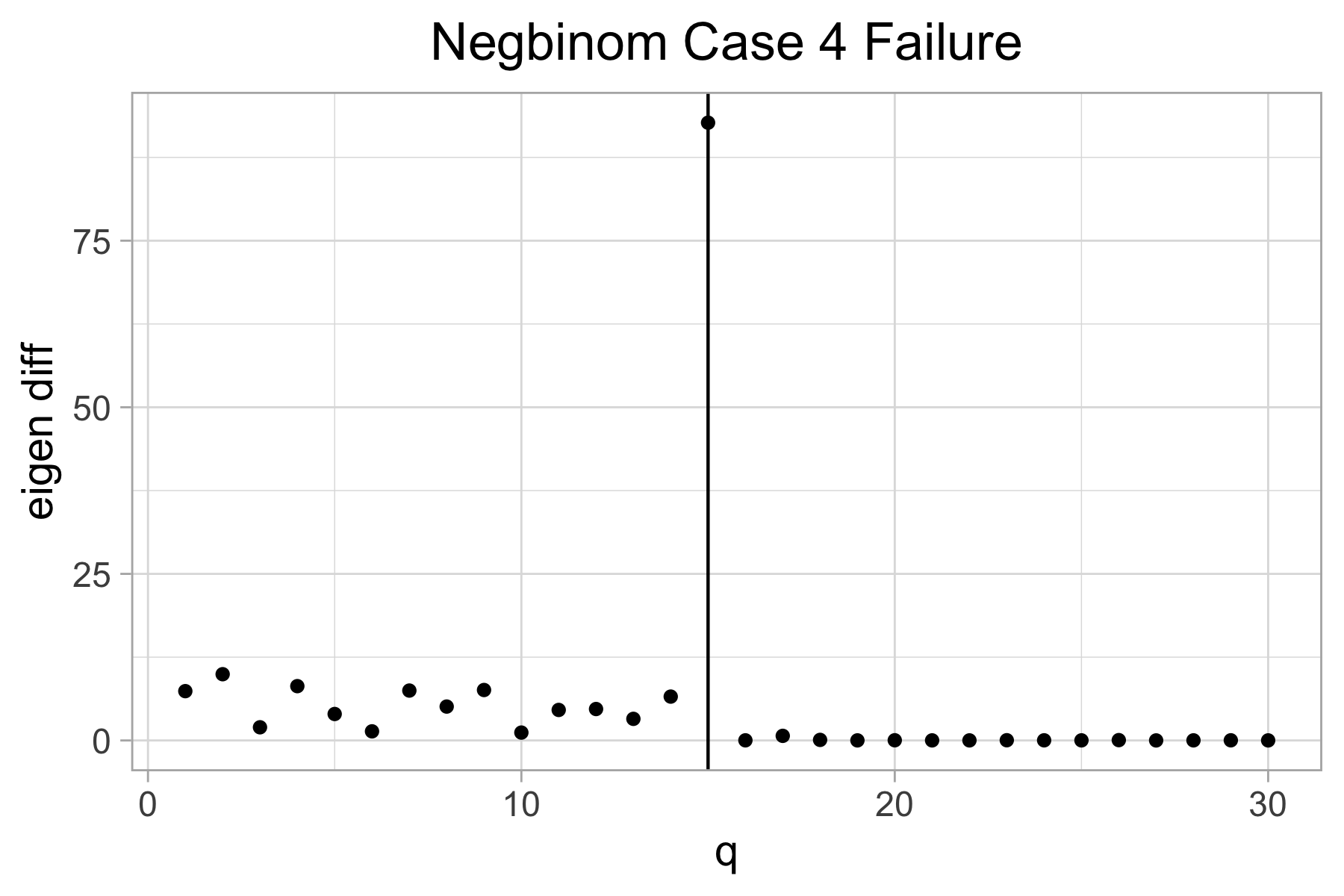}
\caption{\textbf{Negative binomial rank overestimation}.}
\label{fig:rank_simufail}
\end{figure}
We thus encourage the user to always plot the
eigenvalue gap to supervise the determination of factorization rank. We also
refer to Onatski's $\delta$ calibration algorithm for a potential
improvement on the automatic algorithm robustness. In conclusion, we adopt the
maximum eigenvalue gap method for rank determination in our empirical data
study as it is more flexible and robust.

\subsection{Case Studies -- Visualization}
\subsubsection{Face Recognition}
Since pattern recognition is closely related to matrix factorization, we
tried our method on a computer vision benchmark dataset.
The CBCL dataset is a two class dataset with 2,429 of them being human faces
and 4,548 images being non-face images.  Enriched with followup research, the
CBCL dataset has been extensively employed in computer vision
research~\citep{CHHH07}.
Containing the first comparison among PCA, NMF and factor model on the CBCL
dataset, the work of~\citep{1999nmf} brought considerable attention to NMF in
computer vision.

Here we compare factorization results of the CBCL dataset with four
potential factorization families: 1) the constrained Poisson identity link (NMF),
2) the unconstrained Poisson log, 3) the negative binomial log with MoM-estimated
$\hat{\phi} = 0.462$, and 4) the binomial family probit link with weights as 255
(max grayscale value in dataset images). The binomial factorization with weight 255 is particularly
interesting because it offers the statistical view of computer images as binomial allocations.

We first determine the factorization rank by fitting $\hat{\eta}$ with a
full rank ($q = 361$) and by performing the maximum eigenvalue gap method
(with $q_{max} = 30$). The results indicate a rank of three for negative
binomial, Poisson (log link), and binomial, and a rank of four for Poisson
(identity link, NMF). Eigenvalue plots can be found in
Appendix~\ref{appendix:rank}.
We then perform the family test with $G = 165$ with results shown in Figure~\ref{fig:cv_family}.
\begin{figure}[H]
\centering
\includegraphics[width = \textwidth]{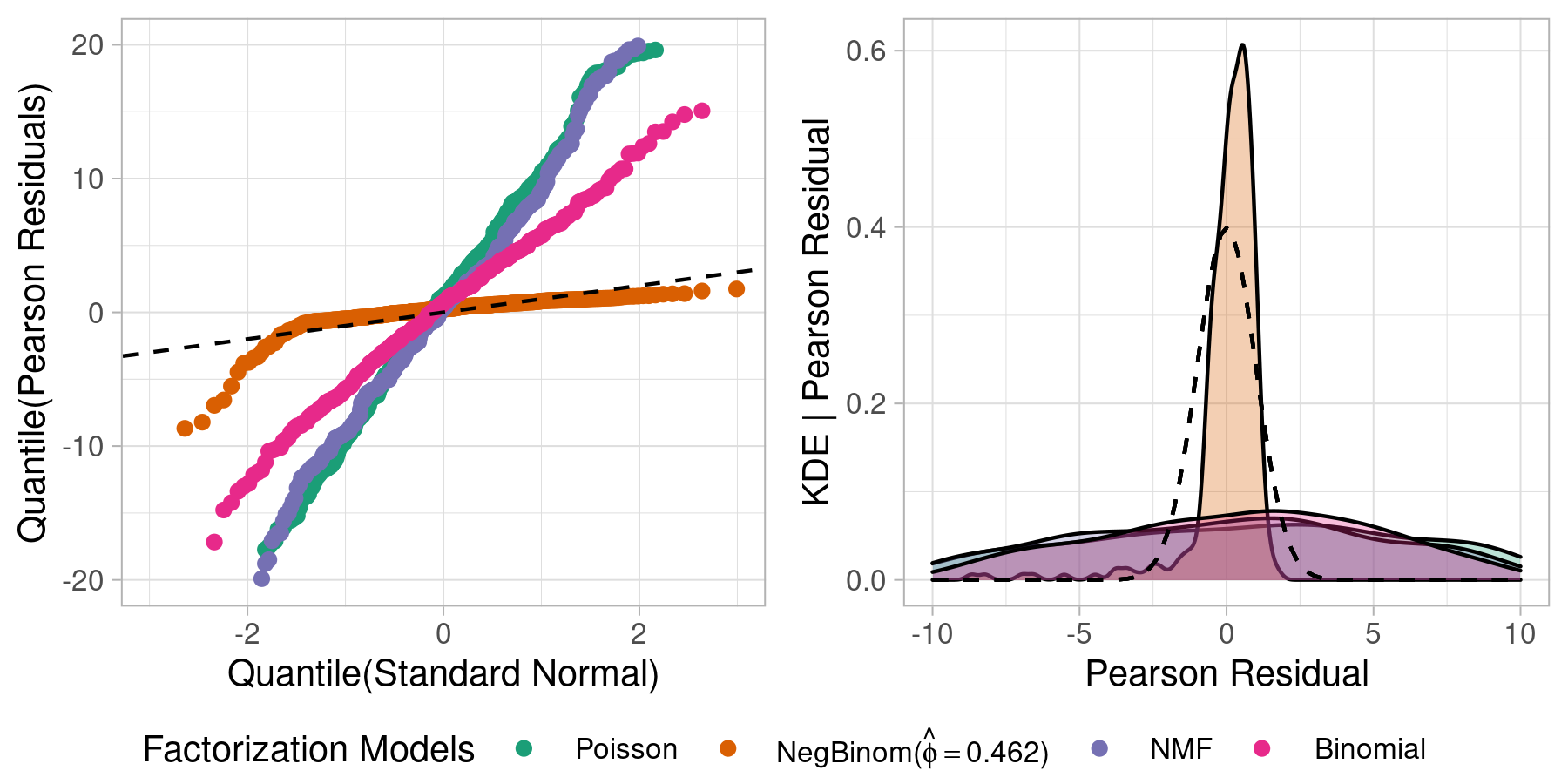}
\caption{\textbf{CBCL factorization family test} with $G = 165$. QQ-plot (left), density plot (right). Both are overlaid with
a standard normal density (black dashed line).}
\label{fig:cv_family}
\end{figure}
As can be seen, the negative binomial family is closest to normality on its components 
$T_{n,p}^1 (D_{n,p}^1)^{\frac{1}{2}}$ and thus the highest p-value in $\chi^{2}_{G-1}$ statistics. It is then 
followed by the binomial with probit link, NMF, and log-link Poisson. Note that even though the negative binomial factorization 
family is the best among the four potential families, we still observe tail
deviations from the $45\deg$ line in the QQ-plot. We attribute this deviation
to the unusual presence of zeros as the background color. The tail deviation thus
points out to a potential factorization research direction with the generation
of background pixels being modeled by a separate specification as in
zero-inflated models.

Next, we compare the projected three dimensional $\hat{\Lambda}$ for these four
families, as shown in Figure~\ref{fig:cv_pc}. Even though the negative
binomial factorization family is the best among the four in the family test,
the projected three dimensional plot indicates better separability between the
face and non-face classes under the NMF factorization. This result suggests a
better fit of negative binomial and binomial factorization to the image data
by better accommodating the dispersion induced from different classes, but a
less clear separability between faces and non-faces. This observation is
related to the positivity constraint imposed by the NMF method, which has been
shown in~\citep{1999nmf} to promote partial face learning. A potential
research direction thus would be to fit DMFs under a classification scenario, for
instance, a negative binomial or binomial family factorization for faces and
non-faces, with different factors $\Lambda$ and $V$ for each class.
\begin{figure}[H]
\centering
\includegraphics[width=.49\textwidth,height =2in]{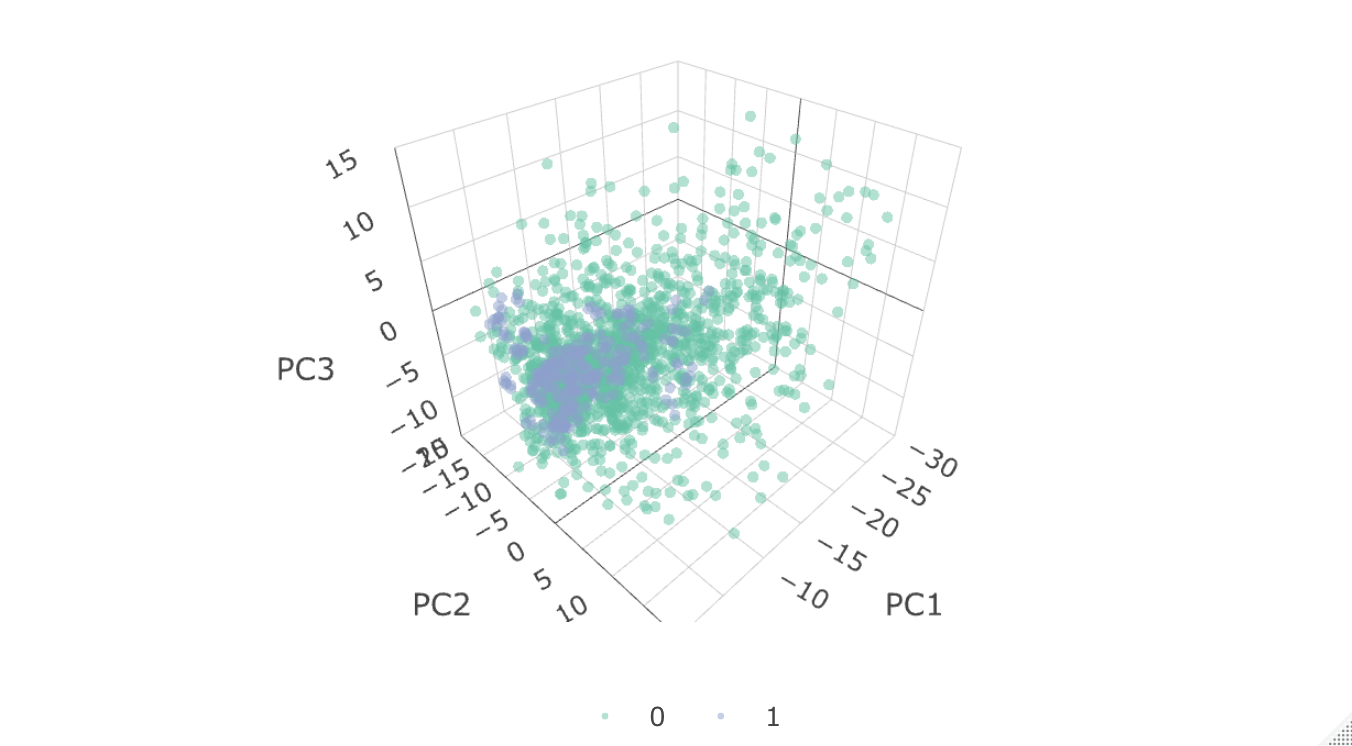}\hfill
\includegraphics[width=.5\textwidth, height=2in]{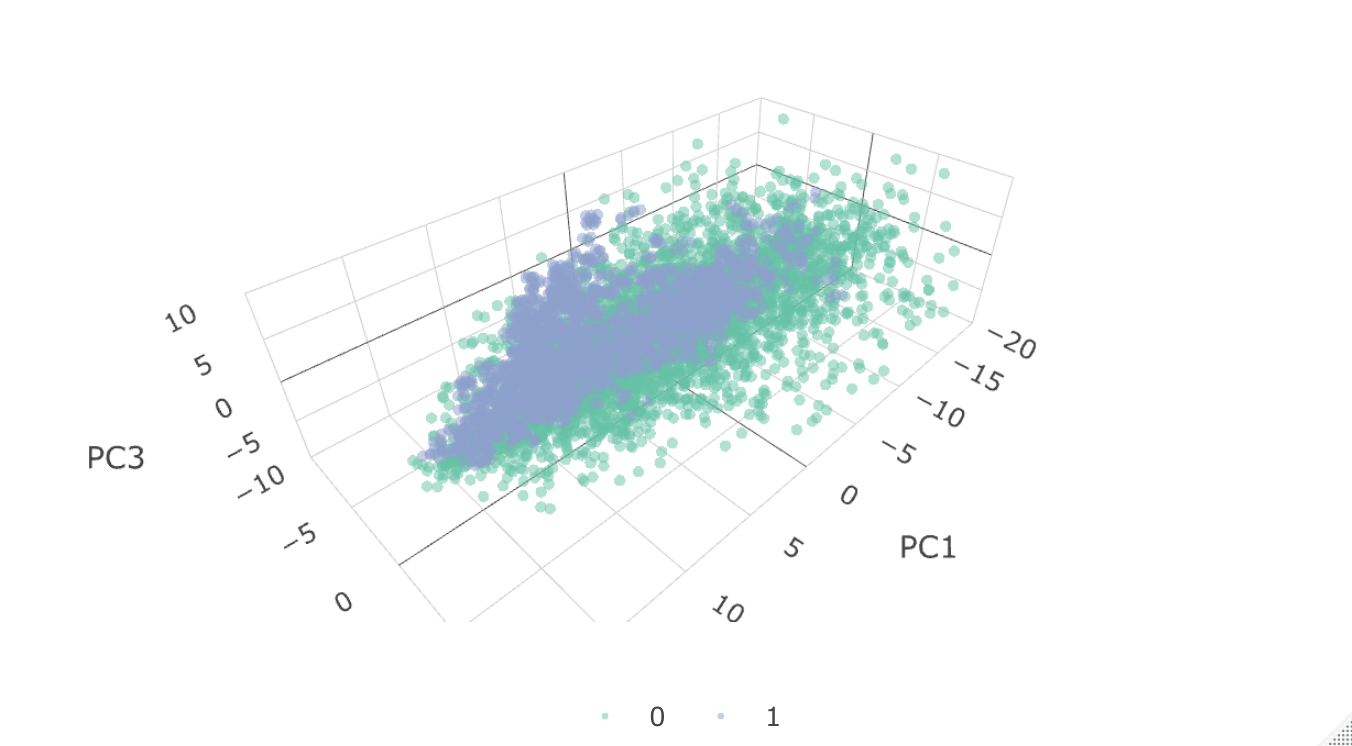}\hfill
\includegraphics[width=.49\textwidth, height=2in]{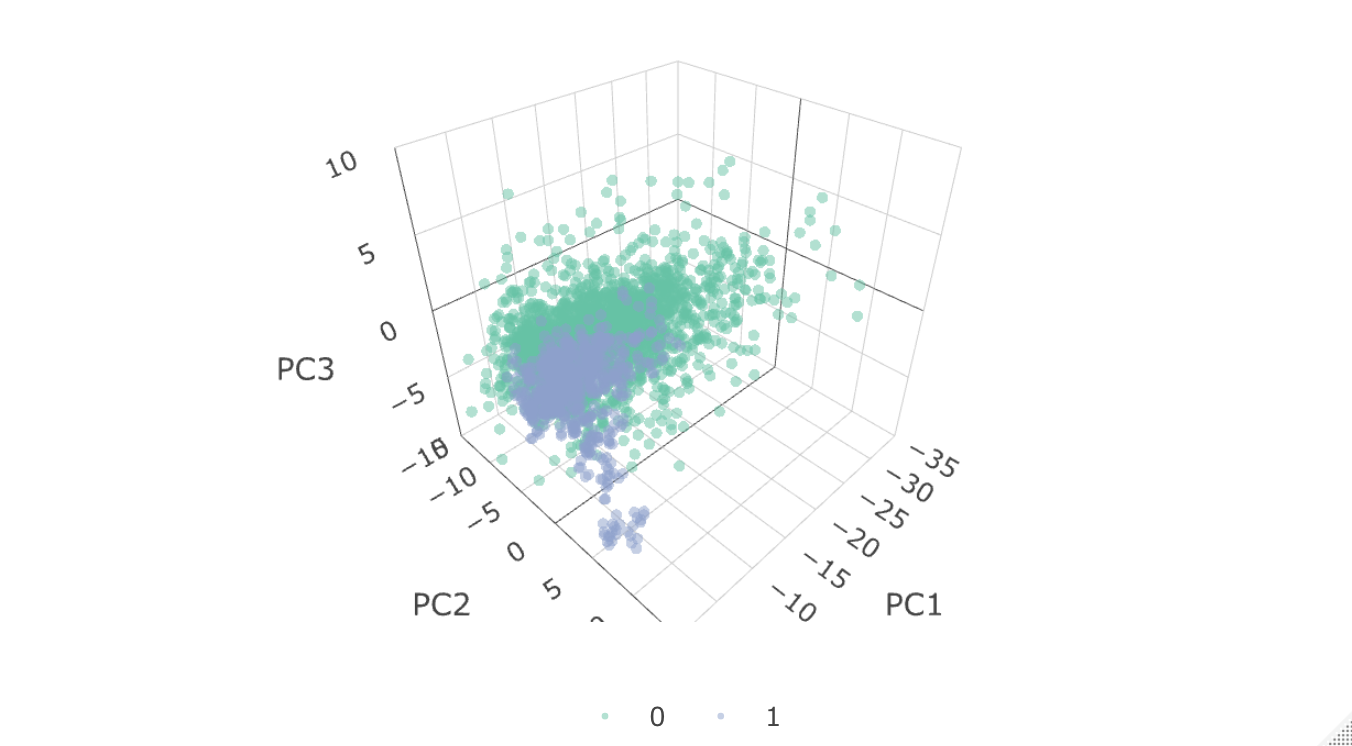}\hfill
\includegraphics[width=.5\textwidth, height=2in]{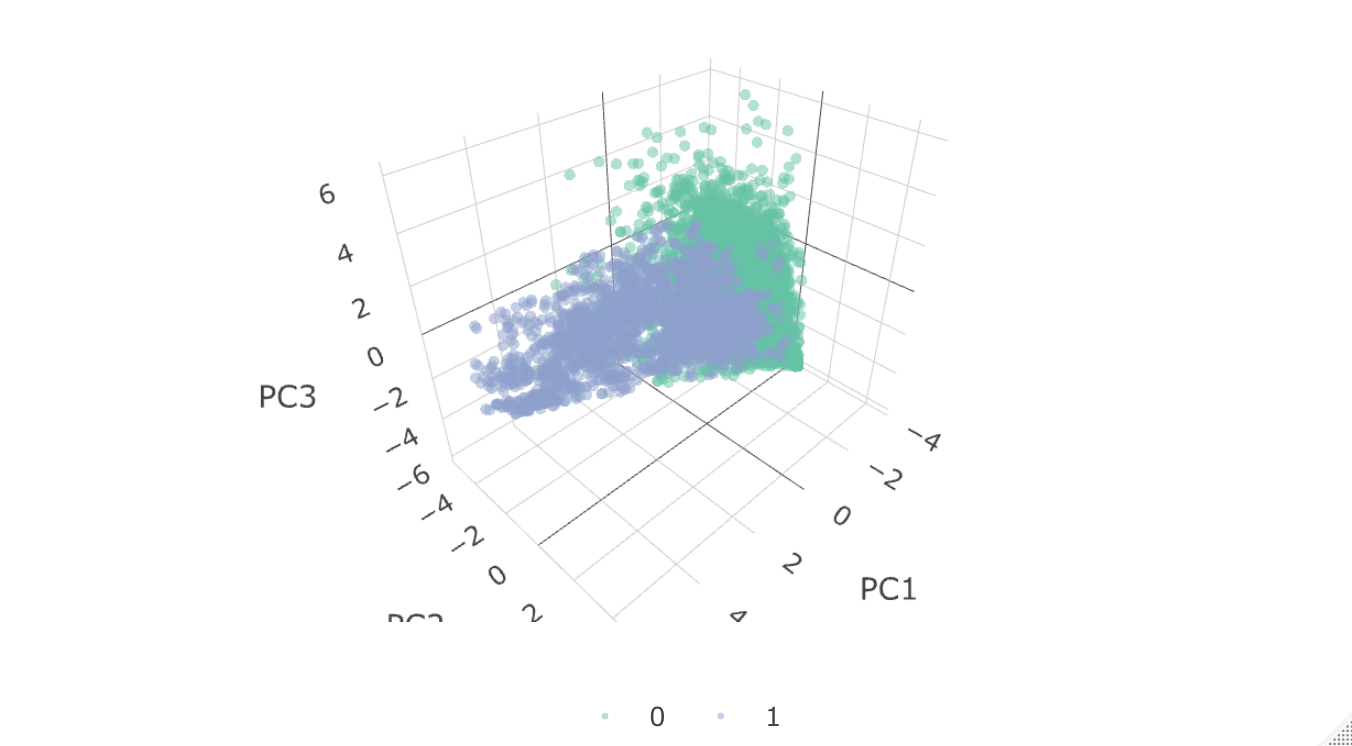}
\caption{\textbf{CBCL factorization projections:} negative binomial (top
left), Poisson (top right), binomial (bottom left), and NMF (bottom right).}
\label{fig:cv_pc}
\end{figure}

\subsubsection{Natural Language Processing}
Besides applications in computer vision, natural language processing (NLP) has
also been a very successful application area for matrix factorization. The
factorization is usually based upon a keyword co-occurrence matrix, then
termed as latent semantic analysis~\citep[LSA;][]{1990lsa}, or
keyword-document matrix, known as hyperspace analogue to
language~\citep[HAL;][]{1996hal}. One recent breakthrough is
GloVe~\citep{2014glove}, which motivates the use of a log link on the
probability of word co-occurrence matrix for word analogy task.

We hereby compare the NMF factorization against our DMF
factorization with negative binomial and Poisson log links by experimenting on
one of the NLP benchmark datasets known as ``20Newsgroup''\citep{1995newsgroup}.
The dataset contains 20,000 articles with each of them associated with one of
20 news topics. For visualization convenience, we focus on four topics---auto,
electronics, basketball, mac.hardware---that are different enough and can adopt
similar preprocessing methods to~\citep{2009nlp}. Specifically, to avoid
overfitting on specific documents, we filtered out the headers and footers and
quote information within the news while tokenizing specific keywords using
English stop words. For the tokenization of the keywords, we also filter out
words that show up in 95\% of the articles and keywords that appear in only
one article to promote the effectiveness of keyword identification. After
preprocessing, we end up with 3,784 documents and 1,000 keywords. We use the
frequency matrix as the input of our matrix factorization with $n = 3,784$ and
$p = 1,000$.

Since the data is a positive count matrix, we here
compare the factorization results of NMF, Poisson log and negative binomial
with MoM-estimated $\hat{\phi} = 0.06$. For both the Poisson log and negative
binomial log family we have found a rank of $\hat{q} = 5$, while for the NMF
we have $\hat{q} = 3$. To better visualize factorization results, we adopt
a factorization with $\hat{q} =3$ for all the families to ensure a fair
comparison to NMF. We then conduct a family test with $G= 20$ in Figure~\ref{fig:nlp_family}.
\begin{figure}[H]
\centering
\includegraphics[width = \textwidth]{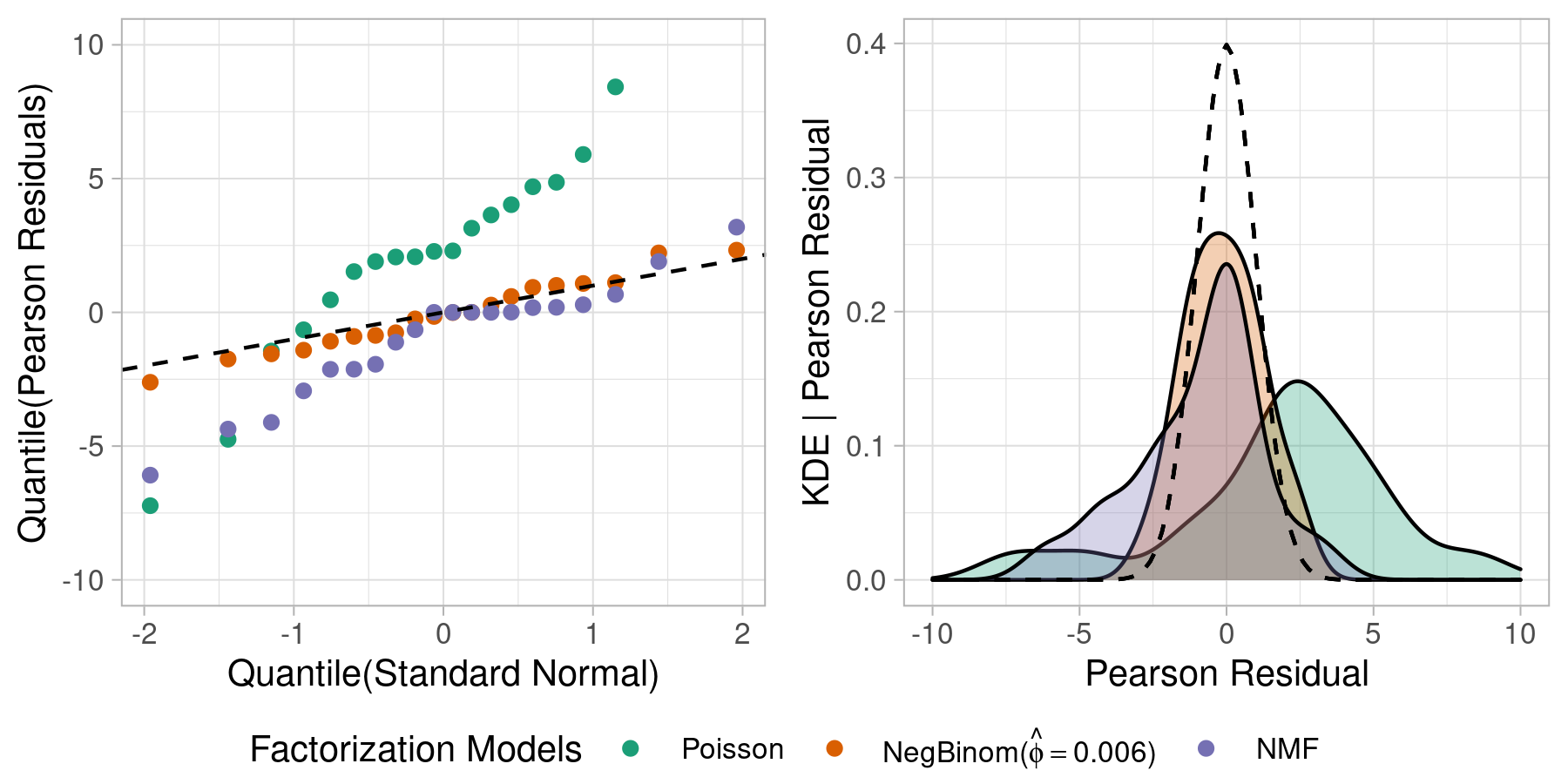}
\caption{\textbf{Keyword frequency factorization family test:} $G = 20$,QQ-plot (left), density plot (right). Both are overlaid with
a standard normal density (black dashed line).}
\label{fig:nlp_family}
\end{figure}
With a \textbf{unit} p-value, we take the negative binomial family with log link and
MoM-estimated dispersion ($\hat{\phi} = 0.06$) as an adequate distribution
while rejecting the Poisson family with log link and NMF with Poisson identity
link with both of them having zero p-value.

Lastly, we adopt~\eqref{eq:dmfcentered} for all the three factorization to
better visualize the projected three dimensional plot in
Figure~\ref{fig:nlp_pc}.
We can see after projecting the 3,784 documents onto the three dimensional
factor space, the negative binomial factorization more clearly separates
the four different topic classes. Compared to the negative binomial
factorization, the Poisson log factorization can still separate the four
different class albeit in a more compact way, which suggests the
indispensability of dispersion to account for keyword frequency variation.
The NMF with Poisson identity link perform the worst, with four topics
overlapping in the center, at the origin.

\begin{figure}[H]
\centering
\includegraphics[width=\columnwidth]{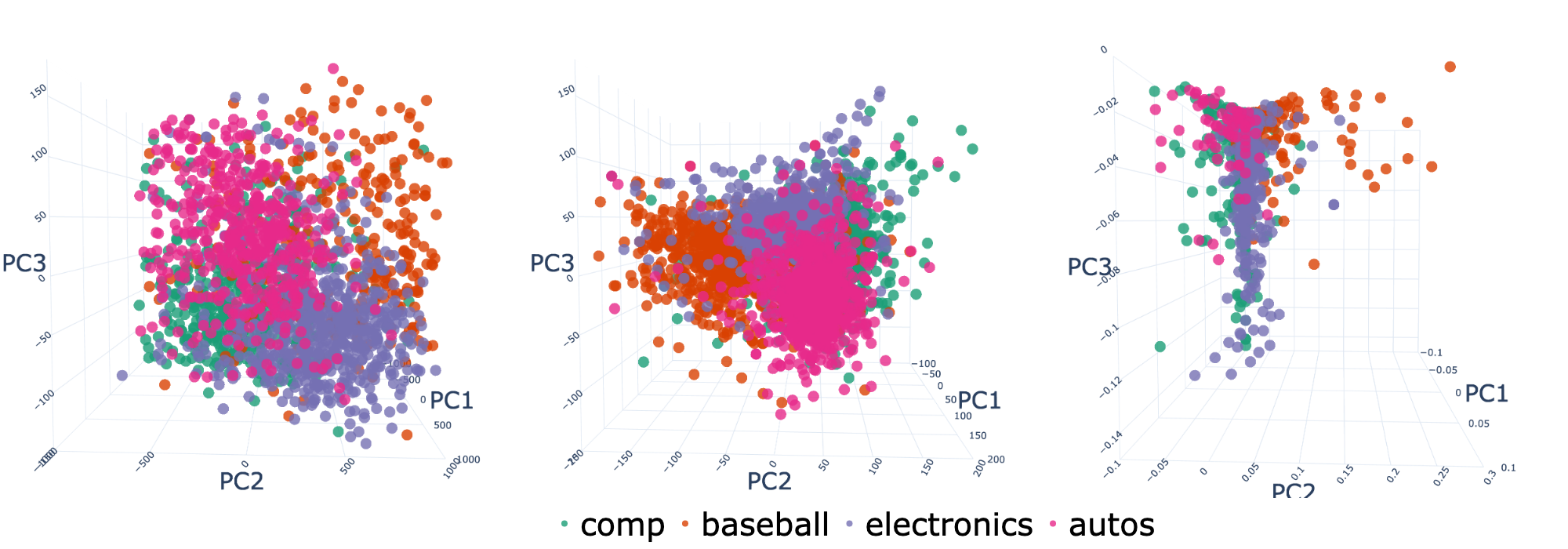}
\caption{\textbf{Keyword frequency factorization projection:} negative binomial (left),
Poisson (center), and NMF (right).}
\label{fig:nlp_pc}
\end{figure}

\subsubsection{Network Analysis}
Another important application of our deviance matrix factorization is network
data analysis, and here we focus on community detection~\citep{2002community}.
Since the adjacency matrix of a network has binary entries, we can just adopt
a Bernoulli family for the factorization. To demonstrate the effectiveness
of our DMF method on network data, we use the political blogs and Zachary's
karate club datasets as examples, both of which have been benchmark datasets
for network community detection since~\citep{2002community}
and~\citep{2005blog}.

We thus just compare the binomial logit link factorization with the commonly
used Laplacian spectral clustering method and use the separation of projected
principal components in~$\hat{\Lambda}$ to assign communities.

As before, we first employ maximum eigenvalue gap method to determine the
factorization rank. With $q_{max} = 200$ and $q_{\text{max}} = 29$, we have
found rank three to be true factorization rank for political blogs and a rank two
for karate club network (see Figure~\ref{fig:network:rank}).
The projected two dimensional plot in Figure~\ref{fig:club_pc} suggests
good separation between two groups, with the leaders being far from the
centroid of their respective groups. Notice that both methods can identify two
communities with a single principal component, but the DMF method provides more
information about group variations in the second principal component.
\begin{figure}[H]
\centering
 \includegraphics[width=\textwidth]{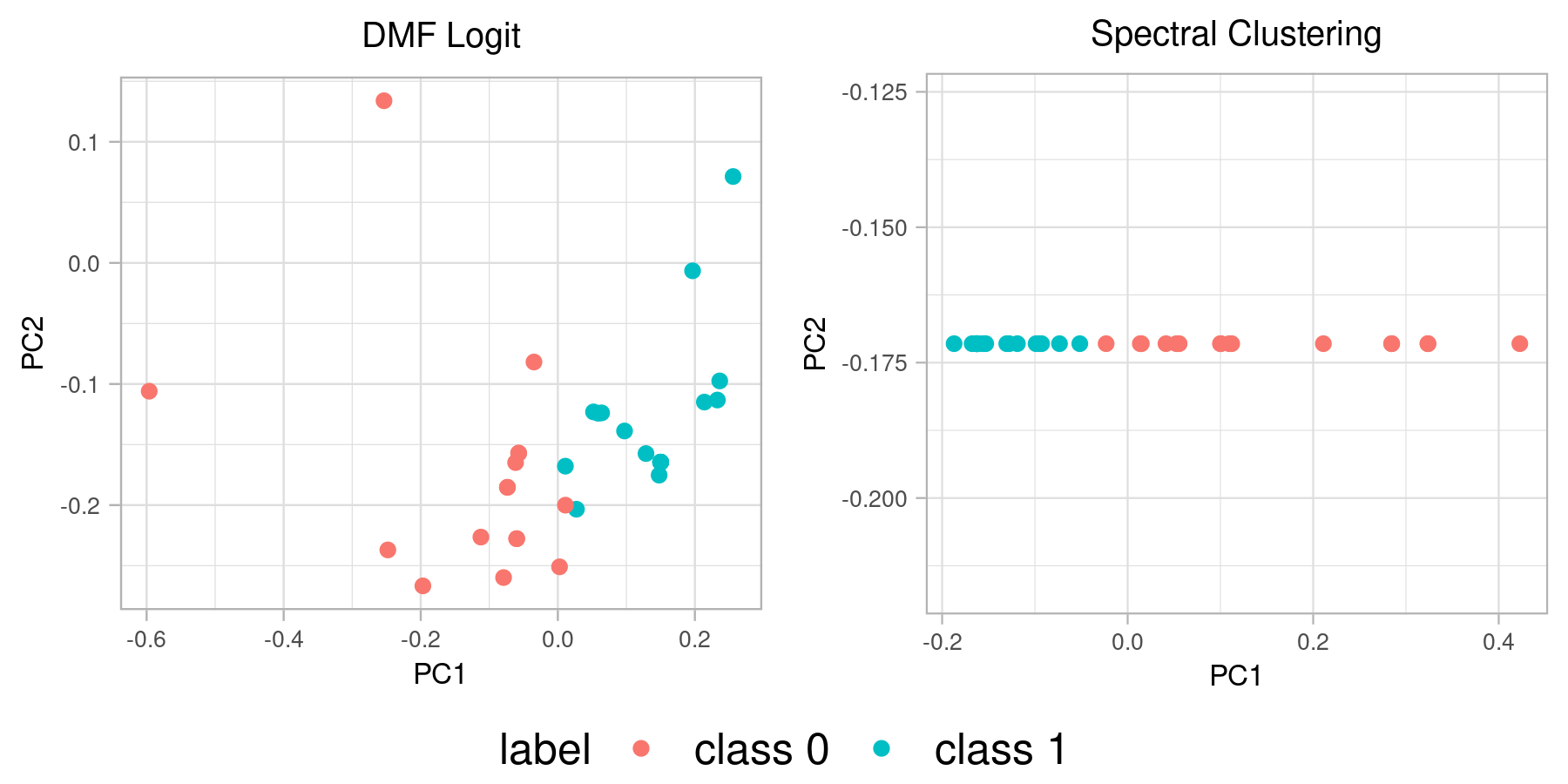}
\caption{\textbf{Karate club factorization projections}.}
\label{fig:club_pc}
\baselineskip=10pt
\end{figure}

Although our method can take asymmetric and undirected network data as inputs
for the factorization, the Laplacian clustering method depends heavily on the
positive definite property of a connected symmetric graph. To ensure a fair
comparison between our method and Laplacian clustering, we pre-process
the data by focusing on the largest connected component, which consists of
1,222 nodes. As the projected three dimensional plot in
Figure~\ref{fig:polblog_pc} indicates, compared to the spectral clustering
method our method has a much clearer separation across the two
known communities and better captures within group variation by
exploiting all three factorized principal components.
\begin{figure}[htbp]
\centering
\includegraphics[width = \textwidth]{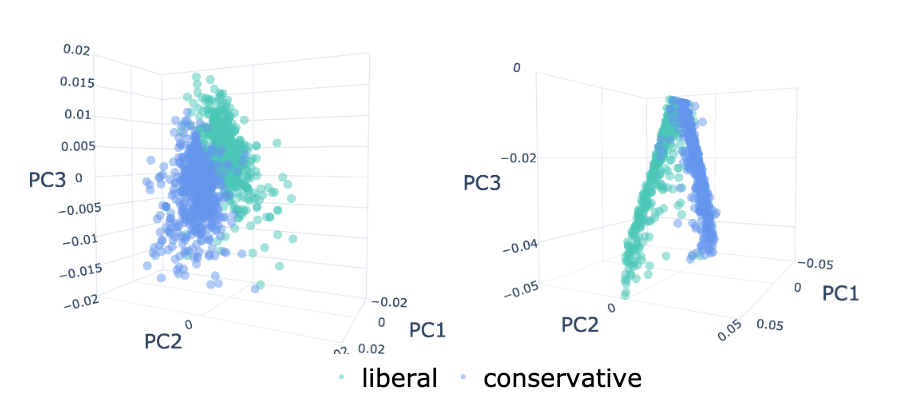}
\caption{\textbf{Polblog factorization projections}: binomial logit (left), spectral clustering (right)}
\label{fig:polblog_pc}
\baselineskip=10pt
\end{figure}

\subsubsection{Biomedical studies}
The leukemia dataset reported in~\citep{1999gene}
has been a staple benchmark for gene classification. The dataset contains
5,000 features and 38 observations with two known labels of classes: acute
myelogenous leukemia (AML) and acute lymphoblastic leukemia (ALL). Recent
researches~\citep{2004gene,2009gene} have explored factorizations based on PCA
and NMF with ranks ranging from 2 to 5.

For similar dispersion reasons, we
compare factorization results among NMF, Poisson with log link, and negative
binomial with $\hat{\phi} = 1.93$. Similar to the previous experiments, we
first employed the maximum eigenvalue gap method with a full rank fit to
determine the factorization rank. The results indicates a rank two
factorization for negative binomial with log link and a rank one for the
Poisson log-link model. For the NMF rank determination, we adopt the similar
procedure by fitting the NMF with full rank and then plot the maximum
eigenvalue gap. The results indicate possible ranks of two and possible rank
of four for the NMF (Poisson with identity link) factorization.

\begin{figure}[H]
\centering
\includegraphics[width = \textwidth]{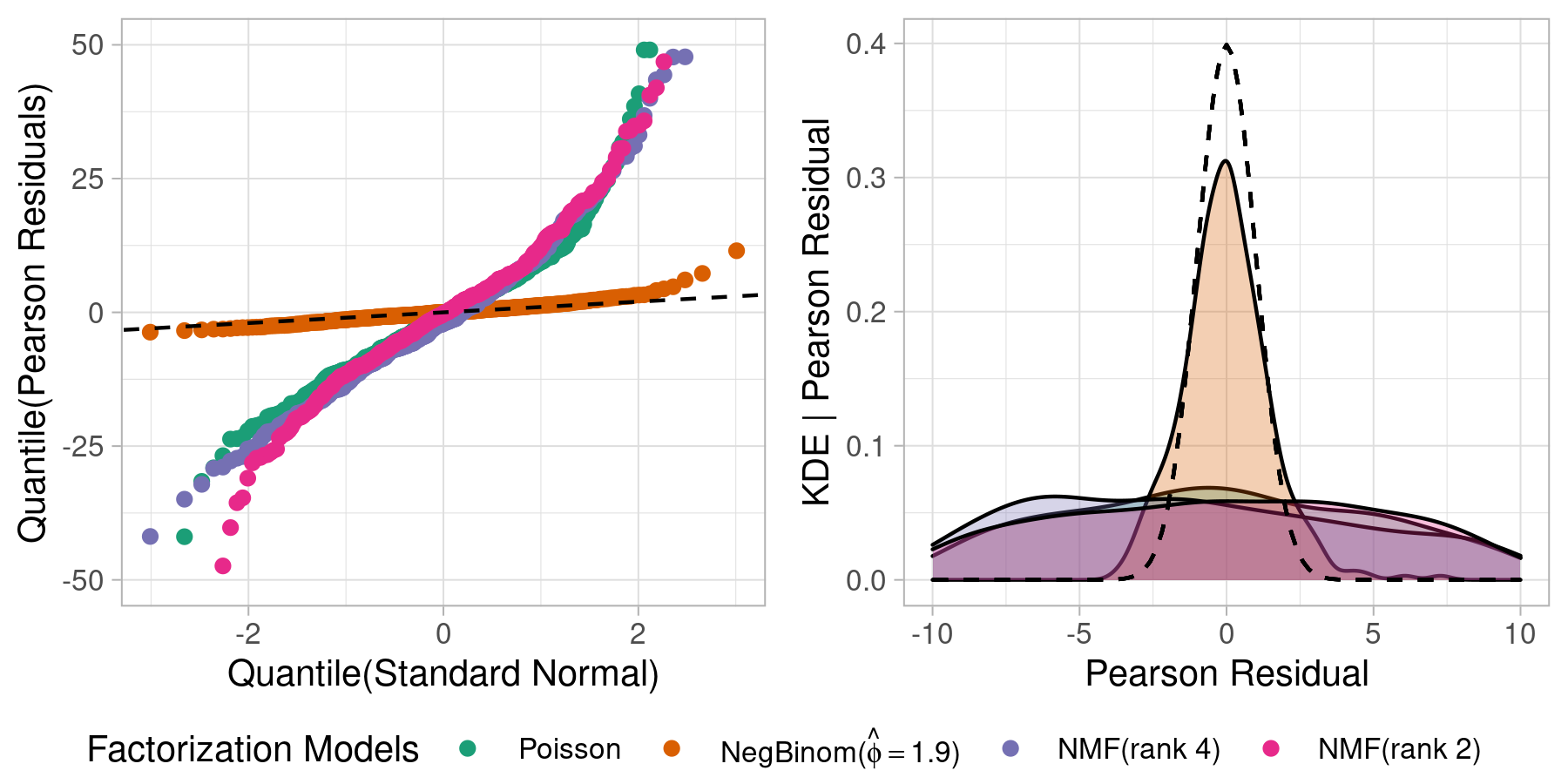}
\caption{\textbf{Leukemia factorization family test}: with $G = 380$, QQ-plot (left), density plot (right). Both are overlaid with
a standard normal density (black dashed line).}
\label{fig:leukemia_family}
\baselineskip=10pt
\end{figure}
We then conduct the family test to determine the appropriate factorization
family, as shown in Figure~\ref{fig:leukemia_family}. We can see that
Poisson (log), NMF, and PCA (identity) all failed the normality test, while the
negative binomial family with log link demonstrates significant better normality.
This result justifies a negative binomial family with
log link as a reasonable factorization family.
%
\hfill
\begin{figure}[H]
\centering
\includegraphics[width=\textwidth]{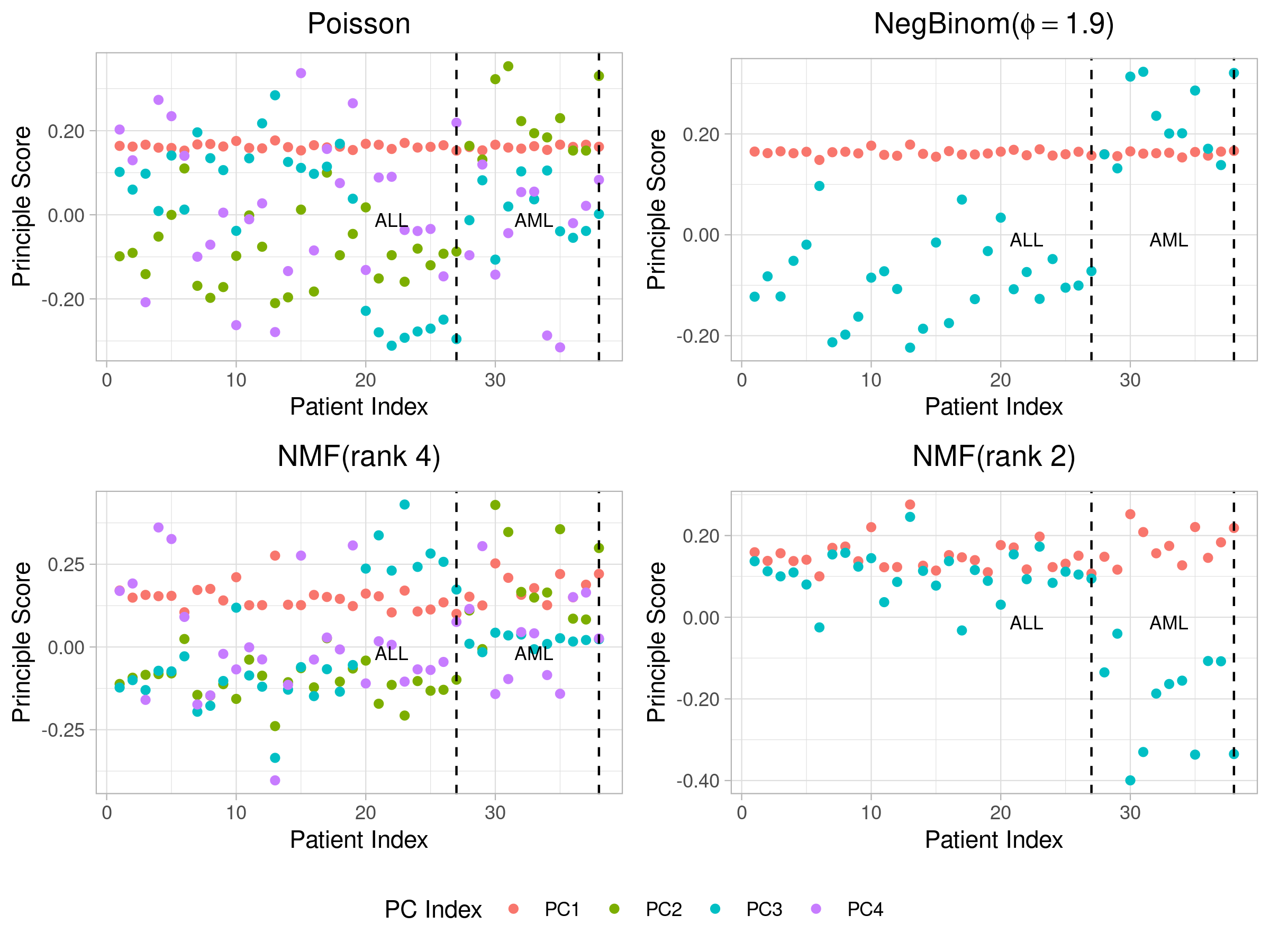}
\caption{\textbf{Leukemia dataset factorization projections.}}
\label{fig:leukemia_pc}
\end{figure}

Lastly, we compare the factorized projected matrix $\hat{V}$ in
Figure~\ref{fig:leukemia_pc}. Since in this particular factorization the three
different families all have different ranks, we plot the projected $\hat{V}$
along sample indices with leukemia types (ALL versus AML) separated by a
vertical line. As Figure~\ref{fig:leukemia_pc} shows, the negative binomial
family with log link can correctly distinguish the two types by using just
two principal components. In contrast, the non-negative matrix factorization
gives spurious results even with a higher factorization rank. The rank two
factorization result for NMF is better compared to NMF rank four
factorization, but the sample types are not as well separated as in
the negative binomial factorization, especially for samples~6, 17, and~29.
Thus we conclude that the NMF (Poisson with identity link) and Poisson family
(with log link) are not capturing the essential information of the matrix
completely; for instance, the factorized principal scores could not effectively
distinguish the correct types. By better taking data dispersion into
consideration, the negative binomial factorization is in general better suited
for positive count gene data factorizations.

\subsection{Case Studies -- Classification}
In the previous case studies we observe notable separability even from DMFs of
low ranks of $2$ and $3$ when compared to other benchmark methods. Here we aim
to further quantify this separability with objective downstream
classification tasks. We conducted these comparisons under three inherently
different classification models---tree, multinomial regression and k-nearest
neighbors (KNN) with $k=9$---using 20 factorized components. For each
classification model we replicated training/testing for 10 runs with equal
split on training (50\%) and on testing (50\%). Performances are summarized
using boxplots on the 10 reproduced multi-class AUC~\citep{hand2001simple},
which has been shown to be invariant to class-skewness.

\begin{figure}[H]
\centering
\includegraphics[width = \textwidth]{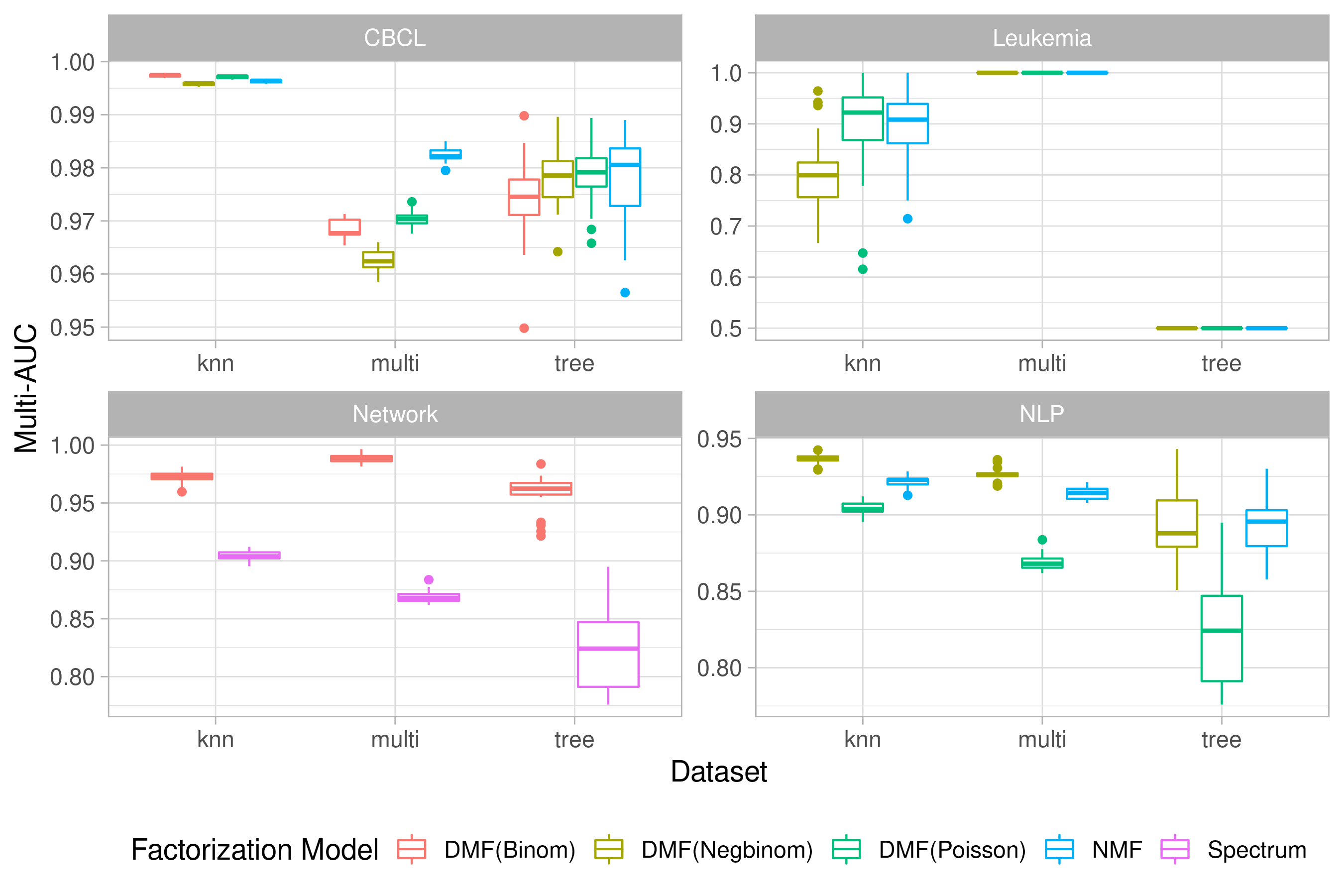}
\caption{\textbf{Multi-class AUC comparison.}}
\label{fig:auc}
\end{figure}

As we can see from Figure~\ref{fig:auc}, the DMF model has better performance
in all the three classification models for the NLP dataset and for the polblog
dataset. Perhaps due to a limited sample size (38 labels for leukemia), the
performance for the leukemia dataset varies significantly on the classification
model being used but overall we see that the DMF and NMF perform similarly
with 20 PCs. For the CBCL face/non-face dataset, we see that the NMF model
performs the best using multinomial regression and tree classification.
However, both of the two methods are less-desired compared to KNN, achieving a
nearly perfect score for the binomial DMF.

\section{Conclusion}
\label{sec:conclusion}
We have proposed a new approach for matrix factorization. The factorization
method generalizes the commonly applied PCA and NMF methods to accommodate
more general statistical families that take data features such as
over dispersion into consideration. To this end, we have generalized a fitting
algorithm with convergence to at least a stationary point being justified, and
have provided methods, supported by theoretical guarantees, to determine
factorization family and rank.

We also obtained encouraging experimental results in our simulated data
studies and in many case studies using benchmark datasets from different
fields. As we show in Section~\ref{sec:studies}, our method can better
explain sparsity patterns in keyword frequency and gene expression
matrices through over-dispersed families such as the negative binomial.
It also provides an alternative factorization perspective to image
recognition by changing the original problem to an optimal binomial weights
allocation problem. It further enriches community detection research in
network analysis by allowing a general factorization method that is applicable
to asymmetric and weighted networks.

Our work offers an important perspective to matrix statistical inference by
extending the probability concept from Gaussian and Poisson to more general
exponential families with different link functions. In fact, we plan to
explore zero inflated models to improve fit and interpretability in face of
matrices with excess zeros. For future research, we intend to enrich our
deviance matrix factorization with a hierarchical structure to model matrix
mixtures to effectively incorporate row or column classification and to
investigate more efficient optimization procedures for very large datasets.

\section{Acknowledgement}
We thank two anonymous referees and the associate editor who provided many
valuable comments that significantly improved the paper.
\if0\blind
{
We also thank Weixuan Xia who provided suggestions and validation on our
consistency proof.
}
\fi

\appendix
%
\section{Factorization Ranks}
\label{appendix:rank}
\begin{figure}[H]
\centering
\includegraphics[width=.45\textwidth]{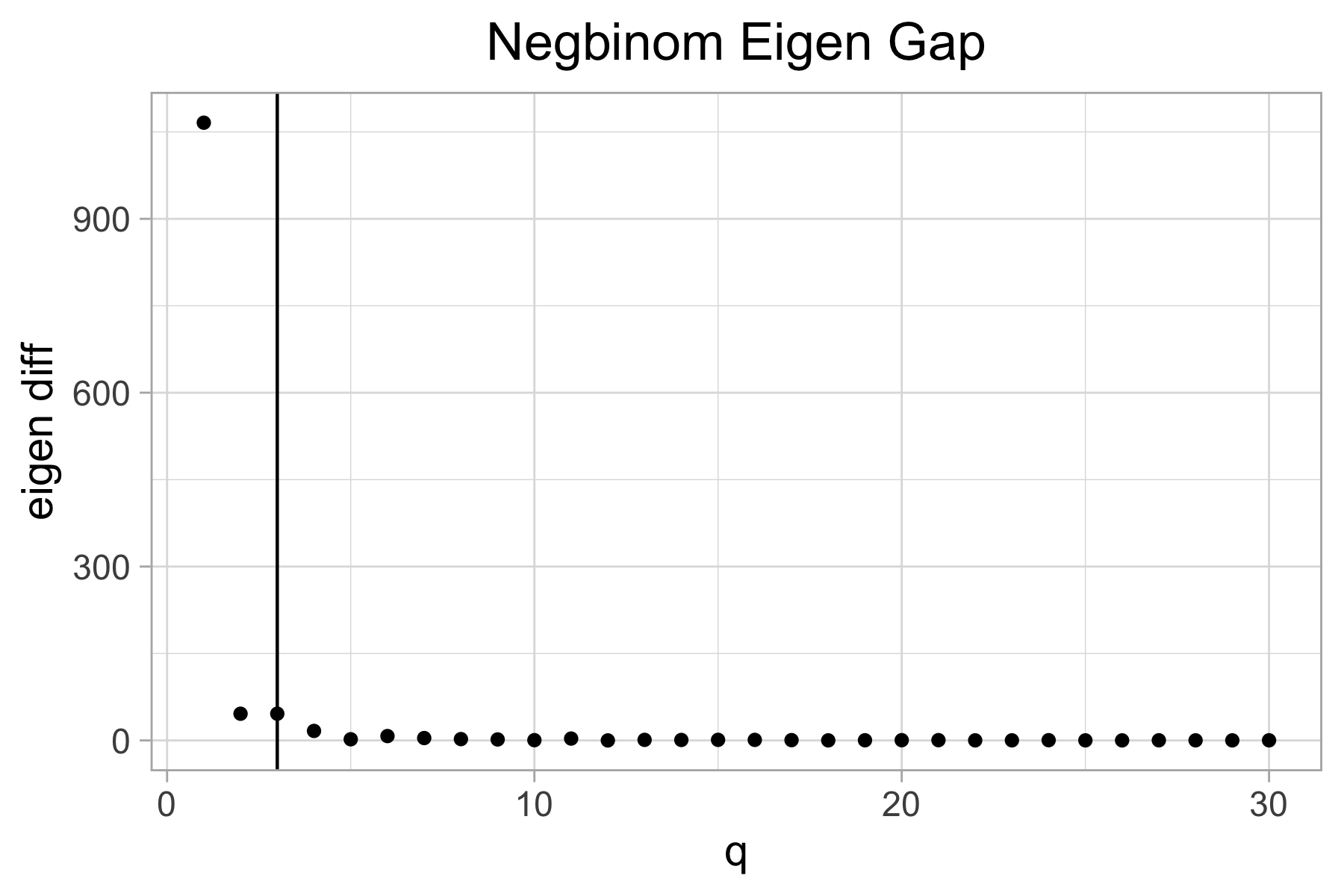}
\hspace{0.5cm}
\includegraphics[width=.45\textwidth]{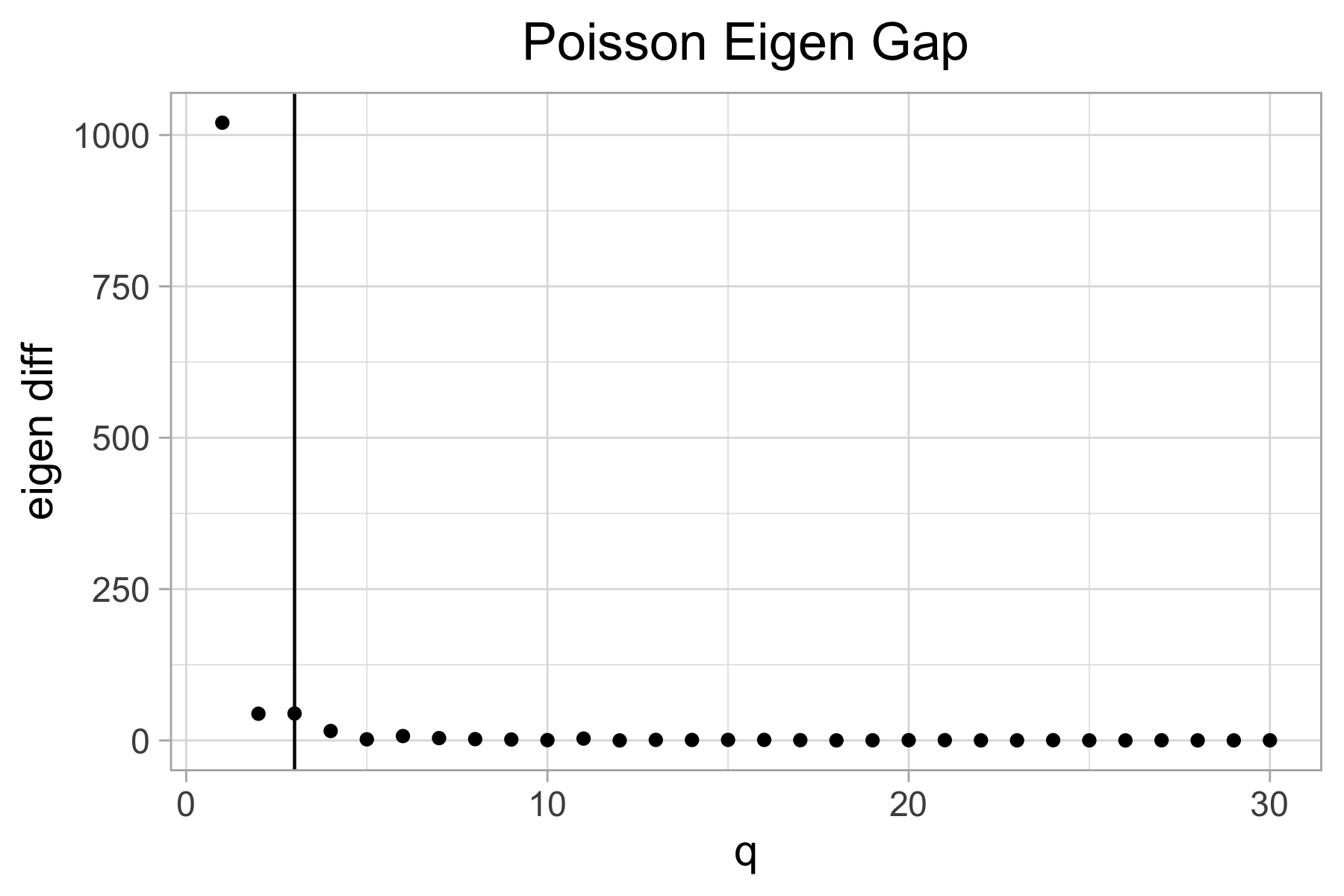}
\includegraphics[width=.45\textwidth]{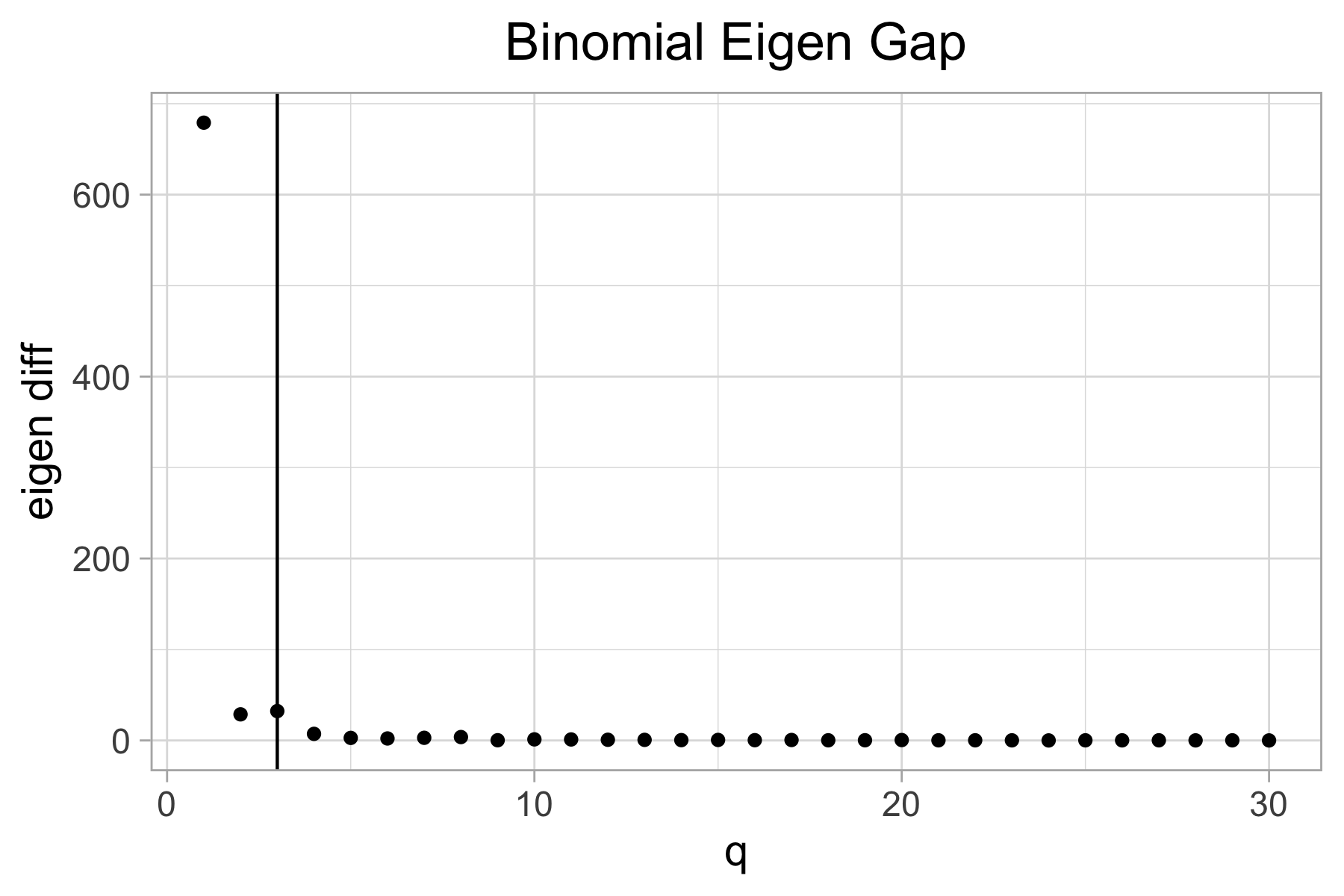}
\hspace{0.5cm}
\includegraphics[width=.45\textwidth]{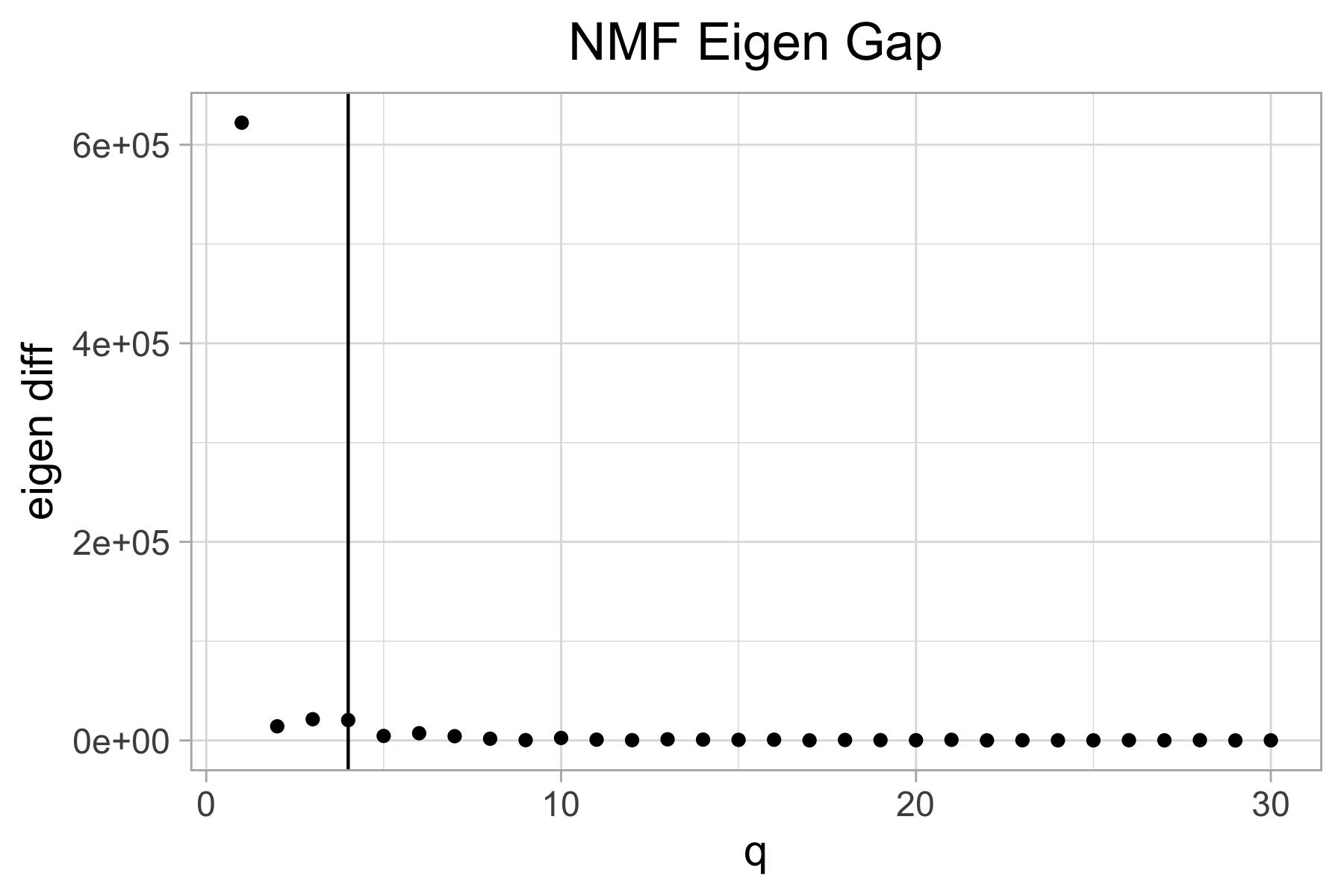}
\caption{\textbf{CBCL factorization eigengap}.}
\label{fig:cbcl_rank}
\end{figure}
\hfill
\begin{figure}[H]
\centering
\includegraphics[width=.33\textwidth,height=1.6in]{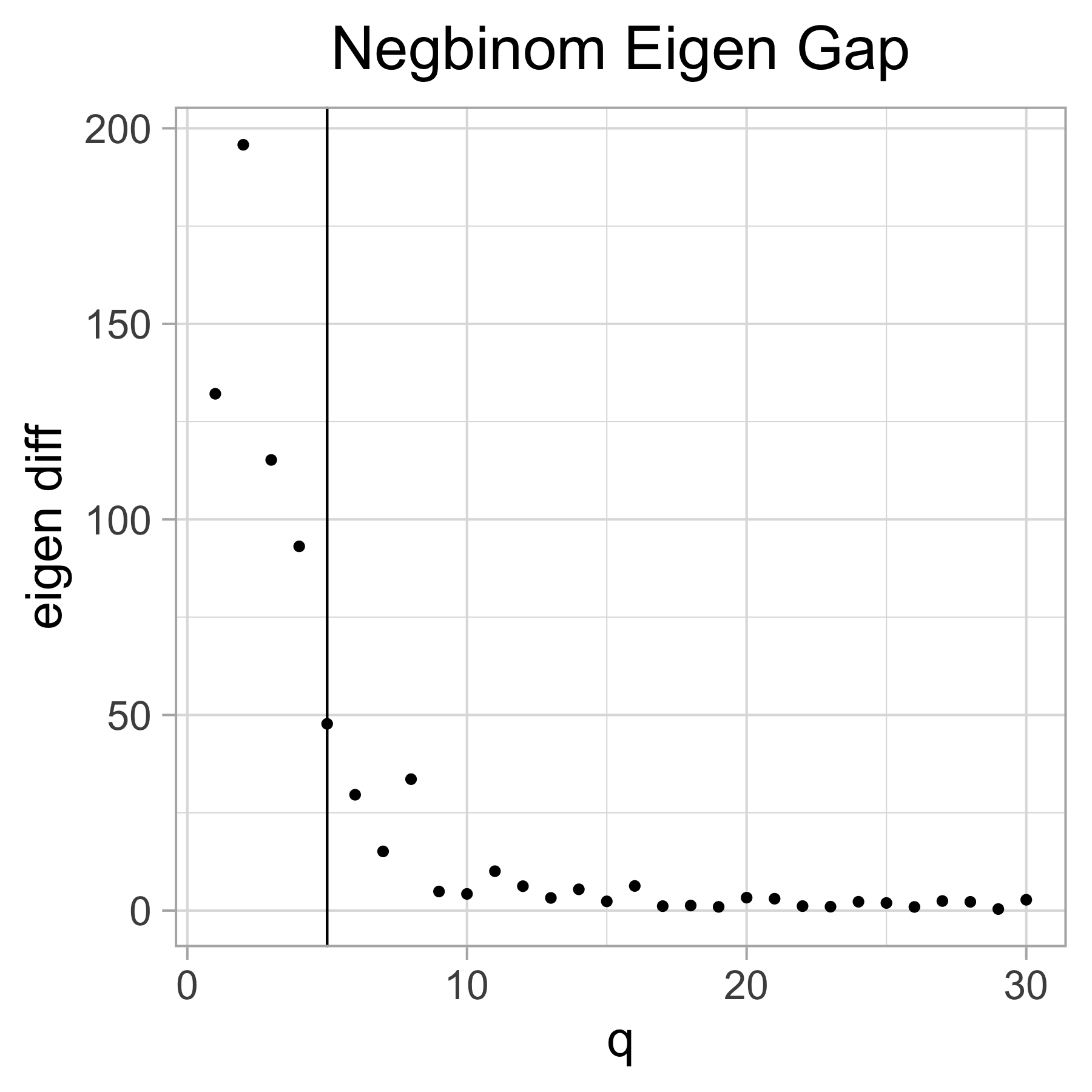}\hfill
\includegraphics[width=.33\textwidth,height=1.6in]{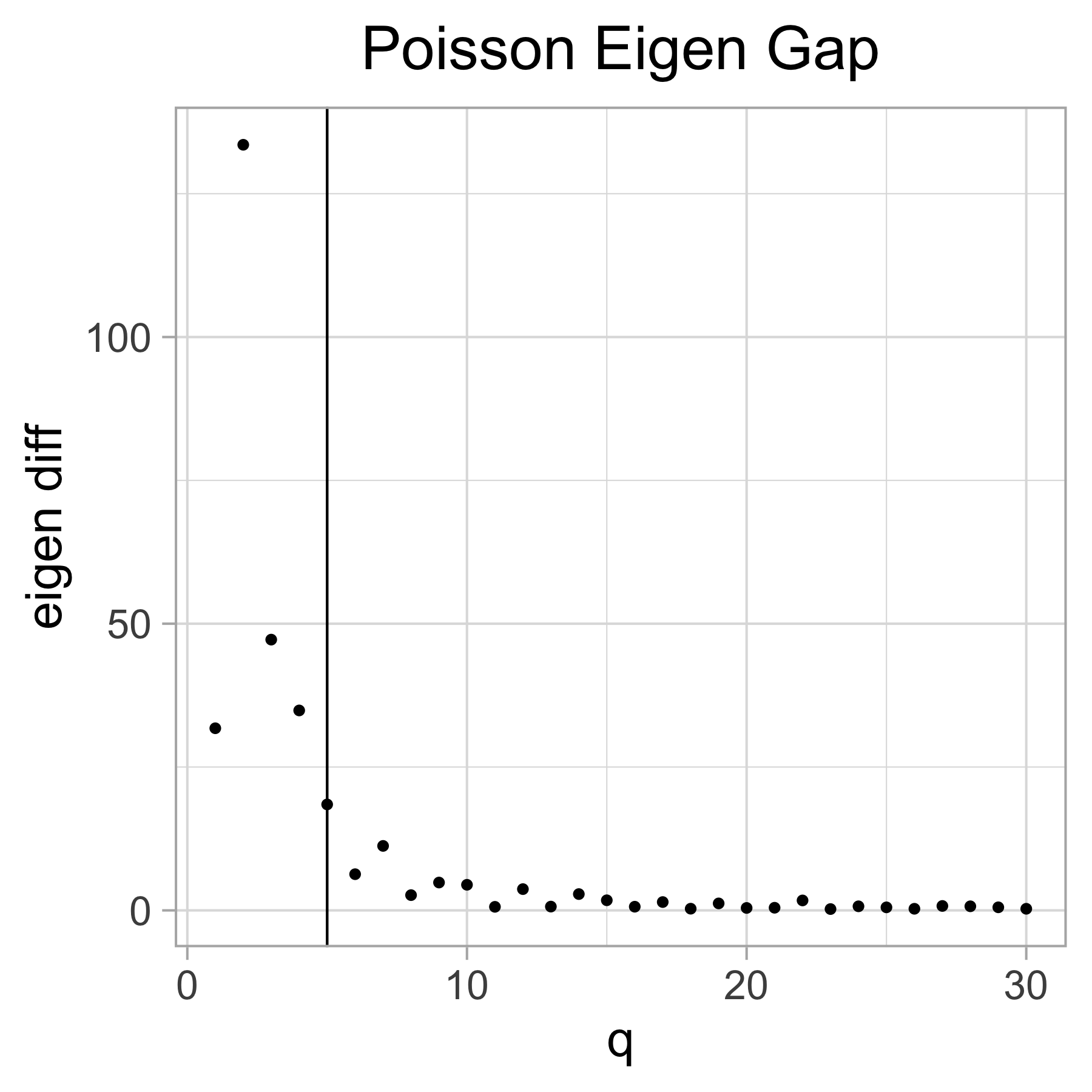}\hfill
\includegraphics[width=.33\textwidth, height=1.6in]{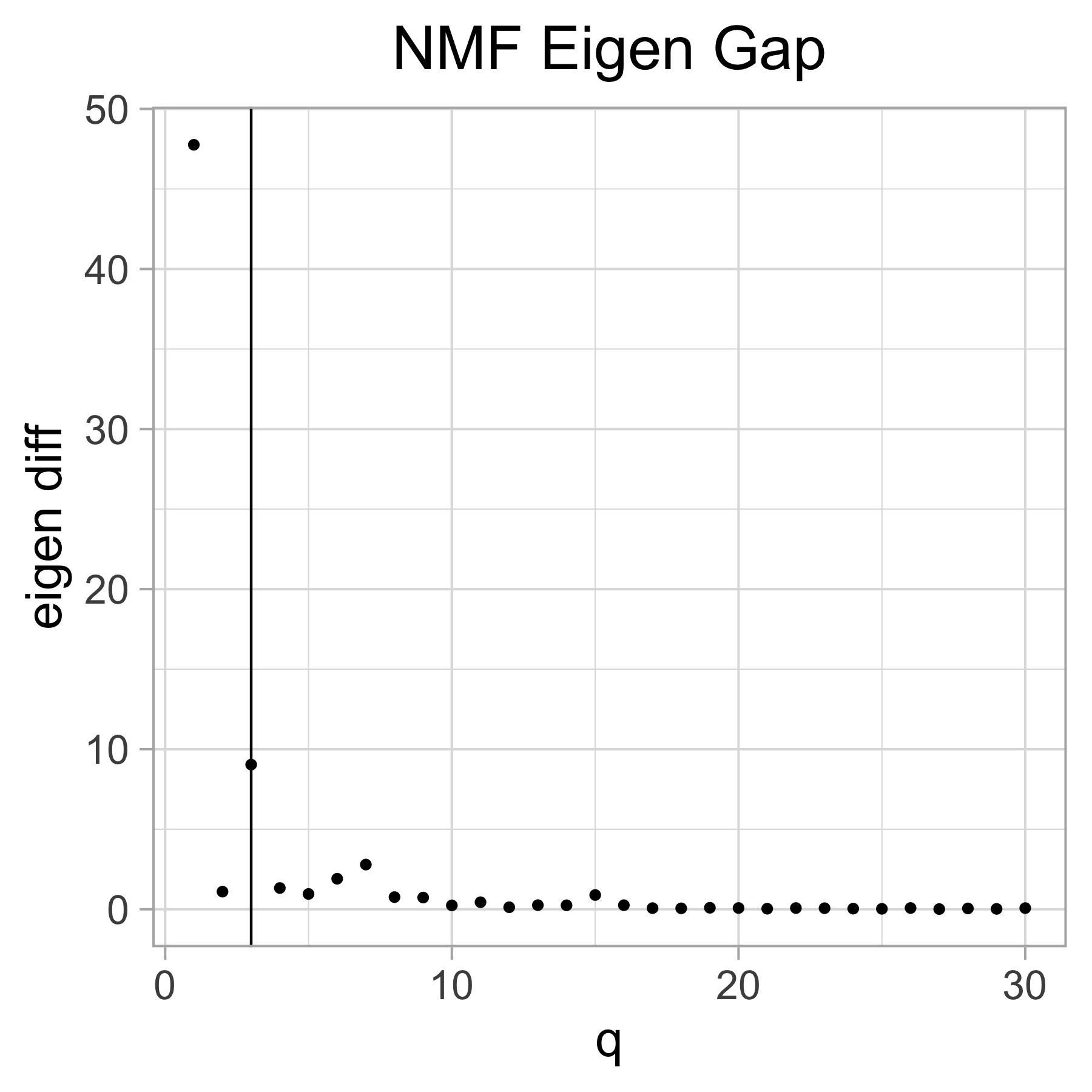}
\caption{\textbf{Keyword frequency factorization eigengap}.}
\label{fig:nlp_rank}
\end{figure}
\hfill
\begin{figure}[H]
\centering
\includegraphics[width=.49\textwidth]{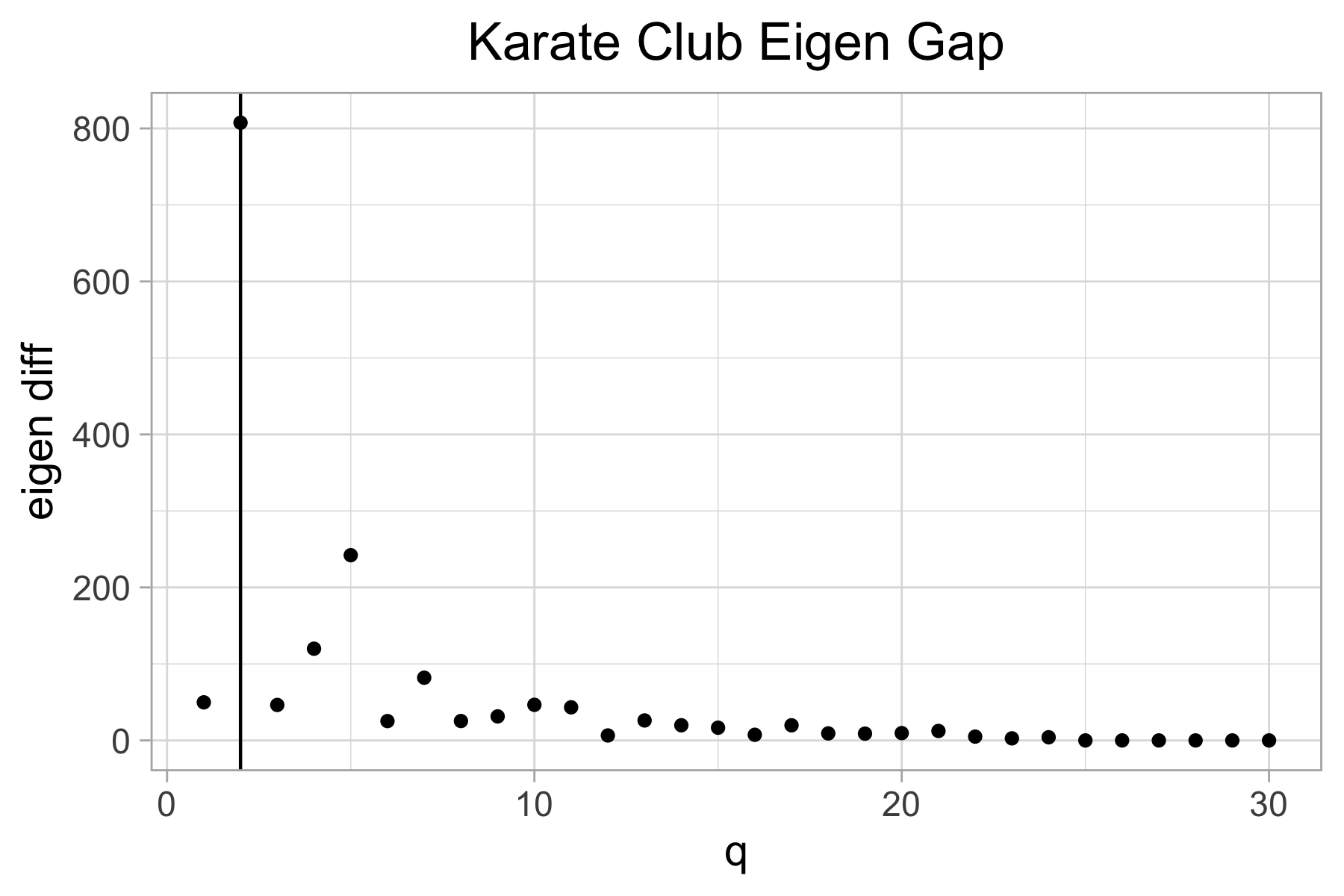}\hfill
\includegraphics[width=.49\textwidth]{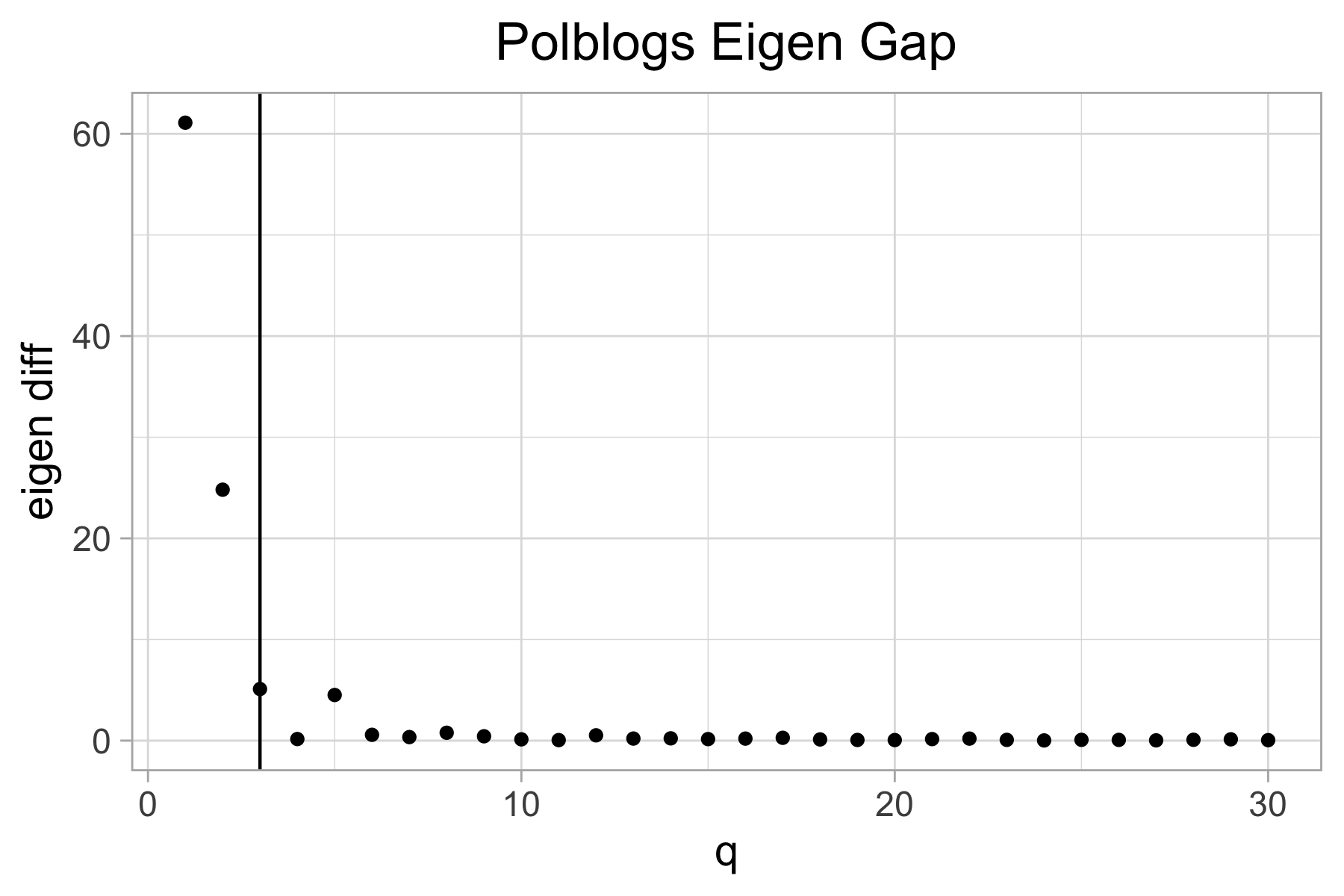}
\caption{\textbf{Network factorization eigengap}.}
\label{fig:network:rank}
\end{figure}
\hfill
\begin{figure}[h]
\centering
\includegraphics[width=.33\textwidth,height=1.6in]{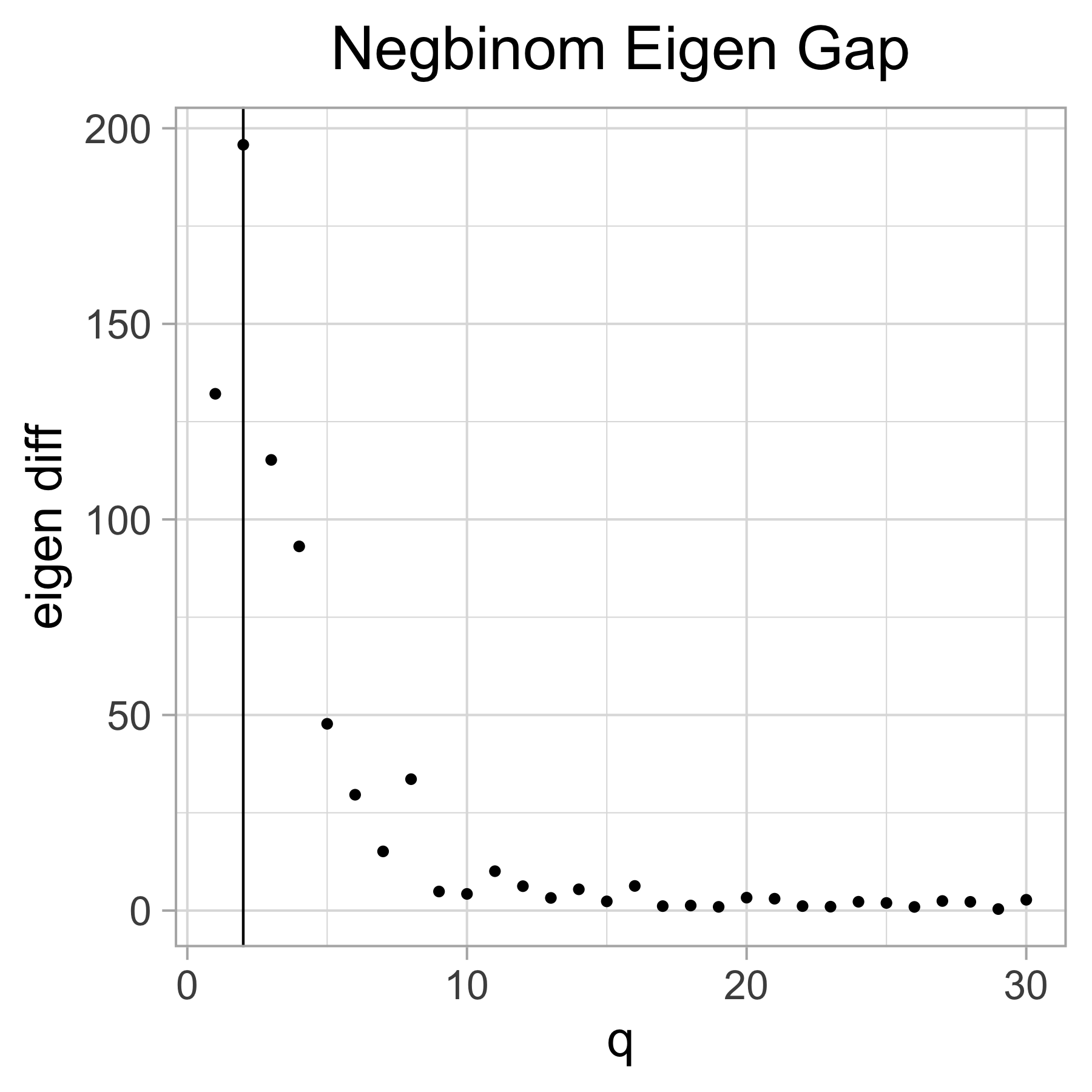}\hfill
\includegraphics[width=.33\textwidth,height=1.6in]{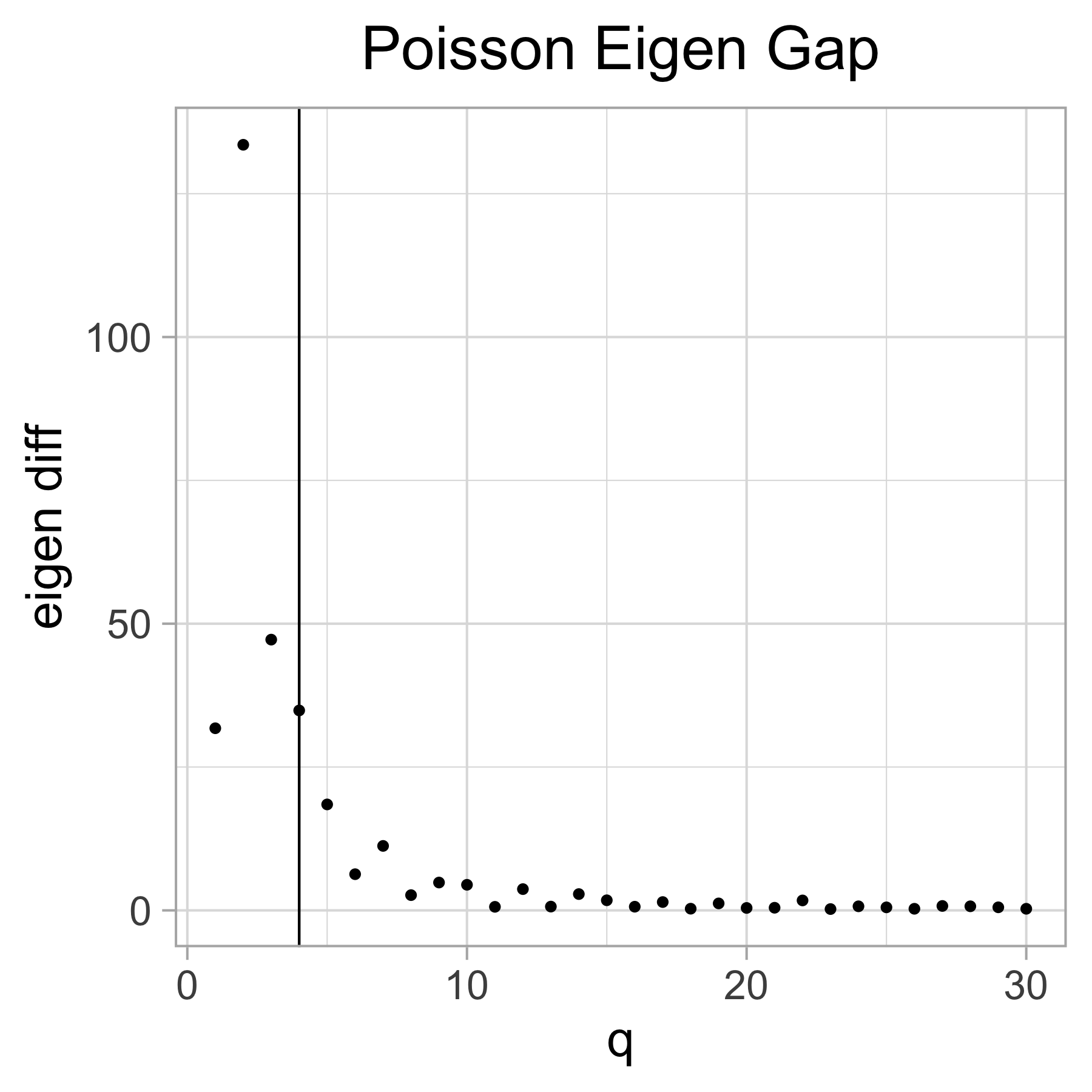}\hfill
\includegraphics[width=.33\textwidth,height=1.6in]{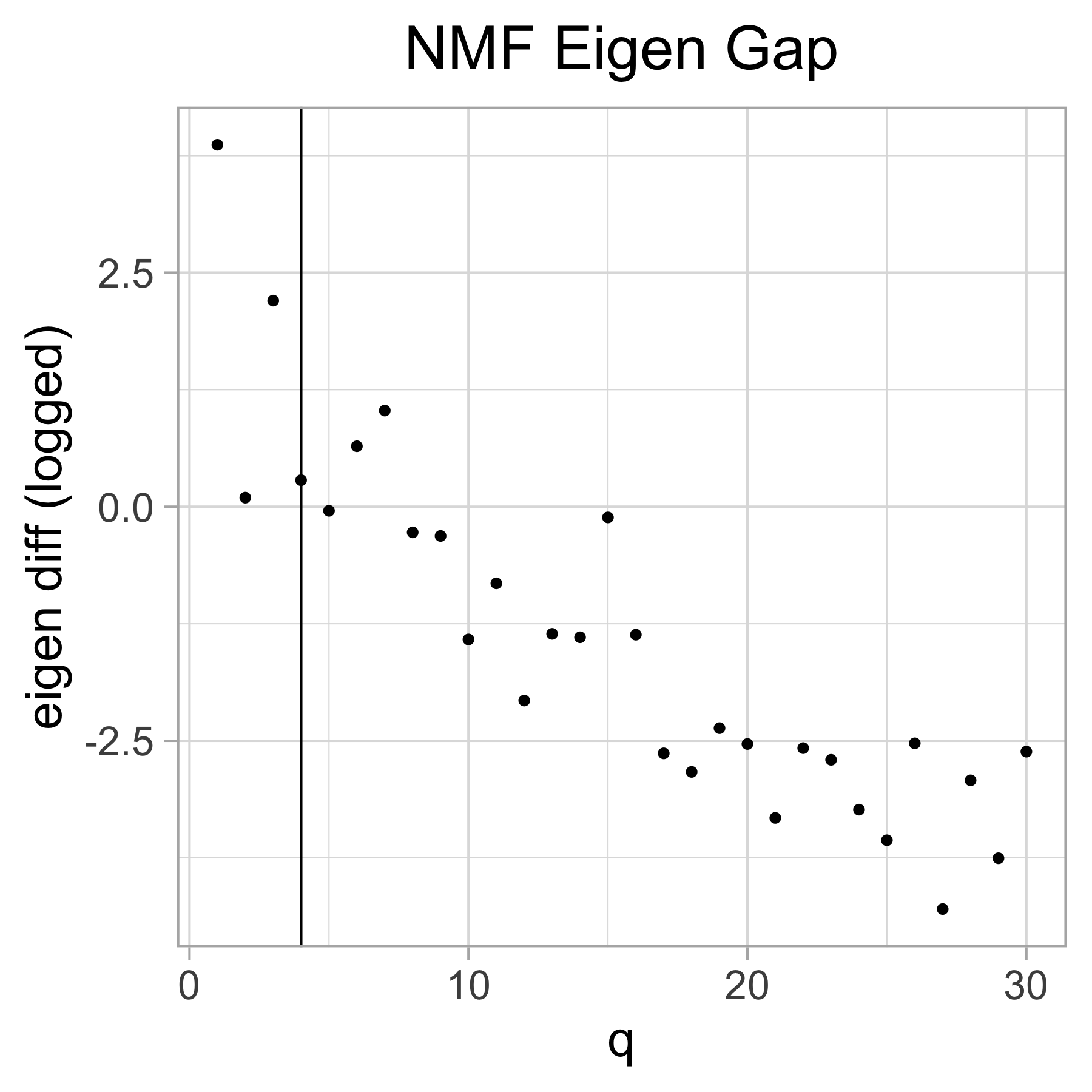}
\caption{\textbf{Leukemia factorization eigengap}.}
\label{fig:leukemia_rank}
\end{figure}
\hfill

%
\if1\online
{
\section{Theoretical Guarantees}

\subsection{Proof of Theorem~\ref{thm:consistency}}

The consistency proof is based upon Theorem~1 in~\citep{1999glmconsistency}
with some additional inequalities being imposed. We start with the following
lemma:
\begin{lemma} \label{le:inverse}
For smooth injection mapping $H(\cdot): \mathbb{R}^q \rightarrow \mathbb{R}^q$
with $H(x_0) = y_0$.
Define $B_{\delta}(x_0)= \{x \in \mathbb{R}^q, \|x-x_0\|_2\leq \delta\}$ and
$S_{\delta}(x_0) = \{x \in \mathbb{R}^q, \|x-x_0\|_2= \delta\}$,
then $\inf_{x\in S_{\delta}(x_0)}\|H(x) - y_0\|_2 \geq r$ implies:
\begin{itemize}
\item $B_r(y_0) \subseteq H(B_{\delta}(x_0)) $
\item $H^{-1}(B_r(y_0)) \subseteq  B_{\delta}(x_0)$
\end{itemize}
\end{lemma}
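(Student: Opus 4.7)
The plan is to combine an open-mapping argument with a connectedness argument on open balls. First, I would invoke Brouwer's invariance-of-domain theorem (a continuous injection between open subsets of $\mathbb{R}^q$ is an open map) to conclude that $H(B_\delta^\circ(x_0))$ is an open subset of $\mathbb{R}^q$ containing $y_0$. A short compactness argument on $\overline{B_\delta(x_0)}$ then shows that the topological frontier $\partial H(B_\delta^\circ(x_0))$ is contained in $H(S_\delta(x_0))$: any frontier point lies in the closed image $H(\overline{B_\delta(x_0)})$ but, by openness of the image of the open ball, cannot already sit in $H(B_\delta^\circ(x_0))$, so it must come from the sphere.

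Next, I would exploit the hypothesis $\inf_{x\in S_\delta(x_0)}\|H(x) - y_0\|_2 \geq r$, which is equivalent to $H(S_\delta(x_0))\cap B_r^\circ(y_0) = \emptyset$. Consequently the decomposition
\[
B_r^\circ(y_0) = \bigl(B_r^\circ(y_0)\cap H(B_\delta^\circ(x_0))\bigr)\cup \bigl(B_r^\circ(y_0)\setminus \overline{H(B_\delta^\circ(x_0))}\bigr)
\]
partitions $B_r^\circ(y_0)$ into two disjoint relatively open subsets, since no point of $B_r^\circ(y_0)$ can sit on the frontier of $H(B_\delta^\circ(x_0))$. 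Because $y_0$ lies in the first piece and $B_r^\circ(y_0)$ is connected, the second piece must be empty, giving $B_r^\circ(y_0)\subseteq H(B_\delta(x_0))$. Extension to the closed ball is routine: for $y^*\in S_r(y_0)$, pick $y_n\to y^*$ with each $y_n$ interior to $B_r(y_0)$, take preimages $x_n\in \overline{B_\delta(x_0)}$, pass to a convergent subsequence by compactness of the closed ball, and invoke continuity of $H$ to identify the limit as a preimage of $y^*$.

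For the second inclusion, injectivity of $H$ closes the argument almost immediately: given $x$ with $H(x)\in B_r(y_0)$, the first inclusion furnishes some $x'\in B_\delta(x_0)$ with $H(x') = H(x)$, and injectivity forces $x = x'$, so $x\in B_\delta(x_0)$.

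The main obstacle I anticipate is the open-mapping step. It relies on Brouwer's invariance-of-domain theorem, a substantive topological result rather than a routine calculation, and one could not obtain it merely by appealing to the inverse function theorem since a smooth injection need not have nonsingular Jacobian everywhere (e.g.\ $x\mapsto x^3$ on $\mathbb{R}$). Once openness of $H$ on open sets is secured, the remainder of the proof consists of standard connectedness and compactness bookkeeping on the pair of balls $B_\delta(x_0)$ and $B_r(y_0)$, together with one invocation of injectivity.
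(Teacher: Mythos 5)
Your proof is correct. Note that the paper does not actually prove this lemma itself --- its ``proof'' is a one-line deferral to the appendix of the cited GLM-consistency reference --- so there is no internal argument to compare against; what you have written is the standard self-contained topological argument that such references rely on, and it fills the gap cleanly. All the key steps check out: Brouwer invariance of domain (correctly identified as unavoidable, since a smooth injection can have singular Jacobian, e.g.\ $x \mapsto x^3$) gives openness of $H(B^\circ_\delta(x_0))$; compactness of the closed ball together with that openness places the frontier of $H(B^\circ_\delta(x_0))$ inside $H(S_\delta(x_0))$, which the hypothesis keeps disjoint from $B^\circ_r(y_0)$; connectedness of $B^\circ_r(y_0)$ then forces $B^\circ_r(y_0) \subseteq H(B^\circ_\delta(x_0))$, and your compactness/limit step correctly extends the inclusion to the closed ball $B_r(y_0)$ as required by the statement (the lemma's balls are closed). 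The second inclusion via injectivity is immediate, as you say. The only cosmetic omission is the trivial edge cases $r=0$ or $\delta=0$, which do not affect the substance.
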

\begin{proof}
The proof is detailed in the appendix of \citep{1999glmconsistency}.
\end{proof}
\subsubsection{The first order conditions}
Denote $\eta_{ij} = \Lambda_i^\top V_j$ where $\Lambda_i$ is the i-th row
of factorized matrix $\Lambda$, $V_j^\top$ is the j-th column of
factorized matrix $V^\top$. As a notational convention, all vectors are
defined as column vectors with their inner products being scalar. Under an
exponential GLM setup, the log likelihood with parameter of interest
$\theta_{ij} = \theta(\eta_{ij})$,
dispersion parameter $a(\phi)$, mean function $\mu_{ij} = m(\eta_{ij})$ and
cumulant function $b_{ij} = b(\eta_{ij})$ could be written in the following
form:
\begin{equation}
\begin{aligned}
    l(\Lambda,V) &= \sum_{i=1}^n \sum_{j=1}^p l (\Lambda_i^\top V_j)\\
     & = \sum_{i=1}^n \sum_{j=1}^p \frac{1}{a(\phi)}  (X_{ij} \theta_{ij} - b(\theta_{ij}) + c(X_{ij}, \phi))
\end{aligned}
\end{equation}
After convergence of Algorithm~\ref{algo:dmf}, denote $l_{ij} = l(\eta_{ij})$.
We will have the following first order condition to be satisfied:
\begin{equation}
\label{eq:foc1}
\begin{aligned}
\frac{\partial l(\Lambda, V)}{\partial V_j^\top} &= \sum_{i=1}^n \frac{\partial l_{ij}}{\partial \theta_{ij}} \frac{\partial \theta_{ij}}{\partial \mu_{ij}} \frac{\partial \mu_{ij}}{\partial \eta_{ij}} \frac{\partial \eta_{ij}}{\partial V_{j}}=\mathbf{0}_q\\
\frac{\partial l(\Lambda, V)}{\partial \Lambda_i} &= \sum_{j=1}^p \frac{\partial l_{ij}}{\partial \theta_{ij}} \frac{\partial \theta_{ij}}{\partial \mu_{ij}} \frac{\partial \mu_{ij}}{\partial \eta_{ij}} \frac{\partial \eta_{ij}}{\partial \Lambda_{i}}=\mathbf{0}_q
\end{aligned}
\end{equation}
Specifically for the components, with $\mathcal{V}(\mu_{ij})$ being the variance function
with mean $\mu_{ij}$ as the scalar input, we have:
\begin{equation}
\begin{aligned}
\frac{\partial l_{ij}}{ \partial \theta_{ij}} &= \frac{1}{a(\phi)}(X_{ij} - b'(\theta_{ij}))\\
 \frac{\partial \theta_{ij}}{\partial \mu_{ij}} &= (\frac{\partial \mu_{ij}}{\partial \theta_{ij}})^{-1} = \frac{1}{b''(\theta_{ij})} = \frac{1}{\mathcal{V}(\mu_{ij})} \\
\frac{\partial \mu_{ij}}{\partial \eta_{ij}} &= (\frac{\partial \eta_{ij}}{\partial \mu_{ij}})^{-1} = \frac{1}{g'(\eta_{ij})} \\
\frac{\partial \eta_{ij}}{\partial V_{j}^\top} &= \Lambda_i^\top, \quad
\frac{\partial \eta_{ij}}{\partial \Lambda_{j}} = V_j^\top
\end{aligned}
\end{equation}
Substituting into~\eqref{eq:foc1}, we have:
\begin{equation}
\begin{aligned}
\frac{\partial l(\Lambda, V)}{\partial V_j} &= \sum_{i=1}^n
\frac{1}{a(\phi)}(X_{ij} - b'(\theta_{ij}))\frac{1}{\mathcal{V}(\mu_{ij}) g'(\mu_{ij})}
\Lambda_i^\top=\mathbf{0}_q \\
\frac{\partial l(\Lambda, V)}{\partial \Lambda_i} &= \sum_{i=j}^p
\frac{1}{a(\phi)}(X_{ij} - b'(\theta_{ij}))\frac{1}{\mathcal{V}(\mu_{ij}) g'(\mu_{ij})}
V_j^\top=\mathbf{0}_q
\end{aligned}
\end{equation}
Observe that $\theta_{ij} = \eta_{ij}$ under canonical link. Thus
$g'(\mu_{ij}) = \frac{1}{\mathcal{V}(\mu_{ij})}$, which further gives us the normal equation
from the least square estimation for fixed $\Lambda_{i}$ or $V_{j}, \forall i = 1...n, j = 1...p$:

\begin{equation}
\label{eq:foc}
\begin{aligned}
\frac{\partial l(\Lambda, V)}{\partial V_j} &= \sum_{i=1}^n (X_{ij} -
m(\Lambda_i^\top V_j))\Lambda_i^\top=\mathbf{0}_q \\
\frac{\partial l(\Lambda, V)}{\partial \Lambda_i} &= \sum_{j=1}^p (X_{ij} -
m(\Lambda_i^\top V_j))V_j^\top=\mathbf{0}_q
\end{aligned}
\end{equation}
As we assume that there exist $\eta$ of rank $q$ that generates the data and maximize the likelihood, our DMF algorithm uses the data initialization to find the corresponding estimate $\hat{\eta}$ as the global maximum of the likelihood.
However, the solution of $(\Lambda, V)$ is still not unique up to a rotational matrix until we impose an identifiability constraints.
In together with our identifiability constraints and the convexity of the exponential log-likelihood, we thus observe that $\hat{\Lambda}_i$ is unique conditional on $\hat{V}$ and that $\hat{V}_{j}$ is unique conditional on $\hat{\Lambda}$. At final algorithm convergence, our estimators are thus designed to maximize the likelihood and to satisfy the F.O.C in Eq~\eqref{eq:foc}:
\begin{equation}
\label{eq:foc2}
    \begin{aligned}
    (\hat{\Lambda}^\top \hat{\Lambda})^{-1} \sum_{i=1}^n \hat{\Lambda}_i^\top(X_{ij} -
    m(\hat{\Lambda}_i^\top \hat{V}_j)) &=\mathbf{0}_q \\
    (\hat{V}^{\top}\hat{V})^{-1} \sum_{j=1}^p \hat{V}_j^\top(X_{ij} -
    m(\hat{\Lambda}_i^\top \hat{V}_j)) &=\mathbf{0}_q
    \end{aligned}
\end{equation}
\subsubsection{Sketch of the proof}
Intuitively speaking, if either $\hat{\Lambda}$ or $\hat{V}$ is known in Equation (\ref{eq:foc2}), the problem reduces to a multiple generalized linear regression model~\citep{1999glmconsistency}. When both of the estimators are unknown, it is thus hopeful to put some constraints on the space of our estimators $\hat{\Lambda}, \hat{V}$ such that the consistency is still attained. In this proof,  we first demonstrate that by restricting the $L_2$ norm of  estimator $V$ with the orthogonal design ($\hat{V}^\top \hat{V} = I_q$), the estimator of $\hat{\Lambda}_i, \forall i = 1, \cdots, n$ is strongly consistent. After which, we can derive the consistency property of estimator $\hat{V}_j, \forall j = 1,\cdots,p$ in a similar manner.
\\
We start by defining an auxiliary  $q$ by $1$ vector $\tilde{G}(\cdot): \mathbb{R}^{q} \rightarrow \mathbb{R}^{q}$
\begin{equation}
    \begin{aligned}
    \label{eq:gtildedmf}
\tilde{G}_{\hat{\Lambda}_n}( \hat{V}_{j}) &=   (\hat{\Lambda}^\top \hat{\Lambda})^{-1} \sum_{i=1}^n \hat{\Lambda}_i^\top [
    m(\hat{\Lambda}_i^\top \hat{V}_j)- m(\Lambda_i^\top V_j)]  \\
\tilde{G}_{\hat{V}_p}(\hat{\Lambda}_{i}) &= (\hat{V}^{\top}\hat{V})^{-1} \frac{1}{\sqrt{p}}\sum_{j=1}^p \hat{V}_{j}[
 m(\hat{\Lambda}_i^\top \hat{V}_j) - m(\Lambda_i^\top V_j) ]\\
    \end{aligned}
\end{equation}
Notice that the $\tilde{G}_{\hat{\Lambda}_n}( \hat{V}_{j}), \tilde{G}_{\hat{V}_p}(\hat{\Lambda}_{i}) $ vectors are defined in a similar manner to the normal equation in Eq~\eqref{eq:foc2} with $X_{ij}$ changed to the expectation $m(\Lambda_i^\top V_j)$. In a comparison to GLM setup that has either $\hat{\Lambda} = \Lambda_n \in \mathbb{R}^{n\times q} $ or $ \hat{V} = V_p \in \mathbb{R}^{p\times q}$ fixed as observed data, \citep{1999glmconsistency} justified the consistency through some inequality relationship based upon the following vectors:
\begin{equation}
\label{eq:gtilde}
    \begin{aligned}
    \tilde{G}_{\textcolor{red}{\Lambda_n}}( \hat{V}_{j}) &=   (\hat{\Lambda}^\top \hat{\Lambda})^{-1} \sum_{i=1}^n \hat{\Lambda}_i^\top [m(\textcolor{red}{\Lambda_i^\top} \hat{V}_j) - m(\Lambda_i^\top V_j)] \\
\tilde{G}_{\textcolor{red}{V_p}}( \hat{\Lambda}_{i}) &=   (\hat{V}^\top \hat{V})^{-1} \frac{1}{\sqrt{p}} \sum_{j=1}^p \hat{V}_j [m(\hat{\Lambda}_i^\top \textcolor{red}{ V_j}) - m(\Lambda_i^\top V_j)
    ]
    \end{aligned}
\end{equation}
For the precise consistency inequality with $\tilde{G}_{\textcolor{red}{V_p}}( \hat{\Lambda}_{i})$ as an example, 
$$\|\tilde{G}_{\textcolor{red}{V_p}}( \hat{\Lambda}_{i})\|_2^2  \underbrace{\geq}_{\textcolor{blue}{ \text{Inequality}\ \ast}}  \tilde{Z}^2_{V_\delta}(\Lambda_{i})\|\Lambda_i -\hat{\Lambda}_i\|_2^2$$
implies the following from Lemma \ref{le:inverse}
$$\hat{\Lambda}_i = \tilde{G}^{-1}_{\hat{V}_p}(\{b: \|b\|_2 \leq Z^2_{V_\delta}(\Lambda_{i}) \})
\subseteq \{b: \|b - \Lambda_{i}\|_2 \leq \delta_{\Lambda_{i}} \}$$
Our detailed proof then follows from building inequality relationship between Eq~\eqref{eq:gtildedmf} and Eq~\eqref{eq:gtilde}. After which, consistency inequality in \citep{1999glmconsistency} can be readily applied.  To facilitate the understanding, we focus on a summary on the proof in this section while postponing the inequality proof in Section~\ref{sec:inequality}.

\subsubsection*{1). The Inequalities.}
With $\|\cdot\|_2$ denote the spectral norm for matrix and euclidean norm(inner product) for vector, we will show that for large $p$, the following inequality holds true $\forall\ i = 1, \cdots n$:
$$\|\tilde{G}_{\hat{V}_p}( \hat{\Lambda}_{i})\|_2^2 \underbrace{\geq}_{\textcolor{blue}{\text{Inequality}\ 	\dagger }} \|\tilde{G}_{\textcolor{red}{V_p}}( \hat{\Lambda}_{i})\|_2^2  \underbrace{\geq}_{\textcolor{blue}{ \text{Inequality}\ \ast}}  \tilde{Z}^2_{V_\delta}(\Lambda_{i})\|\Lambda_i -\hat{\Lambda}_i\|_2^2$$
where $0<\tilde{Z}_{V_\delta}(\Lambda_{i})< \infty$ is a positive and finite constant that introduced later in our detailed inequality proof. 
\noindent If the above inequality holds, we can readily apply Lemma \ref{le:inverse} to obtain:
\begin{equation}
\label{eq:cvg_order2}
\begin{aligned}
\hat{\Lambda}_i \in \tilde{G}^{-1}_{\hat{V}_p}(\{b: \|b\|_2 \leq Z^2_{V_\delta}(\Lambda_{i}) \})
\subseteq \{b: \|b - \Lambda_{i}\|_2 \leq \delta_{\Lambda_{i}} \}
\end{aligned}
\end{equation}
where $\delta_{\Lambda_i}>0$ is an  arbitrary small positive scalar. That is, we can find estimator $\hat{\Lambda}_i$ as the pre-image of injection $\tilde{G}_{\hat{V}_p}(\cdot)$. The obtained estimator is sufficiently close to the true parameter $\Lambda_i$ with euclidean distance controlled by $\delta_{\Lambda_i}$.
\subsubsection*{ 2). The existence of $b \in \mathbb{R}^{q}$.}
\label{subsubsec:exist}
Based upon the definition of the fitted residual $e_{ij} = X_{ij} - m(\Lambda^\top_{i}V_{j})$,
we can construct an $\mathbb{R}^q$ vector  $ a_{i}(\hat{V}_p) =\sum_{j=1}^p (\hat{V}_{j} \hat{V}_{j}^\top)^{-1} \frac{1}{\sqrt{p}}\sum_{j=1}^p \hat{V}_{j} e_{ij} $. By the result of our condition C2, we have 
$$\lim\limits_{p\rightarrow \infty} a_{i}(\hat{V}_p)  = \mathbf{0}_q \text{ a.s. } \forall i =1,\ldots, n $$
The result holds for arbitrary $n$. Consequently, for large $p$ and sufficiently small $\delta_{\Lambda_i}$
\begin{itemize}
    \item $\hat{\Lambda}_i = \tilde{G}^{-1}_{\hat{V}_p}(a_{i}(\hat{V}_p))$ exists.
    \item $\hat{\Lambda}_i = \tilde{G}^{-1}_{\hat{V}_p}(a_{i}(\hat{V}_p))$ belong to $\{\Lambda_{i}: \|\hat{\Lambda}_{i} - \Lambda_{i}\|_2 \leq \delta_{\Lambda_{i}}\}$.
\end{itemize}
That is, the pre-image of $\tilde{G}^{-1}_{\hat{V}_p} (\cdot)$ with argument $a_{i}(\hat{V}_p) =\sum_{j=1}^p (\hat{V}_{j} \hat{V}_{j}^\top)^{-1} \frac{1}{\sqrt{p}} \sum_{j=1}^p \hat{V}_{j} e_{ij}$ is a consistent estimator as $p \rightarrow \infty$. The result again holds for all $i\in 1,\ldots, n$. 
\subsubsection*{3). The relationship to the $\hat{\Lambda}_i$ estimator}
\label{subsubsec:unique}
To justify that $\hat{\Lambda}_i$ is uniquely our DMF estimator, we plug $\hat{\Lambda}_i = \textcolor{red}{\tilde{G}^{-1}_{\hat{V}_p}(a_i(\hat{V}_p))}$ into the convergent condition of our estimator in Eq~\eqref{eq:foc2} with a harmless multiplication of $\frac{1}{\sqrt{p}}$.
\begin{equation}
\label{eq:unique}
\begin{aligned}
        &(\hat{V}^\top \hat{V})^{-1} \sum_{j=1}^p \hat{V}_j^\top(
    m(\textcolor{red}{\hat{\Lambda}_i^\top} \hat{V}_j)- X_{ij}) \\
    &=  \sum_{j=1}^p \hat{V}_j^\top(
    m(\textcolor{red}{\tilde{G}^{-1}_{\hat{V}_p}(a_i(\hat{V}_p))^\top}\hat{V}_j  ) - X_{ij})\\
        &=  \sum_{j=1}^p \hat{V}_j^\top( m( \textcolor{red}{\tilde{G}^{-1}_{\hat{V}_p}(a_i(\hat{V}_p))^\top} \hat{V}_j   )-
        m(\Lambda_i^\top V_j) + m(\Lambda_i^\top V_j)- X_{ij}
    )\\
        &=   \sum_{j=1}^p \hat{V}_j^\top(
        m(\Lambda_i^\top V_j)- X_{ij}) +    \sum_{j=1}^p \hat{V}_j^\top \big(
   m( \textcolor{red}{\tilde{G}^{-1}_{\hat{V}_p}(a_i(\hat{V}_p))^\top}\hat{V}_j)  - m(\Lambda_i^\top V_j) \big) \\
        &=- \sqrt{p}a_{i}(\hat{V}_p) + \sqrt{p}\tilde{G}_{\hat{V}_p} \big(\textcolor{red}{\tilde{G}^{-1}_{\hat{V}_p}(a_i(\hat{V}_p))}\big) = \mathbf{0}_q
\end{aligned}
\end{equation}
where we start the substitution of $\hat{\Lambda}_i$ from the first equality. The second equality added and subtracted $m(\Lambda_i^\top V_j)$. The third equality used the definition of $\tilde{G}^{-1}_{\hat{V}_p}(\cdot)$ as defined in Eq~\eqref{eq:gtildedmf}. 

As shown in the derivation of Eq \eqref{eq:unique}, the strong consistent estimator $\hat{\Lambda}_i = \tilde{G}^{-1}_{\hat{V}_p}(a_i(\hat{V}_p))$ satisfies the F.O.C. equation. Additionally, the estimator is unique conditional on $\hat{V}_p$ satisfying the orthogonality constraint  as we have shown in the argument below Eq \eqref{eq:foc}. As a result, the F.O.C uniquely identifies the almost surely converged estimator $\hat{\Lambda_i} = \tilde{G}^{-1}_{\hat{V}_p}(a_i(\hat{V}_p))$, which is the output of our Algorithm \ref{algo:dmf}:
$$\hat{\Lambda}_i \xrightarrow[p\rightarrow \infty]{a.s} \Lambda_i, \forall i = 1\cdots n$$
We then continue the proof with $\hat{V}_j$ consistency for all $j = 1,\ldots, p$.
\subsubsection*{4). $\hat{V}_j$ convergence}
\noindent By a similar argument, we will show that under large $n, p$, we have $\forall j = 1, \cdots, p $
$$\|\tilde{G}_{\hat{\Lambda}_n}( \hat{V}_{j})\|_2^2 \underbrace{\geq}_{\textcolor{blue}{\text{Inequality}\ 	\Diamond}} \|\tilde{G}_{\textcolor{red}{\Lambda_n}}( \hat{V}_{j})\|_2^2 \underbrace{\geq}_{\textcolor{blue}{\text{Inequality}\ \star}} Z^2_{\Lambda_\delta}(V_{j})\|V_j -\hat{V}_j\|_2^2$$
where $0<Z_{\Lambda_\delta}(V_{j}) <\infty$ is a positive finite constant introduced in the detailed inequality proof. For an arbitrary small positive scalar $\delta_{V_j}>0$, Lemma~\ref{le:inverse} gives us:
\begin{equation}
\label{eq:cvg_order2v}
\begin{aligned}
\tilde{G}^{-1}_{\hat{\Lambda}_n}(\{b: \|b\|_2 \leq Z^2_{\Lambda_\delta}(V_{j}) \})
\subseteq \{b: \|b - V_j\|_2 \leq \delta_{V_{j}} \}
\end{aligned}
\end{equation}
Notice that under conditions~C1 and~C2, with large $n$, we have the vector $a_{j}(\Lambda_n) = (\Lambda^\top \Lambda)^{-1} \Lambda^\top e_j$ converge almost surely by \citep{1999glmconsistency}. Precisely, the result holds $\forall j= 1,\cdots,p $ with its order given by:
\begin{equation}
\label{eq:cvg_order3}
\begin{aligned}
a_{j}(\Lambda_n)  &
= o\Bigg( \frac{(\log \lambda_{\text{min}} ((\sum_{i=1}^n
\Lambda_{i}{\Lambda_{i}}^\top)^{-1}))^{\frac{1+\delta}{2}}}{\lambda_{\text{max}}
((\sum_{i=1}^n \Lambda_{i}{\Lambda_{i}}^\top)^{-1})^{1/2}} \Bigg)\quad a.s.
\end{aligned}
\end{equation}
From the previous result, we obtained $\hat{\Lambda}_i \xrightarrow[p\rightarrow \infty]{a.s} \Lambda_i, \forall i = 1\cdots n$. As a sum of converged vector outer product, $a_{j} (\hat{\Lambda}_{n,p})= (\hat{\Lambda}^\top \hat{\Lambda})^{-1} \hat{\Lambda}^\top e_j $ thus
converge almost surely to $a_{j}(\Lambda_n) = (\Lambda^\top \Lambda)^{-1} \Lambda^\top e_j$ through continuous mapping theorem. Then by taking the limit of $\lim\limits_{n \rightarrow \infty} a_{j} (\Lambda_n)$, we justified that the $\mathbb{R}^q$ vector $a_{j}(\hat{\Lambda}_{n,p})$ converges a.s. for large $n,p$ and the results hold for arbitrary $j$.
\\
Repeat argument of \textit{existence}~\ref{subsubsec:exist} and \textit{uniqueness}~\ref{subsubsec:unique} for estimator
\begin{equation}
\label{eq:uniqueV}
\begin{aligned}
\hat{V}_j = \tilde{G}^{-1}_{\hat{\Lambda}_n}( a_{j} (\hat{\Lambda}_{n,p}))
\end{aligned}
\end{equation}
we can obtain for arbitrary j:
$$\hat{V}_j \xrightarrow[]{a.s} V_j, \text{ as } n.p \rightarrow\infty \text{ given arbitrary j}$$
\subsubsection*{5). Some analysis on the rate of the convergence}
Observe we have Lipschitz continuous function $\tilde{G}_{\Lambda_n}(\cdot)$ such that $\tilde{G}_{\hat{\Lambda}_n}(\hat{V}_{j}) =  a_{j} (\hat{\Lambda}_{n,p})$ is implied in~\eqref{eq:uniqueV}. The order of convergence
is thus established to be the same as the order of convergence for
$ a_{j} (\hat{\Lambda}_{n,p})$, which converges to $a_j(\Lambda_n)$. Deriving the convergence rate for $a_{j}(\hat{\Lambda}_{n,p})$
is however challenging as the convergence rate for $\hat{\Lambda}_{n,p} \rightarrow \Lambda_n$ is challenging to derive
without assumptions on the convergence rate of $e_{ij}$ (Condition C2).
The GLM regression problem however provides us an intuitive lower bound of the convergence rate:

Conditional on $\Lambda_i$ being known, a theoretical rate of convergence by treating $\hat{V}_j$ as a regression problem is
shown in \citep{1979lmconsistency}. Specifically, one can choose $\delta_{V_{j}} = c
\mathop{\text{max}}\limits_{j=1,\ldots,p} \{t_{jj}^{n} |\log
t_{jj}^n|^{1+\delta} \}^{1/2}$
in~\eqref{eq:cvg_order2} where
\begin{itemize}
    \item $c>0$ is a fixed positive number
    \item $\delta>0$ can be made arbitrarily small
    \item  $t_{ij}^n$ is the $j$-th diagonal element of matrix $(\sum_{i=1}^n \Lambda_i \Lambda_i^\top)^{-1}$.
\end{itemize}
Conditional on $\Lambda_i$ being known, the setup give us the following order on $a_{\Lambda_n}$:
\begin{equation}
\label{eq:cvg_order4}
\begin{aligned}
a_{\Lambda_n}  &= \Bigg(\sum_{i=1}^n \Lambda_i \Lambda_i^\top\Bigg)^{-1}
\sum_{i=1}^n \Lambda_i e_i\\
&= o\Bigg( \frac{(\log \lambda_{\text{min}} ((\sum_{i=1}^n
\Lambda_{i}{\Lambda_{i}}^\top)^{-1}))^{\frac{1+\delta}{2}}}{\lambda_{\text{max}}
((\sum_{i=1}^n \Lambda_{i}{\Lambda_{i}}^\top)^{-1})^{1/2}} \Bigg)\quad a.s.\\
&= o\Big(\mathop{\text{max}}\limits_{j=1,\ldots,p} \{t_{jj}^{n} |\log
t_{jj}^n|^{1+\delta} \}^{1/2}\Big)\quad a.s.\\
\end{aligned}
\end{equation}
In practice, since we do not have access to
$\Lambda$ and $V$, the rate of convergence should be larger than we obtained
in~\eqref{eq:cvg_order4}. Obtaining is beyond the focus of this article especially
considering the the missing of any consistency justifications for EPCA from the literature.
We thus encourage interested
readers to proceed with theoretical refinements from here.
\subsubsection{Detailed proof on the Inequalities}
\label{sec:inequality}
\subsubsection*{1). Detailed proof on \textcolor{blue}{\text{Inequality $\ast$}} and \textcolor{blue}{\text{Inequality $\star$}}.}
\begin{proof}
The lower bound is based on~\citep{1999glmconsistency}.
For any $\delta > 0$, denote
$Z_{\Lambda_\delta}(V_{j}) = \inf_{i, \|V_{j} -
\hat{V}_{j}\| \leq \delta} m' ( \Lambda_{i}^\top \tilde{V}_{j})
$. $Z_{\Lambda_\delta}(V_{j})$ is great than 0 due to the boundness of $\Lambda_i$ and continuity of  function $m(\cdot)$, then
observe that by the mean value theorem:
for arbitrary vectors $V^1_{j}, V^2_{j}$ and $\Lambda^1_{i},
\Lambda^2_{i}$, there exist $\tilde{\Lambda}_{i} \in (\Lambda^1_{i},
\Lambda^2_{i})$ and $\tilde{V}_{j} \in (V^1_{j}, V^2_{j})$ that satisfy:
\begin{equation}
\label{eq:mean_value}
\begin{aligned}
G_{\Lambda_n}(V^1_{j}) - G_{\Lambda_n}(V^2_{j})&= \sum_{i=1}^n  m' ( \Lambda_{i}^\top \tilde{V}_{j})
\Lambda_{i}\Lambda_{i}^\top (V^1_{j}- V^2_{j})\\
G_{V_p}(\Lambda^1_{i}) - G_{V_p}(\Lambda^2_{i})&= \sum_{i=1}^n  m' ( \tilde{\Lambda}_{i}^\top V_{j})
V_{j}V_{j}^\top (\Lambda^1_{i}- \Lambda^2_{i})
\end{aligned}
\end{equation}
Now, expanding the Euclidean norm of $\tilde{G}_{\Lambda_{n}}(\hat{V}_{j})$:
\begin{equation}
\begin{aligned}
\|\tilde{G}_{\Lambda_n}(\hat{V}_{j})\|_2^2 &=
\Bigg\| \Bigg(\sum_{i=1}^n \hat{\Lambda}_{i}{ \hat{\Lambda}_{i}}^\top\Bigg)^{-1}
[G_{\Lambda_n}(V_{j}) - G_{\Lambda_n}(\hat{V}_{j})]\Bigg\|_2^2\\
& = (V_{j} - \hat{V}_{j})^\top \sum_{i=1}^n  m' ( \Lambda_{i}^\top \tilde{V}_{j}) \Lambda_{i}{\Lambda_{i}}^\top
\Bigg(\sum_{i=1}^n \Lambda_{i}{\Lambda_{i}}^\top\Bigg)^{-2}\\
& \quad \times \sum_{i=1}^n  m' ( \Lambda_{i}^\top \tilde{V}_{j})  \Lambda_{i}{\Lambda_{i}}^\top (V_{j} - \hat{V}_{j})\\
&\geq (V_{j} - \hat{V}_{j})^\top \sum_{i=1}^n m' ( \Lambda_{i}^\top \tilde{V}_{j}) \Lambda_{i}{\Lambda_{i}}^\top
\Bigg(\sum_{i=1}^n \frac{ m' ( \Lambda_{i}^\top \tilde{V}_{j})}{Z_{\Lambda_{\delta}(V_{j})}}\Lambda_{i}
{\Lambda_{i}}^\top\Bigg)^{-2}\\
& \quad \times \sum_{i=1}^n  m' ( \Lambda_{i}^\top \tilde{V}_{j}) \Lambda_{i}{\Lambda_{i}}^\top
(V_{j} - \hat{V}_{j})\\
&= Z^2_{\Lambda_{\delta}}(V_{j}) \|V_{j} - \hat{V}_{j}\|^2
\end{aligned}
\end{equation}
where the first and only equality holds due to the mean value theorem
in~\eqref{eq:mean_value} and due to the expansion the $L_2$ norm. The last and only inequality
arises from the definition of $Z_{\Lambda_{\delta}}$.
For the other lower bound, we could simply adopt similar strategy to derive:
 \begin{equation}
     \|\tilde{G}_{V_p}(\hat{\Lambda}_{i})\|_2^2  \geq \inf_{j, \|\Lambda_{i} -
\hat{\Lambda}_{i}\| \leq \delta}  \frac{1}{\sqrt{p}} m' ( \tilde{\Lambda}_{i}^\top V_{j}) \|\Lambda_i -\hat{\Lambda}_i \|_2
\label{eq:glm_L_consis}
 \end{equation}
\end{proof}
\subsubsection*{2). Detailed proof on the \textcolor{blue}{$\text{Inequality}\ 	\dagger $} }
\begin{proof}
First, we denote
\begin{itemize}
    \item $W_{V_p}^\top =
    m(\hat{\Lambda}_i V^\top)- m(\Lambda_i V^\top)  \in \mathbb{R}^p$
    \item $W_{\hat{V}_p}^\top = m(\hat{\Lambda}_i \hat{V}^\top)-m(\Lambda_i V^\top) \in \mathbb{R}^p$
    \item $R = (R_1, R_2, \cdots, R_p) \in \mathbb{R}^p$ with $R_j = \frac{
    m(\hat{\Lambda}_i^\top V_j) - m(\Lambda_i^\top V_j) }{
    m(\hat{\Lambda}_i^\top \hat{V}_j)-  m(\Lambda_i^\top V_j)}$
\end{itemize}
With $V^\top V = I_q$, the desired inequality becomes:
$$ \|\tilde{G}_{\textcolor{red}{V_p}}( \hat{\Lambda}_{i})\|_2^2  = \frac{1}{p}\|\hat{V}^{\top} \text{Diag}(W_{\hat{V}_{p}}) R\|_2^2\leq \frac{1}{p}\|R\|_2^2 \|\hat{V}^{\top} W_{\hat{V}_{p}}\|_2^2
=\|R\|_2^2\|\tilde{G}_{\textcolor{red}{\hat{V}_p}}( \hat{\Lambda}_{i})\|_2^2
$$
Specifically:
\begin{equation}
    \begin{aligned}
    \|\tilde{G}_{\textcolor{red}{V_p}}( \hat{\Lambda}_{i})\|_2^2 &=  \frac{1}{p}\|(\hat{V}^\top \hat{V})^{-1}  \hat{V}^{\top} W_{V_p}\|_2^2 \\
    & = \frac{1}{p}\|\hat{V}^{\top} W_{V_p}\|_2^2 \\
    &= \frac{1}{p}\|\hat{V}^{\top} \text{Diag}(W_{\hat{V}_{p}}) R\|_2^2\\
    &\leq \frac{1}{p} \|\hat{V}^{\top} \text{Diag}(W_{\hat{V}_{p}}) \|^2_F \| R\|^2_2\\
    & =\frac{1}{p}\|\hat{V}^{\top} W_{\hat{V}_{p}}\|_2^2  \|R\|_2^2 \\
    & = \|R\|_2^2\|\tilde{G}_{\textcolor{red}{\hat{V}_p}}( \hat{\Lambda}_{i})\|_2^2
    \end{aligned} 
\end{equation}
Now combine with the previous inequality result:
\begin{equation}
    \begin{aligned}
     \|\tilde{G}_{\textcolor{black}{\hat{V}_p}}( \hat{\Lambda}_{i})\|_2^2 &\geq \frac{1}{\|R\|_2^2}Z^2_{V_\delta}(\Lambda_{i})\|\Lambda_i -\hat{\Lambda}_i\|_2^2 \\
&\geq (\text{inf}_{j, \|\Lambda_i - \hat{\Lambda}_i|\leq \delta} \frac{m'(\hat{\Lambda}_{i}^\top \tilde{V}_{j})}{\|R\|_2})^2  \|\Lambda_i - \hat{\Lambda}_i\|_2^2
    \end{aligned}
\end{equation}
Then with $\tilde{Z}_{V_{\delta}} = \inf_{j, |\Lambda_i - \hat{\Lambda}_i| \leq \delta} \frac{m'(\hat{\Lambda}_{i}^\top \tilde{V}_{j})}{\|R\|_2} $
$$\|\tilde{G}_{\textcolor{red}{\hat{V}_p}}( \hat{\Lambda}_{i})\|_2^2 \geq \tilde{Z}^2_{V_\delta}(\Lambda_{i})\|\Lambda_i -\hat{\Lambda}_i\|_2^2$$
Denote $\delta_{\Lambda_i}$ as an arbitrary small number, lemma  \ref{le:inverse} gives us:
\begin{equation}
\begin{aligned}
\tilde{G}^{-1}_{\hat{V}_p}(\{b: \|b\|_2 \leq \tilde{Z}_{V_\delta}(\Lambda_{i}) \})
\subseteq \{\Lambda_{i}: \|\hat{\Lambda}_{i} - \Lambda_{i}\|_2 \leq \delta_{\Lambda_{i}} \}
\end{aligned}
\end{equation}
As a result, we can obtain estimator $\hat{\Lambda}_i $ by solving the first
order equation conditional on $\hat{V}_p$. The resulting estimator
$\hat{\Lambda}_i $ satisfies:
$\|\hat{\Lambda}_{i} - \Lambda_{i}\|_2 \leq \delta_{\Lambda_{i}}$, for
arbitrary $\delta_{\Lambda_{i}}>0$.
\end{proof}
\subsubsection*{3). Detailed proof on \textcolor{blue}{Inequality $\diamond$}}
\begin{proof}
    We can now study the convergence of estimator $\hat{V}_j$ by deriving a similar upper bound:
\begin{equation}
\begin{aligned}
\|\tilde{G}_{\Lambda_n}(\hat{V}_j)\|_2 &= \Bigg\| \Bigg(\sum_{i=1}^n
\hat{\Lambda}_{i}{\hat{\Lambda}_{i}}^\top\Bigg)^{-1} \sum_{i=1}^n \hat{\Lambda}_{i}
(m(\Lambda_i^{\top} \hat{V}_j)- m(\Lambda_i^{\top}  V_j)) \Bigg\|^2_2\\
&\leq \Bigg\|\Bigg(\sum_{i=1}^n \hat{\Lambda}_{i}{\hat{\Lambda}_{i}}^\top\Bigg)^{-1}
\sum_{i=1}^n \hat{\Lambda}_{i} (m(\Lambda_i^{\top} \hat{V}_j) - m(\hat{\Lambda}_i^{\top}\hat{V}_j))\Bigg\|^2_2 + \\
&\quad \Bigg\|\Bigg(\sum_{i=1}^n \hat{\Lambda}_{i}{\hat{\Lambda}_{i}}^\top\Bigg)^{-1}
\sum_{i=1}^n \hat{\Lambda}_{i} (m(\hat{\Lambda}_i^{\top}
\hat{V}_j) - m(\Lambda_i^{\top} V_j) ) \Bigg\|^2_2 \\
& =  \Bigg\| \Big( \hat{\Lambda}^\top \hat{\Lambda}\Big)^{-1} \sum_{i=1}^n  \hat{\Lambda}_i m'(\tilde{\eta}_{ij})
(\hat{\Lambda}_i - \Lambda_i) V_j \Bigg\|^2_2
 + \big\| \tilde{G}_{\hat{\Lambda}_n} (\hat{V}_j) \big\|^2_2
\end{aligned}
\end{equation}
The first inequality is an application of Minkowski inequality after adding and
subtracting $m(\Lambda_i^\top V_j)$;
the last equality used the mean value theorem with 
$$\tilde{\eta}_{ij} \in (\min(\Lambda_i^\top \hat{V}_j), \max(\hat{\Lambda}_i^\top \hat{V}_j))$$
Finally, as established previously, $\hat{\Lambda}_i \xrightarrow[p \rightarrow \infty]{a.s} \Lambda_i, \forall i$, we have the first term bounded with small positive scalar $\delta_{\Lambda_i}>0$. Applying the submultiplicative of the spectral norm,  we thus justified that for large $p$:
$$\|\tilde{G}_{\Lambda_n}(\hat{V}_j)\|_2^2 \leq \|\tilde{G}_{\hat{\Lambda}_n}(\hat{V}_j)\|_2^2, \forall j = 1, \cdots, p$$
\end{proof}
\subsection{Proof of Theorem~\ref{thm:family}}
Under conditions F1--F4 and using Taylor expansion, $\forall k = 1, \ldots, G$,
$$
\begin{aligned}
R_{n,p}^1(u_k) &=  \frac{1}{\sqrt{np}}  \sum_{i=1}^n \sum_{j=1}^p 1(\eta_{ij} \leq u_k) [X_{ij} - g^{-1}(\hat{\eta}_{ij})]\\
&= \frac{1}{\sqrt{np}} \sum_{i=1}^n \sum_{j=1}^p 1(\eta_{ij} \leq u_k)[X_{ij}
- g^{-1}(\eta_{ij})] \\
& \qquad + \frac{1}{\sqrt{np}} \sum_{i=1}^n \sum_{j=1}^p 1(\eta_{ij} \leq
u_k)[g^{-1}(\eta_{ij}) - g^{-1}(\hat{\eta}_{ij})]\\
& = \frac{1}{\sqrt{np}}  \sum_{i=1}^n \sum_{j=1}^p 1(\eta_{ij} \leq
u_k)[X_{ij} - g^{-1}(\eta_{ij})] \\
& \quad + \frac{1}{\sqrt{np}}  \sum_{i=1}^n \sum_{j=1}^p 1(\eta_{ij} \leq
u_k)[m'(\hat{\eta}_{ij})(\hat{\eta}_{ij} - \eta_{ij})]
+ O\big((\hat{\eta}_{ij} - \eta_{ij})^2\big).
\end{aligned}
$$
Now note that
\begin{enumerate}
\item Under conditions~C1 -~C4, after convergence of
Algorithm~\ref{algo:dmf}, the consistency result of
Theorem~\ref{thm:consistency} guarantees that the second term is absorbed
for large $n$ and $p$.

\item Under $H_0$,
$T_k = \{1(\eta_{ij} \leq u_k)[X_{ij} - m(\eta_{ij})]\}_{k=2}^G$ has zero mean
and variance given by
$$
\begin{aligned}
\sigma^2_{T_k} &= \text{Var}[R_{n,p}(u_k)] + \text{Var}[R_{n,p}(u_{k-1})] - 2
\text{Cov}[R_{n,p}(u_k), R_{n,p}(u_{k-1})] \\
&= \sum_{l=k-1}^k \Exp\big\{1(\eta_{ij} \leq u_l)
[X_{ij} - m(\eta_{ij})]^2 \big\} \\
& \qquad - 2 \Exp\big\{1(\eta_{ij} \leq u_k)1(\eta_{ij} \leq u_{k-1})
[X_{ij} - m(\eta_{ij})]^2 \big\} \\
&= \phi_{ij} \int_{-\infty}^{u_k}\mathcal{V}(X_{ij} | \eta_{ij} = t) F(dt) - \phi_{ij} \int_{-\infty}^{u_{k-1}}\mathcal{V}(X_{ij} | \eta_{ij} = t) F(dt)\\
& = \phi_{ij} \int_{u_{k-1}}^{u_{k}} \mathcal{V}(X_{ij} | \eta_{ij} = t) F(dt)\\
\end{aligned}
$$

\item Under condition~F3, we have from the SLLN that element-wisely
$$D^1_{n,p} \xrightarrow[n,p\rightarrow \infty]{a.s} D_{n,p}$$

\item Under condition F2, applying Slutsky's theorem, we have from the
Lindeberg CLT that
$$\hat{T}(n,p) = (T_{n,p}^1)^\top (D^1_{n,p})^{-1} T_{n,p}^1 \xrightarrow[n,p \rightarrow \infty]{d} \chi^2_{G-1}.$$
\end{enumerate}
}
\subsection{Proof of Proposition~\ref{prop:rank}}
\label{appendix:rankproof}
\begin{proof}
\noindent Under our DMF setup, we assume $X_{ij} \sim F(\mu_{ij} = m(\eta_{ij}))$
where $m(\cdot)$ is the inverse link $g^{-1}(\cdot)$ and $F$ is a member of the exponential
family that specifies the relationship between $m(\eta_{ij})$ and
$\mathcal{V}(m(\eta_{ij}))$. From Theorem~\ref{thm:consistency}, we defined
$e_{ij} = X_{ij} - m(\eta_{ij})$ with the following properties:
\begin{itemize}
    \item $\eta_{ij}= \Lambda_i^\top V_j$ is deterministic;
    \item $e_{ij}$ is random with $\Exp[e_{ij}] = 0$ and
    $\text{VAR}(e_{ij}) = \mathcal{V} (m(\eta_{ij}))$.
\end{itemize}
\noindent Based upon a full rank DMF fit $q = p$,  we can obtain estimator $\hat{\eta} = \hat{\Lambda} \hat{V}$.
From the F.O.C, we can have:
\begin{equation}
\begin{aligned}
\frac{\partial l(\Lambda, V)}{\partial V_j} &= \sum_{i=1}^n (X_{ij} -
m(\hat{\Lambda}_i^\top \hat{V}_j)) \hat{\Lambda}_i^\top= \mathbf{0}_q \\
\frac{\partial l(\Lambda, V)}{\partial \Lambda_i} &= \sum_{j=1}^p (X_{ij} -
m(\hat{\Lambda}_i^\top \hat{V}_j))\hat{V}_j^\top= \mathbf{0}_q
\end{aligned}
\end{equation}
The $\Lambda_i$ F.O.C is:
\begin{equation}
\begin{aligned}
\mathbf{0}_q&=\sum_{j=1}^p (X_{ij} -
m(\hat{\Lambda}_i^\top \hat{V}_j)) \hat{V}_j^\top  \\
&=\sum_{j=1}^p (X_{ij} -
m(\hat{\Lambda}_i^\top \hat{V}_j)
+m(\Lambda_i^\top V_j) - m(\Lambda_i^\top V_j)) \hat{V}_j^\top \\
&= \sum_{j=1}^p ( \underbrace{X_{ij} -m(\Lambda_i^\top V_j)}_{e_{ij}}
+m(\Lambda_i^\top V_j) - m(\hat{\Lambda}_i^\top \hat{V}_j)) \hat{V}_j^\top \\
&= \sum_{j=1}^p \big( e_{ij} - m'(\eta_{ij}) (\hat{\eta}_{ij} -\eta_{ij})  - O(\frac{1}{2}m''(\eta_{ij}) (\hat{\eta}_{ij}-\eta_{ij})^2) \big) \hat{V}_j^{\top}
\label{eq:rank_foc}
\end{aligned}
\end{equation}
where the fourth equality used Taylor expansion:
$$m(\hat{\Lambda}_i^\top \hat{V}_j) \equiv  m(\hat{\eta}_{ij}) = m(\eta_{ij}) + m'(\eta_{ij})(\hat{\eta}_{ij} - \eta_{ij}) + O(\frac{1}{2}m''(\eta_{ij}) (\hat{\eta}_{ij} - \eta_{ij})^2)$$
Now keep $\sum_{j=1}^p \hat{V}_j^\top e_{ij} = \hat{V}^\top e_i$ to the right side of the equation and denote $a_{ij} = m'(\tilde{\eta}_{ij}) (\eta_{ij} -\hat{\eta}_{ij}) + O(\frac{1}{2}m''(\eta_{ij})(\hat{\eta}_{ij} - \eta_{ij})^2)$, Eq~\eqref{eq:rank_foc} reduces to 
\begin{equation}
\begin{aligned}
    \sum_{j=1}^p  a_{ij} \hat{V}_j^{\top}= \hat{V}^\top a_i  &= \hat{V}^{\top} e_i \\
\end{aligned}
\end{equation}
Since $\hat{V}$ is a $p\times p $ matrix full rank with $q=p$, we can multiply $\hat{V}^{-\top}$ on both sides and obtain:
\begin{equation*}
    \begin{aligned}
        \hat{V}^{-\top} \hat{V}^\top a_i &= a_i = \hat{V}^{-\top} \hat{V}^\top e_i = e_i\\
    \end{aligned}
\end{equation*}
$a_i, e_i$ are vector of $\mathbb{R}^p$ and we have element-wisely $\forall i = 1,\ldots,n, \forall j= 1,\ldots, p$:
\begin{equation}
    \begin{aligned}
     &m'(\eta_{ij}) (\hat{\eta}_{ij} -\eta_{ij}) + O(\frac{1}{2}m''(\eta_{ij})(\hat{\eta}_{ij} - \eta_{ij})^2) =a_{ij} =  e_{ij}\\
     \Rightarrow & \hat{\eta}_{ij} = \eta_{ij} + \frac{1}{m'(\eta_{ij})} e_{ij} + O(\frac{1}{2} \frac{m''(\eta_{ij})}{m'(\eta_{ij})}(\hat{\eta}_{ij} - \eta_{ij})^2) 
    \end{aligned}
    \label{eq:rank_eijorder}
\end{equation}
In a rank $p$ fit, we have saturated estimation and thus $m(\hat{\eta}_{ij}) = X_{ij}$, which implies $g \circ m(\hat{\eta}_{ij}) = \hat{\eta}_{ij}=  g(X_{ij})$. Taylor expanding $g(X_{ij})$ at $\mu_{ij}$, we have:
\begin{equation*}
    \begin{aligned}
            \hat{\eta}_{ij} = g(X_{ij}) &= g(\mu_{ij}) + O(g'(\mu_{ij})(X_{ij} - \mu_{ij}))\\
            & = \eta_{ij} + O(g'(\mu_{ij})(X_{ij} - \mu_{ij}))
    \end{aligned}
\end{equation*}
Use the fact that $\frac{1}{m'(\eta_{ij})} = g'(\mu_{ij})$, we can simplify the order term as:
$$\hat{\eta}_{ij}= \eta_{ij} + \frac{1}{m'(\eta_{ij})} e_{ij} + O(m''(\eta_{ij}) g'(\mu_{ij})e_{ij}^2)$$
Now observe that
\begin{itemize}
    \item From condition~C4 and Cauchy inequality, we have $|\eta_{ij}|\leq \|\Lambda_i\|_2 \|V_j^\top\|_2< \infty$.
    \item The function  $m''(\eta_{ij}) (g' \circ m(\eta_{ij}))$ is finite by the continuity of $m''(\cdot)$ and $g' \circ m(\cdot)$ plus the finiteness of $\eta_{ij}$.
    \item $e_{ij}^2$ is random and converges to 0 a.s. In fact due to the square operation, it converges to 0 faster than $e_{ij}$ for large $n,p$.
\end{itemize}
When we have large $n, p$ such that condition C2 implies the convergence result, we directly have $e_{ij} \xrightarrow[n,p\rightarrow \infty]{a.s.} 0$.
When we have moderate $n, p$,  we should expect the order term $m''(\eta_{ij}) g'(\mu_{ij}) e_{ij}^2$ to converge to 0 faster than $e_{ij}$, which leaves us with:
\begin{equation}
\hat{\eta}_{ij}= \eta_{ij} +  \frac{1}{m'(\eta_{ij})} e_{ij}
= \eta_{ij} + \frac{\sqrt{\mathcal{V} (\mu_{ij})}}{m'(\eta_{ij})} \epsilon_{ij}
\label{eq:rankresid}
\end{equation}
where $\Exp[\epsilon_{ij}] = 0$, $\Exp[\epsilon_{ij}^2] = 1$ and
$\Exp[\epsilon_{ij}^4] < \infty$ by condition~R2.

To adapt \citep{2010onrank}'s rank theorem for the case of moderate $n,p$, we
thus only need to justify the existence of $A$ and $B$ such that
$A_i \epsilon B_j = (\sqrt{\mathcal{V}(\mu_{ij})}/m'(\eta_{ij})) \epsilon_{ij}$
where $A, B$ and $\epsilon$ satisfy the setup and Assumption~2 of~\citep{2010onrank}.
Now observe that condition (i) in~\citep{2010onrank} is essentially our
condition~R1 with the definition of the Pearson residual $\epsilon_{ij}$;
conditions~(ii) and (iii) in~\citep{2010onrank} are extremely mild in the sense
that condition (ii) requires $B^\top B$ (as well as $AA^{\top}$) to be the
auto-covariance matrix of any covariance-stationary process with a bounded
spectral density; condition (iii) demands the prevention of a few linear
combinations of idiosyncratic terms to have unusually large variation.

The existence of $A,B, \epsilon$ satisfying the above assumption can then be
justified as follows. Let $h$ be a variance stabilizing link function, that is,
$h(\mu) \propto \int^\mu dt / \sqrt{\mathcal{V}(t)}$, e.g.
$h(\mu) = \arcsin\sqrt{\mu}$ for binomial data and
$h(\mu) = \sqrt{\mu}$ for Poisson data. If $h$ is the true link, then
$h(\mu_{ij}) = \eta_{ij}$ and
$m'(\eta_{ij}) = \sqrt{\mathcal{V}(\mu_{ij})}$, leaving $\epsilon_{ij} =
e_{ij}$ and thus the assumptions are fully satisfied with $A = I_n$ and
$B = I_p$.
Under a general link, similar to \citep{2010onrank}, we assume that the
residual covariance
$\tilde{e}_{ij} =  (\sqrt{\mathcal{V}(\mu_{ij})}/m'(\eta_{ij})) \epsilon_{ij}$
can be approximated with either $A$ or $B$ being diagonal and the other one
being unrestricted such that
\[
\Exp[\text{vec}\ \tilde{e} (\text{vec}\  \tilde{e})^{\top} ] =
\text{vec}\ \frac{\sqrt{V(\mu)}}{m'(\eta)}
\Big(\text{vec}\  \frac{\sqrt{V(\mu)}}{m'(\eta)}\Big)^{\top} \approx B^\top B \otimes AA^{\top}
\]
The quality of the approximation depends on how close the general link is to a
variance stabilizing link; schemes for this Kronecker approximation are
discussed in~\citep{van1993approximation}. For our rank determination
simulation described in Section \ref{sec:studies}, we empirically verified
that the application of eigengap theorem performs reasonably well.
\end{proof}

\fi

\bibliographystyle{abbrv} 
\bibliography{dmf}

\begin{thebibliography}{10}

\bibitem{2005blog}
L.~A. Adamic and N.~Glance.
\newblock The political blogosphere and the 2004 u.s. election: Divided they
  blog.
\newblock {\em Proceedings of the 3rd International Workshop on Link Discovery,
  Association for Computing Machinery}, pages 36--43, 2005.

\bibitem{2014gpca}
G.~I. Allen, L.~Grosenick, and J.~Taylor.
\newblock A generalized least-square matrix decomposition.
\newblock {\em Journal of the American Statistical Association},
  109(505):145--159, 2014.

\bibitem{1999convex}
D.~P. Bertsekas.
\newblock {\em Nonlinear Programming: 2nd Edition}.
\newblock Athena Scientific, 1999.

\bibitem{bishop2006pattern}
C.~M. Bishop and N.~M. Nasrabadi.
\newblock {\em Pattern recognition and machine learning}, volume~4.
\newblock Springer, 2006.

\bibitem{2004gene}
J.-P. Brunet, P.~Tamayo, T.~R. Golub, and J.~P. Mesirov.
\newblock Metagenes and molecular pattern discovery using matrix factorization.
\newblock {\em Proceedings of the National Academy of Sciences},
  101(12):4164--4169, 2004.

\bibitem{CHHH07}
D.~Cai, X.~He, Y.~Hu, J.~Han, and T.~Huang.
\newblock Learning a spatially smooth subspace for face recognition.
\newblock {\em Proc. IEEE Conf. Computer Vision and Pattern Recognition Machine
  Learning (CVPR'07)}, pages 1--7, 2007.

\bibitem{1983factorconsistency}
G.~Chamberlain and M.~Rothschild.
\newblock Factor structure, and mean-variance analysis on large asset markets.
\newblock {\em Econometrica}, 51:1281--1304, 1983.

\bibitem{1999glmconsistency}
K.~Chen, I.~Hu, and Z.~Ying.
\newblock Strong consistency of maximum quasi-likelihood estimators in
  generalized linear models with fixed and adaptive designs.
\newblock {\em The Annals of Statistics}, 27(4):1155--1163, 1999.

\bibitem{2001exppca}
M.~Collins, S.~Dasgupt, and R.~E. Schapire.
\newblock A generalization of principal component analysis to the exponential
  family.
\newblock {\em Proceedings of the 14th International Conference on Neural
  Information Processing Systems: Natural and Synthetic}, 13:23, 2001.

\bibitem{1990lsa}
S.~Deerwester, S.~T. Dumais, G.~W. Furnas, T.~K. Landauer, and R.~Harshman.
\newblock Indexing by latent semantic analysis.
\newblock {\em Journal of the American Society for Information Science},
  41(6):391--407, 1990.

\bibitem{eckart36}
C.~Eckart and G.~Young.
\newblock The approximation of one matrix by another of lower rank.
\newblock {\em Psychometrika}, 1(3):211--218, 1936.

\bibitem{2020frank}
J.~Fan, J.~Guo, and S.~Zheng.
\newblock Estimating number of factors by adjusted eigenvalues thresholding.
\newblock {\em Journal of the American Statistical Association}, pages 1--10,
  2020.

\bibitem{2002community}
M.~Girvan and M.~E.~J. Newman.
\newblock Community structure in social and biological networks.
\newblock {\em Proceedings of the National Academy of Sciences},
  99(12):7821--7826, 2002.

\bibitem{golub13}
G.~H. Golub and C.~F. Van~Loan.
\newblock {\em Matrix Computations}.
\newblock Johns Hopkins University Press, fourth edition, 2013.

\bibitem{1999gene}
T.~R. Golub, D.~K. Slonim, P.~Tamayo, C.~Huard, M.~Gaasenbeek, J.~P. Merisov,
  H.~Coller, M.~L. Loh, J.~R. Downing, M.~A. Caligiuri, C.~D. Bloomfield, and
  E.~S. Lander.
\newblock Molecular classification of cancer: class discovery and class
  prediction by gene expression monitoring.
\newblock {\em Science}, 286(5439):531--537, 1999.

\bibitem{2015hpf}
P.~Gopalan, J.~M. Hofman, and D.~M. Blei.
\newblock Scalable recommendation with hierarchical poisson factorization.
\newblock {\em Proceedings of the Thirty-First Conference on Uncertainty in
  Artificial Intelligence}, pages 326--335, 2015.

\bibitem{2009nlp}
Q.~Gu and J.~Zhou.
\newblock Local learning regularized nonnegative matrix factorization.
\newblock {\em Twenty-First International Joint Conference on Artificial
  Intelligence}, 2009.

\bibitem{guillaumin2009tagprop}
M.~Guillaumin, T.~Mensink, J.~Verbeek, and C.~Schmid.
\newblock Tagprop: Discriminative metric learning in nearest neighbor models
  for image auto-annotation.
\newblock In {\em 2009 IEEE 12th international conference on computer vision},
  pages 309--316. IEEE, 2009.

\bibitem{hand2001simple}
D.~J. Hand and R.~J. Till.
\newblock A simple generalisation of the area under the roc curve for multiple
  class classification problems.
\newblock {\em Machine learning}, 45(2):171--186, 2001.

\bibitem{hastie09}
T.~Hastie, R.~Tibshirani, and J.~Friedman.
\newblock {\em The Elements of Statistical Learning: Data Mining, Inference,
  and Prediction}, volume~27.
\newblock Springer, second edition, 2009.

\bibitem{he2019fast}
X.~He, J.~Tang, X.~Du, R.~Hong, T.~Ren, and T.-S. Chua.
\newblock Fast matrix factorization with nonuniform weights on missing data.
\newblock {\em IEEE transactions on neural networks and learning systems},
  31(8):2791--2804, 2019.

\bibitem{hoff2002latent}
P.~D. Hoff, A.~E. Raftery, and M.~S. Handcock.
\newblock Latent space approaches to social network analysis.
\newblock {\em Journal of the american Statistical association},
  97(460):1090--1098, 2002.

\bibitem{hosmer1980goodness}
D.~W. Hosmer and S.~Lemesbow.
\newblock Goodness of fit tests for the multiple logistic regression model.
\newblock {\em Communications in statistics-Theory and Methods},
  9(10):1043--1069, 1980.

\bibitem{2014glove}
R.~S. Jeffrey~Pennington and C.~Manning.
\newblock Glove: Global vectors for word representation.
\newblock {\em Proceedings of the 2014 Conference on Empirical Methods in
  Natural Language Processing (EMNLP)}, pages 1532--1543, 2014.

\bibitem{2009pcaconsis}
I.~M. Johnstone and A.~Y. Lu.
\newblock On consistency and sparsity for principal components analysis in high
  dimensions.
\newblock {\em Journal of the American Statistical Association},
  104(486):682--693, 2009.

\bibitem{kalayeh2014nmf}
M.~M. Kalayeh, H.~Idrees, and M.~Shah.
\newblock Nmf-knn: Image annotation using weighted multi-view non-negative
  matrix factorization.
\newblock In {\em Proceedings of the IEEE conference on computer vision and
  pattern recognition}, pages 184--191, 2014.

\bibitem{2010nmfrank}
B.~Kanagal and V.~Sindhwani.
\newblock Rank selection in low-rank matrix approximations: A study of
  cross-validation for nmfs.
\newblock {\em Proc Conf Adv Neural Inf Process}, 1:10--15, 2010.

\bibitem{1979lmconsistency}
T.~L. Lai, H.~Robbbins, and C.~Z. Wei.
\newblock Strong consistency of least squares estimates in multiple regression
  ii.
\newblock {\em Journal of multivariate analysis}, 9(3):343--361, 1979.

\bibitem{1995newsgroup}
K.~Lang.
\newblock Newsweeder: Learning to filter netnews.
\newblock pages 331--339, 1995.

\bibitem{1999nmf}
D.~D. Lee and S.~H. Choi.
\newblock Learning the parts of objects by nonnegative matrix factorization.
\newblock {\em Nature}, 401, 1999.

\bibitem{levy2014neural}
O.~Levy and Y.~Goldberg.
\newblock Neural word embedding as implicit matrix factorization.
\newblock {\em Advances in neural information processing systems}, 27, 2014.

\bibitem{li2010simple}
J.~Li and D.~Tao.
\newblock Simple exponential family pca.
\newblock In {\em Proceedings of the Thirteenth International Conference on
  Artificial Intelligence and Statistics}, pages 453--460. JMLR Workshop and
  Conference Proceedings, 2010.

\bibitem{li2013exponential}
J.~Li and D.~Tao.
\newblock Exponential family factors for bayesian factor analysis.
\newblock {\em IEEE transactions on neural networks and learning systems},
  24(6):964--976, 2013.

\bibitem{1996hal}
K.~Lund and C.~Burgess.
\newblock Producing high-dimensional semantic spaces from lexical
  co-occurrence.
\newblock {\em Behavior Research Methods, Instruments, \& Computers},
  28(2):203--208, 1996.

\bibitem{mccullagh89}
P.~McCullagh and J.~A. Nelder.
\newblock {\em Generalized Linear Models}.
\newblock Chapman \& Hall, second edition, 1989.

\bibitem{mikolov2013efficient}
T.~Mikolov, K.~Chen, G.~Corrado, and J.~Dean.
\newblock Efficient estimation of word representations in vector space.
\newblock {\em arXiv preprint arXiv:1301.3781}, 2013.

\bibitem{2008bexppca}
S.~Mohamed, K.~Heller, and Z.~Ghahramani.
\newblock Bayesian exponential family pca.
\newblock {\em Proceedings of the 21st International Conference on Neural
  Information Processing Systems}, 21:1089--1096, 2008.

\bibitem{2010onrank}
A.~Onatski.
\newblock Determining the number of factors from empirical distribution of
  eigenvalues.
\newblock {\em Review of Economics and Statistics}, 92(4):1004--1016, 2010.

\bibitem{2009bicv}
A.~B. Owen and P.~O. Perry.
\newblock Bi-cross-validation of the svd and the nonnegative matrix
  factorization.
\newblock {\em Annals of Applied Statistics}, 3(2):564--594, 2009.

\bibitem{1996gquasi}
G.~Qian, G.~Gabor, and R.~P. Gupta.
\newblock Generalised linear model selection by the predictive least
  quasi-deviance criterion.
\newblock {\em Biometrika}, 83(1):41--54, 1996.

\bibitem{1993lmcv}
J.~Shao.
\newblock Linear model selection by cross-validation.
\newblock {\em Journal of the American Statistical Association},
  88(422):486--494, 1993.

\bibitem{stephens2000dealing}
M.~Stephens.
\newblock Dealing with label switching in mixture models.
\newblock {\em Journal of the Royal Statistical Society: Series B (Statistical
  Methodology)}, 62(4):795--809, 2000.

\bibitem{1997family}
W.~Stute.
\newblock Nonparametric model checks for regression.
\newblock {\em The Annals of Statistics}, 25:613--641, 1997.

\bibitem{stute2002model}
W.~Stute and L.-X. Zhu.
\newblock Model checks for generalized linear models.
\newblock {\em Scandinavian Journal of Statistics}, 29(3):535--545, 2002.

\bibitem{2020ghl}
N.~Surjanovic, R.~Lockhart, and T.~M. Loughinh.
\newblock A generalized hosmer-lemeshow goodness-of-fit test for a family of
  generalized linear models.
\newblock {\em ArXiv:2007.11049 [Stat]}, 2020.

\bibitem{2013frank}
M.~O. Ulfarsson and V.~Solo.
\newblock Tuning parameter selection for nonnegative matrix factorization.
\newblock {\em In Proceedings of the 2013 IEEE International Conference on
  Acoustics, Speech and Signal Processing}, pages 6590--6594, 2013.

\bibitem{van1993approximation}
C.~F. Van~Loan and N.~Pitsianis.
\newblock Approximation with kronecker products.
\newblock In {\em Linear algebra for large scale and real-time applications},
  pages 293--314. Springer, 1993.

\bibitem{2020logf}
Y.~Wang, X.~Bi, and A.~Qu.
\newblock A logistic factorization model for recommender systems with
  multinomial responses.
\newblock {\em Journal of Computational and Graphical Statistics}, 29:396--404,
  2020.

\bibitem{wang2021posterior}
Y.~Wang, D.~Blei, and J.~P. Cunningham.
\newblock Posterior collapse and latent variable non-identifiability.
\newblock {\em Advances in Neural Information Processing Systems},
  34:5443--5455, 2021.

\bibitem{wedderburn1974quasi}
R.~W. Wedderburn.
\newblock Quasi-likelihood functions, generalized linear models, and the
  gauss—newton method.
\newblock {\em Biometrika}, 61(3):439--447, 1974.

\bibitem{wedel2001factor}
M.~Wedel and W.~A. Kamakura.
\newblock Factor analysis with (mixed) observed and latent variables in the
  exponential family.
\newblock {\em Psychometrika}, 66(4):515--530, 2001.

\bibitem{young2007random}
S.~J. Young and E.~R. Scheinerman.
\newblock Random dot product graph models for social networks.
\newblock In {\em Algorithms and Models for the Web-Graph: 5th International
  Workshop, WAW 2007, San Diego, CA, USA, December 11-12, 2007. Proceedings 5},
  pages 138--149. Springer, 2007.

\bibitem{zhang2009weighted}
Y.~Zhang and J.~Guo.
\newblock Weighted fisher non-negative matrix factorization for face
  recognition.
\newblock In {\em 2009 Second International Symposium on Knowledge Acquisition
  and Modeling}, volume~1, pages 232--235. IEEE, 2009.

\bibitem{2009gene}
C.-H. Zheng, D.-S. Huang, L.~Zhang, and X.-Z. Kong.
\newblock Tumor clustering using nonnegative matrix factorization with gene
  selection.
\newblock {\em IEEE Transactions on Information Technology in Biomedicine},
  13(4):599--607, 2009.

\end{thebibliography}
\end{document}